\DeclareMathAlphabet{\mathpzc}{OT1}{pzc}{m}{it}
\Crefname{algorithm}{Algorithm}{Algorithm}
\newtheorem{remark}{Remark}
\crefname{remark}{remark}{remarks}
\Crefname{Remark}{Remark}{Remarks}
\newtheorem{theorem}{Theorem}
\crefname{theorem}{theorem}{theorems}
\Crefname{Theorem}{Theorem}{Theorems}
\newtheorem{proposition}{Proposition}
\crefname{proposition}{proposition}{propositions}
\Crefname{Proposition}{Proposition}{Propositions}
\newtheorem{corollary}{Corollary}
\crefname{corollary}{corollary}{corollaries}
\Crefname{Corollary}{Corollary}{Corollaries}
\crefname{example}{example}{examples}
\Crefname{Example}{Example}{Examples}
\crefname{figure}{figure}{figures}
\Crefname{Figure}{Figure}{Figures}
\newtheorem{assumption}{\textbf{H}\hspace{-3pt}}
\Crefname{assumption}{\textbf{H}\hspace{-3pt}}{\textbf{H}\hspace{-3pt}}
\crefname{assumption}{\textbf{H}}{\textbf{H}}
\Crefname{assumptionAO}{\textbf{AO}\hspace{-3pt}}{\textbf{AO}\hspace{-3pt}}
\crefname{assumptionAO}{\textbf{AO}}{\textbf{AO}}
\Crefname{assumptionL}{\textbf{L}\hspace{-3pt}}{\textbf{L}\hspace{-3pt}}
\crefname{assumptionL}{\textbf{L}}{\textbf{L}}
\Crefname{assumptionA}{\textbf{A}\hspace{-3pt}}{\textbf{A}\hspace{-3pt}}
\crefname{assumptionA}{\textbf{A}}{\textbf{A}}
\Crefname{assumptionG}{\textbf{G}\hspace{-3pt}}{\textbf{G}\hspace{-3pt}}
\crefname{assumptionG}{\textbf{G}}{\textbf{G}}
\Crefname{assumptionAp}{\textbf{A'}\hspace{-3pt}}{\textbf{A'}\hspace{-3pt}}
\crefname{assumptionAp}{\textbf{A'}}{\textbf{A'}}
\newenvironment{proof}{\paragraph{Proof}}{\hfill$\square$}
\newcommand*{\addFileDependency}[1]{% argument=file name and extension
	\typeout{(#1)}% latexmk will find this if $recorder=0 (however, in that case, it will ignore #1 if it is a .aux or .pdf file etc and it exists! if it doesn't exist, it will appear in the list of dependents regardless)
	\@addtofilelist{#1}% if you want it to appear in \listfiles, not really necessary and latexmk doesn't use this
	\IfFileExists{#1}{}{\typeout{No file #1.}}% latexmk will find this message if #1 doesn't exist (yet)
}
\newcommand{\N}{\mathbb{N}}
\newcommand{\E}{\mathbb{E}}
\newcommand{\cL}{\mathcal{L}}
\newcommand{\Xbar}{\overline{X}}
\newcommand{\Vbar}{\overline{V}}
\newcommand{\R}{\mathbb{R}}
\newcommand{\vf}{\Phi}
\newcommand{\pot}{\psi}
\newcommand{\Exp}{\mathrm{Exp}}
\newcommand{\sign}{\textnormal{sign}}
\newcommand{\dd}{\mathrm{d}}
\def\mss{\mathsf{S}}
\def\mse{\mathsf{E}}
\def\msv{\mathsf{V}}
\def\rset{\mathbb{R}}
\def\nset{\mathbb{N}}
\def\rmd{\mathrm{d}}
\def\rmZ{\mathrm{Z}}
\def\rmB{\mathrm{B}}
\def\rmH{\mathrm{H}}
\def\rme{\mathrm{e}}
\def\rmC{\mathrm{C}}
\def\rmE{\mathrm{E}}
\newcommandx{\functionspace}[2][1=+]{\mathbb{F}_{#1}(#2)}
\newcommand{\argmin}{\operatorname*{arg\,min}}
\newcommandx{\VarDeux}[3][3=]{\operatorname{Var}^{#3}_{#1}\left\{#2 \right\}}
\newcommand{\1}{\mathbbm{1}}
\newcommand{\LeftEqNo}{\let\veqno\@@leqno}
\newcommand{\PE}{\mathbb{E}}
\newcommand{\PP}{\mathbb{P}}
\newcommand{\TV}{\mathrm{TV}}
\newcommandx{\Vnorm}[2][1=V]{\| #2 \|_{#1}}
\newcommandx{\VnormEq}[2][1=V]{\left\| #2 \right\|_{#1}}
\newcommandx{\norm}[2][1=]{\ifthenelse{\equal{#1}{}}{\left\Vert #2 \right\Vert}{\left\Vert #2 \right\Vert^{#1}}}
\newcommandx{\normLigne}[2][1=]{\ifthenelse{\equal{#1}{}}{\Vert #2 \Vert}{\Vert #2\Vert^{#1}}}
\newcommandx\probaMarkovTilde[2][2=]
\newcommand{\plusinfty}{+\infty}
	\def\ie{\textit{i.e.}}
	\def\eqsp{\;}
	\newcommand{\coint}[1]{\left[#1\right)}
	\newcommand{\ocint}[1]{\left(#1\right]}
	\newcommand{\ccint}[1]{\left[#1\right]}
	\newcommandx{\weight}[2][2=n]{\omega_{#1,#2}^N}
	\def\TV{\mathrm{TV}}
	\newcommandx\sequence[3][2=,3=]
	\newcommandx\sequenceD[3][2=,3=]
	\newcommandx{\sequencen}[2][2=n\in\N]{\ensuremath{\{ #1_n, \eqsp #2 \}}}
	\newcommandx\sequenceDouble[4][3=,4=]
	\newcommandx{\sequencenDouble}[3][3=n\in\N]{\ensuremath{\{ (#1_{n},#2_{n}), \eqsp #3 \}}}
	\def\iid{\text{i.i.d.}}
	\newcommand{\opnorm}[1]{{\left\vert\kern-0.25ex\left\vert\kern-0.25ex\left\vert #1 
			\right\vert\kern-0.25ex\right\vert\kern-0.25ex\right\vert}}
	\newcommandx{\CPE}[3][1=]{{\mathbb E}_{#1}\left[\left. #2 \, \middle \vert \, #3 \right. \right]} %%%% esperance conditionnelle
	\newcommandx{\CPELigne}[3][1=]{{\mathbb E}_{#1}[\left. #2 \,  \vert \, #3 \right. ]} %%%% esperance conditionnelle
	\newcommandx{\CPVar}[3][1=]{\mathrm{Var}^{#3}_{#1}\left\{ #2 \right\}}
	\newcommand{\CPP}[3][]
	{\ifthenelse{\equal{#1}{}}{{\mathbb P}\left(\left. #2 \, \right| #3 \right)}{{\mathbb P}_{#1}\left(\left. #2 \, \right | #3 \right)}}
	\def\scrR{\mathscr{R}}
	\newcommandx{\osc}[2][1=]{\mathrm{osc}_{#1}(#2)}
	\def\transpose{\operatorname{T}}
	\def\sphere{\mss}
	\def\sign{\operatorname{sign}}
	\newcommand\coupling[2]{\Gamma(\mu,\nu)}
	\renewcommand{\geq}{\geqslant}
	\renewcommand{\leq}{\leqslant}
	\def\Leb{\mathrm{Leb}}
	\def\rmT{\mathrm{T}}
	\def\Idd{\mathrm{I}_d}
	\def\rmV{\mathrm{V}}
	\def\bfo{\mathbf{o}}
	\newcommandx{\Voi}[1][1=i]{\mathfrak{V}_{\bfo,#1}}
	\newcommandx{\Vlyapc}[2][1=\bfo,2=i]{\mathfrak{V}_{#1,#2}}
	\def\Wlyap{\mathfrak{W}}
	\def\bfomega{\boldsymbol{\omega}}
	\newcommandx{\Woi}[1][1=i]{\Wlyap_{\bfomega,#1}}
	\newcommandx{\Wlyapc}[2][1=\bfomega,2=i]{\Wlyap_{#1,#2}}
	\renewcommand{\iint}[2]{\{#1,\ldots,#2\}}
	\newcommand{\mustar}{\mu_{\star}}
	\def\scrL{\mathscr{L}}
	\def\rmE{\mathrm{E}}
	\newcommand{\mR}{\mathbb{R}}
	\newcommand{\mE}{\mathbb{E}}
	\newcommand{\eqd}{\,{\buildrel d \over =}\,}
	\def\horizon{\rmT_{f}}
	\def\Zrm{\mathrm{Z}}
	\def\Tf{\rmT_f}
	\def\gfun{\mathbf{G}}
	\def\JERM{{\ell}_{\mathrm{E}}}
	\def\JIRM{{\ell}_{\mathrm{I}}}
	\def\JIRMEmp{{\hat \ell}_{\mathrm{I}}}
	\def\JIRMEmpFunc{{\hat L}_{\mathrm{I}}}
	\def\JIRMEmpFuncSimple{{\hat L^{\text{Simple}}}_{\mathrm{I}}}
	\def\unif{\mathrm{Unif}}
	\def\dom{\mathrm{dom}}
	\def\muref{\mu_{\mathrm{ref}}}
	\newcommand{\tcr}[1]{{\textcolor{red}{#1}}}
	\newcommand{\process}[1]{\{#1_t\}_{0 \leq t \leq \Tf}}
	\def\iX{\overleftarrow{X}}
	\def\iXArg{\overleftarrow{X}^{\theta}}
\title{Piecewise deterministic generative models}
\author{%
  Andrea Bertazzi$^{1,*}$, Dario Shariatian$^{2}$ \\
  \textbf{Umut Simsekli$^{2}$, Eric Moulines$^{1,3}$, Alain Durmus$^{1}$} \\
  $^1$ \'Ecole Polytechnique, Institut Polytechnique de Paris \\
  $^2$ INRIA, CNRS, Ecole Normale Supérieure, PSL Research University \\
  $^3$ MBZUAI\\
  $^*$ \texttt{andrea.bertazzi@polytechnique.edu}
  % \texttt{andrea.bertazzi@polytechnique.edu, alain.durmus@polytechnique.edu,} \\
  % \texttt{dario.shariatian@inria.fr, umut.simsekli@inria.fr, eric.moulines@polytechnique.edu} \\
}
\begin{document}

\maketitle

\begin{abstract}
  We introduce a novel class of generative models based on piecewise deterministic Markov processes (PDMPs), a family of non-diffusive stochastic processes consisting of deterministic motion and random jumps at random times. Similarly to diffusions, such Markov processes admit time reversals that turn out to be PDMPs as well. We apply this observation to three PDMPs considered in the literature: the Zig-Zag process, Bouncy Particle Sampler, and Randomised Hamiltonian Monte Carlo. For these three particular instances, we show that the jump rates and kernels of the corresponding time reversals admit explicit expressions depending on some conditional densities of the PDMP under consideration before and after a jump.  Based on these results, we propose efficient training procedures to learn these characteristics and consider methods to approximately simulate the reverse process. Finally, we provide bounds in the total variation distance between the data distribution and the resulting distribution of our model in the case where the base distribution is the standard $d$-dimensional Gaussian distribution. Promising numerical simulations support further investigations into this class of models.
\end{abstract}

\section{Introduction}
Diffusion-based generative models \citep{ho2020denoising,song2021scorebased} have recently achieved state-of-the-art performance in various fields of application \citep{dhariwal2021diffusion,croitoru2023diffusion,jeong21_interspeech,kong2021diffwave}. In their continuous time interpretation \citep{song2021scorebased}, these models leverage the idea that a diffusion process can bridge the data distribution $\mustar$ to a base distribution $\pi$, and its time reversal can transform samples from $\pi$ into synthetic data from $\mustar$.  \citet{ANDERSON1982} showed that the time reversal of a diffusion process, i.e., the \emph{backward} process, is itself a diffusion with explicit drift and covariance functions that are related to the score functions of the time-marginal densities of the original, \emph{forward} diffusion. Consequently, the key element of these generative models is learning these score functions using techniques such as (denoising) score-matching \citep{hyvarinen05a,vincent2011connection}.

% In this work, we explore the potential of using piecewise deterministic Markov processes (PDMPs) as noising processes instead of diffusions.
In this work we propose a new family of generative models which use piecewise deterministic Markov processes (PDMPs) as noising processes instead of diffusions.
PDMPs were introduced around forty years ago \citep{Davis1984,Davis1993} and since then have been successfully applied in various fields, including communication networks \citep{Dumas_Guillemin_Robert_2002}, biology \citep{berg1972chemotaxis,Cloez_etal}, risk theory \citep{embrechts_schmidli_1994}, and the reliability of complex systems \citep{pdmp_dynamicreliab}. More recently, PDMPs have been intensively studied in the context of Monte Carlo algorithms \citep{fearnhead2018} as alternatives to Langevin diffusion-based methods and Metropolis-Hastings mechanisms. This renewed interest in PDMPs has led to the development of novel processes, such as the Zig-Zag process (ZZP) \citep{ZZ}, the Bouncy Particle Sampler (BPS) \citep{BPS}, and the Randomised Hamiltonian Monte Carlo (RHMC) \citep{bou2017randomized}. PDMPs offer several advantages compared to Langevin-based methods, such as better scalability and reduced computational complexity in high-dimensional settings \citep{ZZ}. 
In the context of generative modelling PDMPs offer several potential advantages over diffusion processes. A key strength is their ability to effectively model data distributions supported on constrained or restricted domains. By adjusting their deterministic dynamics, PDMPs can easily incorporate boundary behaviour, making them straightforward to implement in such settings \citep{pdmp_restricted_domains,Davis1993}. Similarly, PDMPs can model data on Riemannian manifolds by employing flows that respect the manifold's geometry (see, e.g., \citet{yang2022stereographic} for a PDMP on the sphere). Moreover, PDMPs are well-suited for modelling data distributions that combine a continuous density and a positive mass on a lower dimensional manifold \citep{sticky_pdmp}.

% In this paper, we propose a new family of generative models based on PDMPs.
Our contributions are the following:
\begin{enumerate}[wide, labelwidth=!, labelindent=0pt, label=\arabic*)]
    \item Leveraging the existing literature on time reversals of Markov jump processes \citep{Conforti_reversal_jumps}, we characterise the time reversal of any PDMP under appropriate conditions. It turns out that this time reversal is itself a PDMP with characteristics related to the original PDMP; see \Cref{propo:time_reversal_formula}.
    \item 
    We further specify the characteristics of the time-reversal processes associated with the three aforementioned PDMPs: ZZP, BPS, and RHMC. For these processes, \Cref{prop:reversal_characteristics} shows the corresponding time-reversals are PDMPs with simple reversed deterministic motion and with jump rates and kernels that depend on (ratios of) conditional densities of the velocity of the forward process before and after a jump. In contrast to common diffusion models, the emphasis is on distributions of the velocity, similar to the case of the underdamped Langevin diffusion \citep{dockhorn2022scorebased}, which includes an additional velocity vector akin to the PDMPs we consider. Moreover, the structure of the backward jump rates and kernels closely connects to the case of continuous time jump processes on discrete state spaces \citep{sun2023scorebased,lou2024discrete}.
    \item We define our \emph{piecewise deterministic generative models} employing either ZZP, BPS, or RHMC as forward process, transforming data points to a noise distribution of choice, and develop methodologies to estimate the backward rates and kernels. Then, we define the corresponding backward process based on approximations of the time reversed ZZP, BPS, and RHMC obtained with the estimated rates and kernels.  In \Cref{sec:numerical_simulations} we test our models on simple toy distributions.
    \item We obtain a bound for the total variation distance between the data distribution and the distribution of our generative models taking into account two sources of error: first, the approximation of the characteristics of the backward PDMP, and second, its initialisation from the limiting distribution of the forward process; see \Cref{thm:errorbounds_ctstime}.
\end{enumerate}

%All three of these stochastic processes are composed of a position vector, $X_t$, and a velocity vector, $V_t$, and have dynamics given by a combination of deterministic motion with random jumps at random times. Importantly, the data occupy the position part and are only indirectly affected by the randomness injected at event times, since the jumps concern the velocity vector only.
% \alain{to conclude}

\section{PDMP based generative models}

\subsection{Piecewise deterministic Markov processes}\label{sec:pdmp_definitions}
Informally, a PDMP \citep{Davis1984,Davis1993} on the measurable space $(\rset^D,\mathcal{B}(\rset^D))$ is a stochastic process that follows deterministic dynamics between random times, while at these times the process can evolve stochastically on the basis of a Markov kernel. In order to define a PDMP precisely, we need three components, called \emph{characteristics} of the PDMP: a \emph{vector field} $\vf: \rset_+ \times \rset^D\to \rset^D$, which governs the deterministic motion, a \emph{jump rate} $\lambda: \rset_+ \times \rset^D\to \mathbb{R}_+$, which defines the law of random event times, and finally a \emph{jump kernel} $Q:\rset_+ \times \rset^D\times \mathcal{B}(\rset^D)\to[0,1]$, which is applied at event times and defines the new state of the process. 
Let us give an informal description of the evolution of a PDMP $Z_t$, clarifying the role of the three characteristics. 
Suppose at time $\rmT\in\R_+$ the PDMP is at state $z\in\R^D$, that is $Z_{\rmT} = z$. The deterministic motion of the PDMP is described by the ODE $\dd Z_{\rmT+s} = \vf(\rmT+s,Z_{\rmT+s}) \dd s$ for $s\geq 0$, with initial condition $Z_{\rmT} = z$. 
We introduce the differential flow $\varphi:(t,s,z)\mapsto \varphi_{t,t+s}(z)$, which solves the ODE in the sense that $\dd \varphi_{t,t+s}(z) = \vf(\rmT+s,\varphi_{t,t+s}(z)) \dd s$ for $s\geq 0$.
%$Z_{\rmT+s}=\varphi_{\rmT,\rmT+s}(Z_{\rmT})$. 
The process evolves deterministically according to $\varphi$ until the next event time $\rmT + \tau$, where $\tau$ is a random variable with law $\mathbb{P}(\tau > s\rvert Z_{\rmT} = z) = \exp(-\int_0^s \lambda(\varphi_{\rmT,\rmT+u}(z))\dd u )$, i.e. the exponential distribution with non-homogeneous rate $s \mapsto \lambda(\varphi_{\rmT,\rmT+s}(z))$. We can, at least in principle, simulate $\tau$ by solving
\begin{equation}\label{eq:event_time}
    \tau = \inf\left\{t>0: \int_0^t \lambda(\rmT + u,\varphi_{\rmT,\rmT+u}(z)) \dd u \geq E \right\}
\end{equation}
where $E \sim \text{Exp}(1)$. The process is then defined on $\coint{\rmT,\rmT+\tau}$ by $Z_{\rmT+t} = \varphi_{\rmT,\rmT+t}(Z_{\rmT})$ for $t\in[0,\tau)$. At time $\rmT+\tau$ the process jumps to a new state that is drawn from the Markov kernel $Q$, hence we set $Z_{\rmT+\tau} \sim Q(\rmT + \tau, \varphi_{\rmT,\rmT+\tau}(z),\cdot\,)$. A realisation of the path of a PDMP for a given time horizon can then be obtained following this procedure (see also Algorithm~\ref{alg:pdmp} in \Cref{app:simulation_forward} for a pseudo-code). The formal construction of a PDMP can be found in \Cref{sec:pdmp_construction}. 

Typically a PDMP has several types of jumps, which can be represented by a family of jump rates and kernels $(\lambda_i,Q_i)_{i\in\iint{1}{\ell}}$. A PDMP of such type can be obtained with the construction we have described by setting
\begin{equation}\label{eq:def_gene_pdmp}
	\lambda(t,z) = \sum_{i=1}^{\ell} \lambda_i(t,z)\eqsp , \quad Q(t,z,\dd z') = \sum_{i=1}^{\ell} \frac{\lambda_i(t,z)}{\lambda(t,z)} Q_i(t,z,\dd z') \eqsp .
\end{equation}
An alternative, equivalent construction of a PDMP with $\lambda,Q$ satisfying \eqref{eq:def_gene_pdmp} is given in \Cref{sec:pdmp_many jumps_construction}.
Finally, we say a PDMP is homogeneous (as opposed to the non-homogeneous case we have described) when the characteristics do not depend on time, that is $\vf: \rset^D\to \rset^D$, $\lambda: \rset^D\to \mathbb{R}_+$, and $Q:\rset^D\times \mathcal{B}(\rset^D)\to[0,1]$. 
In all this work, we suppose that the PDMPs that we consider  are non-explosive in the sense of \citet{Davis1993}, that is they are such that the time of the $n$-th random event goes to $\plusinfty$ as $n \to \plusinfty$, almost surely (see \citet{pdmp_inv_meas} for conditions ensuring this).

% In this case a PDMP with characteristics $(\vf,\lambda,Q)$ moves deterministically according to the flow map $\varphi:(t,z)\mapsto \varphi_{t}(z)$, which solves the ODE $\dd z_t = \vf(z_t) \dd t$ with $z_0=z$, for a random time $\tau$ which has law $\mathbb{P}_z(\tau > t) = \exp(-\int_0^t \lambda(\varphi_u(z)) \dd u)$. At time $\tau$ the state is set to $Z_\tau \sim Q(\varphi_\tau(z),\cdot).$ The same construction is then applied between any two jump times.

% \alaini{more general comments:}
  % \begin{enumerate}
  % \item for the random times, introduce exponential rv and define as inf
  % \item $Z_{\tau-}$ not defined; use $\vf_{\rmT_n + \tau}(Z_{\rmT_n})$
  %   \item $\varphi$ not defined
  % \item Give refs for the fact that it is Markov, well-defined for $t \in \rset_+$, distinguish homogeneous/non-homog  etc... 
  % \item In the introduction of homo/non-homog, define homogeneous vector fields, jump rates, kernels
%   \end{enumerate}
% \alaini{end}
% \alaini{mention that we can define a PDMP based a family of jump rates and kernels but the definition are equivalent setting .... by .... The latter construction being posteponed to the supplement}
% \begin{equation}
%   \label{eq:def_gene_pdmp}
% 	\lambda(z) = \sum_{i=1}^{\ell} \lambda^i(z)\eqsp , \quad Q(z,\dd z') = \sum_{i=1}^{\ell} \frac{\lambda^i(z)}{\lambda(z)} Q^i(z,\dd z') \eqsp .
%       \end{equation}
%       \alain{use this formula in the comment and make it non-homogeneous}
% \alaini{end}

We now introduce the three PDMPs we consider throughout the paper. All these PDMPs are time-homogeneous and live on a state space of the form $\mse  = \rset^d\times \msv$, for $\msv \subset \rset^d$, assuming $V_0 \in \msv$. Then, $Z_t$ can be decomposed as $Z_t=(X_t,V_t),$ where $X_t \in\rset^d$ is the component of interest and has the interpretation of the position of a particle, whereas $V_t\in\msv$ is an auxiliary vector playing the role of the particle's velocity. In the sequel, if there is no risk of confusion, we take the convention that any $z\in \rset^d\times \msv$, and we write $z=(x,v)$ for $x\in \R^d$ and  $v\in \msv$.
All the PDMPs below have a stationary distribution of the form $\pi(\dd x) \otimes \nu(\dd v)$, where $\pi$ has density proportional to $x \mapsto \rme^{-\pot(x)}$, for $\pot : \rset^d\to \rset$ a continuously differential potential, and  $\nu$ is a simple distribution on $\msv$ for the velocity vector.
In our experiments we take $\pi$ to be the standard normal distribution, while $\nu$ is the standard normal when $\msv = \rset^d$ or the uniform distribution when $\msv$ is a compact set. \Cref{fig:traceplots} shows  sample paths for the position vector of the three PDMPs we introduce below.
\vspace{5pt}
\begin{figure}[h]
    \centering
    \begin{minipage}[b]{0.3\textwidth}
        \centering
        \includegraphics[width=\textwidth]{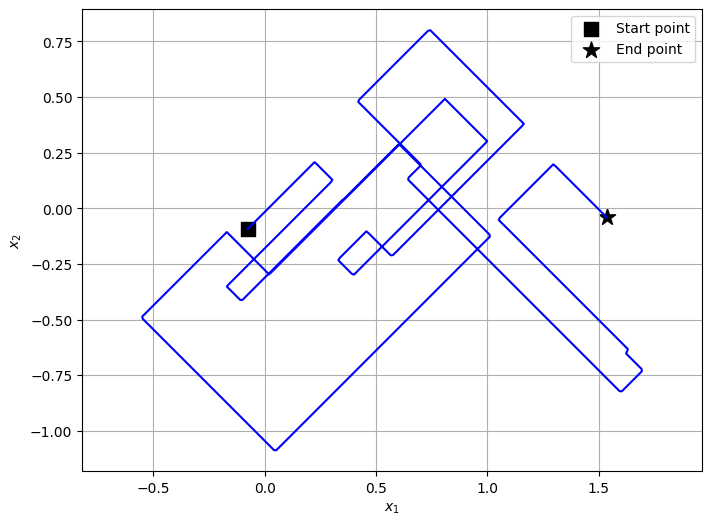}
        % \caption{ZZP}
        \label{fig:ZigZag_10}
    \end{minipage}
    \hfill
    \begin{minipage}[b]{0.3\textwidth}
        \centering
        \includegraphics[width=\textwidth]{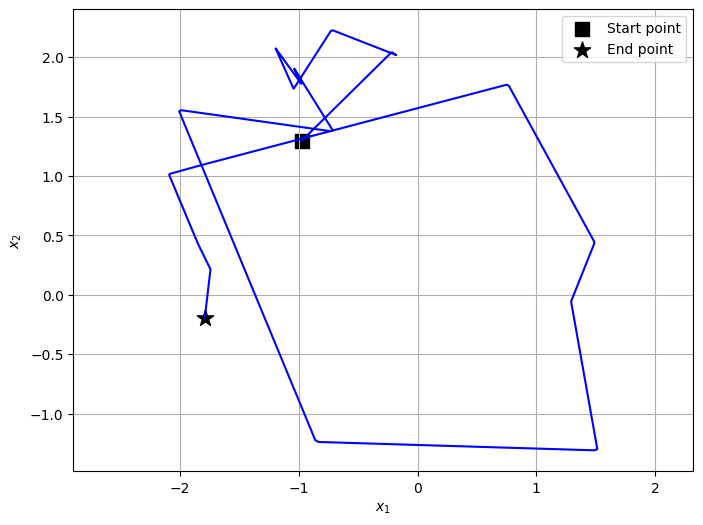}
        % \caption{BPS}
        \label{fig:BPS_10}
    \end{minipage}
    \hfill
    \begin{minipage}[b]{0.3\textwidth}
        \centering
        \includegraphics[width=\textwidth]{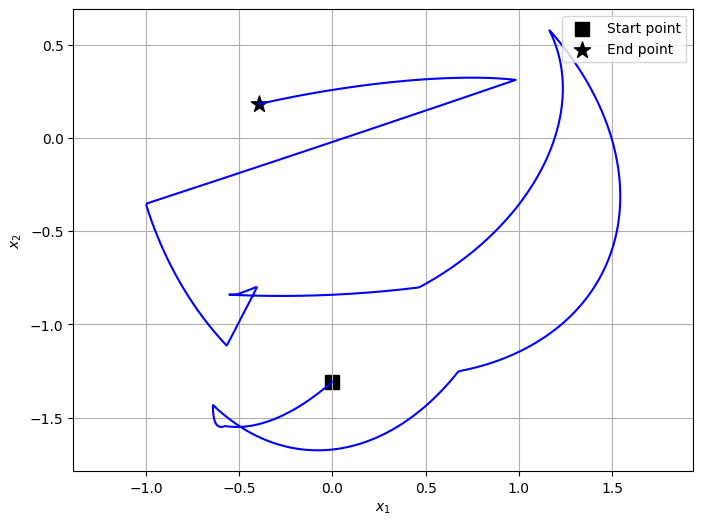}
        % \caption{RHMC}
        \label{fig:HMC_10}
    \end{minipage}
    \vspace{-10pt}
    \caption{Trace plots for ZZP (left), BPS (centre), RHMC (right). In all cases $\lambda_r=1$ and $\Tf = 10$.}
    \label{fig:traceplots}
\end{figure}

\paragraph{The Zig-Zag process} The Zig-Zag process (ZZP) \citep{ZZ} is a PDMP with state space $\mse^{\rmZ}=\R^d\times\{-1,1\}^d$. The deterministic motion is determined by the homogeneous vector field $\Phi^{\rmZ}(x,v)=(v,0)^{\transpose}$, i.e. the particle moves with constant velocity $v$.
For $i\in\iint{1}{d}$ we define the jump rates $\lambda^{\rmZ}_i(x,v):=(v_i\partial_i\pot(x))_+ +\lambda_r$, where $(a)_+=\max(0,a)$, $\partial_i$ denotes the $i$-th partial derivative, and $\lambda_r\geq 0$ is a user chosen refreshment rate.
The corresponding (deterministic) jump kernels are given by $Q^{\rmZ}_i((x,v),(\dd y,\dd w))=\updelta_{(x,\scrR_i^{\rmZ}v)}(\dd y, \dd w)$, where $\updelta_z$ denotes the Dirac measure at $z \in\mse$. Here, $\scrR^{\rmZ}_i$ is the operator that reverses the sign of the $i$-th component of the vector to which it is applied, i.e. $\scrR^{\rmZ}_iv=(v_1\ldots,v_{i-1},-v_i,v_{i+1},\ldots,v_d)$.
The ZZP falls within our definition of PDMP taking $\lambda,Q$ as in \eqref{eq:def_gene_pdmp}.
As shown in \citet{ZZ}, the ZZP has invariant distribution $\pi\otimes\nu$, where $\nu$ is the uniform distribution over $\{\pm 1\}^d.$ Moreover, \citet{Bierkensergodicity} shows that for any $\lambda_r\geq 0$ the law of the ZZP converges exponentially fast to its invariant distribution e.g. when $\pi$ is a standard normal distribution. 

\paragraph{The Bouncy Particle sampler} The Bouncy Particle sampler (BPS) \citep{BPS} is a PDMP with state space is $\mse^{\rmB}=\R^d\times \msv^{\rmB}$, where $\msv^{\rmB} = \rset^d$ or $\msv^{\rmB} = \sphere^{d-1} := \{ v \in \rset^d \,: \, \norm{v} = 1\}$. The deterministic motion is governed as ZZP by the homogeneous vector field defined for $z =(x,v) \in\mse$ by $\Phi^{\rmB}(x,v)=(v,0)^{\transpose}$. Now we introduce two jump rates which correspond to two types of random events: \emph{reflections} and \emph{refreshments}. Reflections enforce that $\mu(x,v)=\pi(x)\nu(v)$ is the invariant density of the process, where $\pi(\rmd x)\propto\exp(-\pot(x))\Leb(\rmd x)$ is a given distribution and $\nu$ is either a standard normal distribution when $\msv^{\rmB} = \rset^d$ or the uniform distribution on $\sphere^{d-1}$ when $\msv^{\rmB} = \sphere^{d-1}$. Reflections are associated to the homogeneous jump rate $(x,v) \mapsto \lambda^{\rmB}_1(x,v) =\langle v,\nabla\pot(x)\rangle_+$, while refreshments are associated to  $(x,v) \mapsto \lambda^{\rmB}_2(x,v)= \lambda_{r}$ for $\lambda_r>0$. The corresponding jump kernels are $Q^{\rmB}_1((x,v),(\dd y,\dd w)) = \updelta_{(x,\scrR^{\rmB}_xv)}(\dd y,\dd w)\eqsp, \quad Q^{\rmB}_2((x,v),(\dd y,\dd w)) =  \updelta_x(\dd y)\nu(\dd w), $ where $\scrR^{\rmB}_xv=v-2(\nicefrac{\langle v,\nabla\pot(x)\rangle}{\lvert\nabla\pot(x)\rvert^2} )\nabla\pot(x)\eqsp.$
% \begin{align}
% Q^{\rmB}_1((x,v),(\dd y,\dd w)) = \updelta_{(x,\scrR^{\rmB}_xv)}(\dd y,\dd w)\eqsp, \quad Q^{\rmB}_2((x,v),(\dd y,\dd w)) =  \updelta_x(\dd y)\nu(\dd w)\eqsp,    
% \end{align}
% where
% \begin{align}
%     \scrR^{\rmB}_xv=v-2\frac{\langle v,\nabla\pot(x)\rangle}{\lvert\nabla\pot(x)\rvert^2} \nabla\pot(x)\eqsp.
% \end{align}
The operator $\scrR^{\rmB}_x$ \emph{reflects} the velocity $v$ off the hyperplane that is tangent to the contour line of $\pot$ passing though point $x$. The norm of the velocity is unchanged by the application of $\scrR^{\rmB}$, and this gives the interpretation that $\scrR^{\rmB}$ is an elastic collision of the particle off such hyperplane. 
As observed in \cite{BPS}, BPS requires a strictly positive $\lambda_r$ to avoid being reducible, that is to make sure the process can reach any area of the state space. Exponential convergence of the BPS to its invariant distribution was shown by \cite{BPSexp_erg,BPS_Durmus}.

\paragraph{Randomised Hamiltonian Monte Carlo} Randomised Hamiltonian Monte Carlo (RHMC) \citep{bou2017randomized} refers to the PDMP with state space $\mse^{\rmH}=\R^d\times\R^d$ which is characterised by Hamiltonian deterministic flow and refreshments of the velocity vector from the standard normal distribution. The flow is governed by the homogeneous vector field defined by $(x,v) \mapsto \vf^{\rmH}(x,v)=(v,-\nabla \pot(x))^{\transpose}$, where $\pot$ is the potential of $\pi$. The jump rate coincides with the refreshment part of BPS, \ie, it is the constant function  $\lambda^{\rmH} : (x,v) \mapsto \lambda_r >0$ and jump kernel $Q^{\rmH}((x,v),(\dd y,\dd w)) =  \updelta_x(\dd y)\nu(\dd w).$ When the stationary distribution $\pi$ is a standard Gaussian, the deterministic dynamics $(x_t,v_t)_{t \geq 0}$ satisfy $\dd x_t = v_t \dd t$, $\dd v_t = -x_t \dd t$, which for $t \geq 0$ has solution $ x_t = x_{0}\cos(t) + v_{0}\sin(t)$ and $ v_t = -x_{0}\sin(t) + v_{0}\cos(t)$, where $(x_0,v_0)$ is the initial condition.
It is well known that Hamiltonian dynamics preserve the density $\mu(x,v) = \pi(x)\nu(v)$ \citep{Neal}, where $\nu$ is the standard normal distribution, while velocity refreshments are necessary to ensure the process is irreducible. Exponential convergence of the law of this PDMP to $\mu$ was shown in \cite{bou2017randomized}.

\begin{remark}[Noise schedule]
    Similarly to diffusion models we can introduce a noise schedule $\beta(t)$ that regulates the amount of randomness injected at time $t$. This can be achieved using the time change of a given PDMP with characteristics $(\vf,\lambda,Q)$ as forward process, resulting in the PDMP with characteristics $(\vf_{\beta}, \lambda_\beta,Q)$ for  $\vf_\beta(t,z) = \beta(t)\vf(t,z)$ and $\lambda_\beta(t,z) = \beta(t) \lambda(t,z)$.
    % Given a time-homogeneous PDMP with characteristics $(\vf,\lambda,Q)$ and a positive function $t\mapsto \beta(t)$ on $\mathbb{R}_+$, we can define the corresponding time changed, non-homogeneous PDMP with characteristics $(\vf_{\beta}, \lambda_\beta,Q)$ where $Q$ is unchanged, while the deterministic flow and jump rates are non-homogeneous and given by $\vf_\beta(t,z) = \beta(t)\vf(z)$ and $\lambda_\beta(t,z) = \beta(t) \lambda(z)$. The PDMP $(\vf_\beta,\lambda_\beta,Q)$ has the same stationary distribution of the PDMP $(\vf,\lambda,Q)$, where $\beta(t)$ plays the role of the noise schedule.
\end{remark}

\subsection{Time reversal of PDMPs}\label{sec:time_reversals_pdmps}

In this section we characterise the time reversal of a PDMP. This key result, stated in \Cref{propo:time_reversal_formula}, is essential to be able to use PDMPs for generative modelling. The \emph{time reversal} of a PDMP $(Z_t)_{t\in[0,\Tf]}$ with initial distribution $\mu_0$ is the process that at time $t\in[0,\Tf]$ has distribution $\mu_0 P_{\Tf-t},$ where $\mu_0 P_{s}$ denotes the law of $Z_s.$ It follows that the law of the time reversal at time $\Tf$ is $\mu_0$, which is the key observation in the context of generative modelling.
% $(\overleftarrow Z_t)_{t\in[0,T_f]}$ that satisfies $\cL(\overleftarrow Z_t) = \cL(Z_{\Tf-t}),$ where $\cL(Z_t)$ denotes the law of $Z_t.$
Characterisations of the law of time reversed Markov processes with jumps were obtained in \citet{Conforti_reversal_jumps} and in the following statement we adapt their Theorem 5.7 to our setting, showing that the time reversal of a PDMP with characteristics $(\vf,\lambda,Q)$ is a PDMP with reversed deterministic motion and jump rates and kernels satisfying \eqref{eq:flux_equation}.
\begin{proposition}
  \label{propo:time_reversal_formula}
    Consider a non-explosive PDMP $(Z_t)_{t\geq 0}$ with characteristics $(\vf,\lambda,Q)$ and initial distribution $\mu_0$ on $\rset^D$. In addition, let $\Tf$ be a time horizon. Suppose that $\vf$ is locally bounded, $(t,z)\mapsto \lambda(t,z)$ is continuous in both its variables,  and $\int_0^{\Tf} \PE[\lambda(t,Z_t)] \dd t<\infty$. Assume the technical conditions \Cref{ass:standard_conditions_davis}, \Cref{ass:generator_domain}, postponed to the appendix.
    Then, the corresponding time reversal process is a PDMP with characteristics $(\overleftarrow\vf,\overleftarrow\lambda,\overleftarrow Q)$, where $\overleftarrow\vf(t,z) = - \vf(\Tf-t,z)$ and $\overleftarrow\lambda,\overleftarrow Q$ are the unique solutions to the following balance equation: for almost all $t \in\ccint{0,\Tf}$,
    \begin{equation}\label{eq:flux_equation}
       \mu_0 P_{\Tf-t}(\dd y) \overleftarrow\lambda(t,y)\overleftarrow Q(t,y,\dd z) = \mu_0 P_{\Tf-t}(\dd z) \lambda(\Tf-t,z) Q(\Tf-t,z,\dd y)\eqsp,
      \end{equation}
      where $\mu_0 P_t$ stands for the distribution of $Z_t$ starting from $\mu_0$.
\end{proposition}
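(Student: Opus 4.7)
The plan is to apply Theorem 5.7 of \citet{Conforti_reversal_jumps} and then identify the characteristics of the reversed process from the structure this theorem produces. The hypotheses stated---local boundedness of $\vf$, joint continuity of $\lambda$, the integrability condition $\int_0^{\Tf}\E[\lambda(t,Z_t)]\dd t < \infty$, together with the technical conditions \Cref{ass:standard_conditions_davis} and \Cref{ass:generator_domain}---are precisely what is needed to ensure that the extended generator
\begin{equation*}
\cL_t f(z) = \vf(t,z) \cdot \nabla f(z) + \lambda(t,z)\int_{\rset^D}[f(y)-f(z)]\,Q(t,z,\dd y)
\end{equation*}
is well-defined on a rich class of test functions and that the associated forward martingale problem is well-posed, which is the input required by the Conforti--L\'eonard framework.

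The deterministic-motion part of the conclusion, $\overleftarrow\vf(t,z) = -\vf(\Tf-t,z)$, I would deduce by a short calculation on the Kolmogorov forward equation. Writing $p_t$ for the law of $Z_t$ and $q_t := p_{\Tf-t}$ for the time-marginals of the reversal, the identity $\partial_t q_t = -\cL^{\ast}_{\Tf-t} q_t$ must coincide with $\partial_t q_t = \overleftarrow{\cL}^{\ast}_t q_t$ if the reversal is itself a PDMP. Matching the divergence pieces that come from the drift terms forces the sign flip $\overleftarrow\vf(t,z) = -\vf(\Tf-t,z)$, consistent with the elementary fact that transporting mass along $\vf$ in forward time is indistinguishable from transporting it along $-\vf$ in reversed time.

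Once the drift is identified, matching the jump parts of $\overleftarrow{\cL}^{\ast}_t q_t = -\cL^{\ast}_{\Tf-t} q_t$ forces the balance equation \eqref{eq:flux_equation} for $(\overleftarrow\lambda,\overleftarrow Q)$, viewed as encoding a joint jump intensity measure $\overleftarrow\lambda(t,y)\overleftarrow Q(t,y,\dd z)$ on source--target pairs reweighted by the current marginal $\mu_0 P_{\Tf-t}$. Existence of such a pair is the content of the Conforti--L\'eonard theorem. Uniqueness is then a disintegration argument: the joint measure $\mu_0 P_{\Tf-t}(\dd y)\,\overleftarrow\lambda(t,y)\overleftarrow Q(t,y,\dd z)$ is fixed by the right-hand side of \eqref{eq:flux_equation}; the function $\overleftarrow\lambda(t,y)$ is recovered as the Radon--Nikodym derivative of its $y$-marginal with respect to $\mu_0 P_{\Tf-t}$, and $\overleftarrow Q(t,y,\cdot)$ as the associated normalised conditional kernel, both determined $\mu_0 P_{\Tf-t}$-almost everywhere in $y$ and, by the integrability hypothesis, almost everywhere in $t$.

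The principal obstacle is accommodating the absolutely continuous deterministic drift inside a framework that is primarily written for jump-type dynamics: in particular, for processes such as ZZP and BPS, the kernel $Q$ is a Dirac mass along a lower-dimensional submanifold, so the balance equation only makes sense as an equality of measures on $\rset^D\times\rset^D$ and the disintegration step must be handled with care. A cleaner route that I would also flag is to pass to coordinates co-moving with the flow $\varphi$, which absorbs the drift into the filtration and reduces the problem to a pure-jump process whose reversal falls directly under the scope of \citet{Conforti_reversal_jumps}; the reversed flow then re-emerges as $-\vf(\Tf-t,\cdot)$ when one pulls the statement back to the original coordinates, while the jump intensity on the transformed process inherits exactly the balance equation \eqref{eq:flux_equation}.
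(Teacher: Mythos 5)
Your proposal takes essentially the same route as the paper: the entire proof is an application of Theorem~5.7 of Conforti--L\'eonard, and the reversed characteristics (including the sign-flipped drift and the balance equation \eqref{eq:flux_equation}) are read off from that theorem's conclusion rather than re-derived. The paper does not perform the Kolmogorov-forward-equation matching or the change to co-moving coordinates that you sketch; those serve as motivation but are not needed once the theorem applies, since the cited result already covers jump processes with an absolutely continuous (drift) part.

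The one substantive difference is that the paper's proof consists almost entirely of the step you assert but do not carry out: checking that the stated hypotheses actually imply the integrability conditions required by the cited theorem. Concretely, one must verify that every $f\in\mathcal{C}^2_c(\rset^D)$ satisfies \eqref{eq:integrab_generator} and \eqref{eq:integrab_carreduchamp}, i.e.\ that $\int_0^{\Tf}\int \mu_0P_t(\dd z)\,\lvert \scrL_t f(z)\rvert\,\dd t<\infty$ and the analogous bound for the carr\'e du champ. This follows by splitting $\scrL_t f$ into its drift and jump parts: the drift term $\langle \vf(t,z),\nabla f(z)\rangle$ is compactly supported (hence integrable against $\mu_0 P_t$ on $[0,\Tf]$ since $\vf$ is locally bounded), while the jump term is bounded by $2\lVert f\rVert_\infty\,\lambda(t,z)$ and is therefore controlled by the hypothesis $\int_0^{\Tf}\PE[\lambda(t,Z_t)]\,\dd t<\infty$; the same hypothesis yields the condition $\int(1\wedge\lvert z-y\rvert^2)\,\mu_0P_t(\dd z)\lambda(t,z)Q(t,z,\dd y)<\infty$. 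Your uniqueness argument by disintegration of the joint jump measure is correct and is a point the paper leaves implicit, so that part is a modest addition; but as written, the proposal would need the verification above filled in before it constitutes a complete proof.
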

The proof is postponed to \Cref{sec:proof_prop_timereversal}. The condition  \Cref{ass:standard_conditions_davis} is standard in the literature on PDMPs \citep{Davis1993} and is verified for ZZP, BPS, and RHMC. \Cref{ass:generator_domain} is a technical assumption on the domain of the generator of the forward PDMP and has been shown to hold e.g. for the ZZP.
In the next proposition we derive expressions for the backward jump rate and kernel satisfying \eqref{eq:flux_equation} corresponding to a forward PDMP with characteristics with the same structure as those of ZZP, BPS, and RHMC. We state the result assuming the PDMP has only one jump type, but the generalisation to the case of $\ell >1$ jump mechanisms of the form \eqref{eq:def_gene_pdmp} can be immediately obtained applying \Cref{prop:reversal_characteristics} to each pair $(\lambda_i,Q_i)$ for $i \in\iint{1}{\ell}$. We refer to \Cref{sec:extension_reversal_characteristics} for the details.
\begin{proposition}\label{prop:reversal_characteristics}
    Consider a non-explosive PDMP $(X_t,V_t)_{t\geq 0}$ with characteristics $(\vf,\lambda,Q)$ and initial distribution $\mu_0^X \otimes \mu_0^V$ on $\rset^{2d}$. In addition, let $\Tf$ be a time horizon. Suppose that $\vf$ and $\lambda$ satisfy the same conditions as \Cref{propo:time_reversal_formula}, in particular the technical conditions \Cref{ass:standard_conditions_davis}, \Cref{ass:generator_domain} postponed to the appendix.
    Suppose in addition that for any $t \in \ocint{0,\horizon}$, the conditional distribution of $V_t$ given $X_t$ has a transition density $(x,v) \mapsto p_t(v|x)$ with respect to some reference measure $\muref^V$ on $\rset^{d}$.
    \begin{enumerate}[wide, labelwidth=!, labelindent=0pt, label=(\arabic*)]
        \item \emph{(Deterministic jumps)}. Suppose $Q((y,w),(\dd x,\dd v)) = \updelta_y(\dd x) \updelta_{\scrR_y w}(\dd v)$ where for any $y \in\rset^d$,  $\scrR_y:\rset^d\to\rset^d$ is an involution which preserves $\muref^V$, i.e., $\scrR_y^{-1}=\scrR_y$ and $\muref^V(\dd \scrR_y w) = \muref^V(\dd w) $. Then for almost all $t \in\ccint{0,\Tf}$ and any $(y,w)\in\rset^{2d}$ such that $p_{\rmT_f-t}(w\rvert y)>0$ it holds that
        \[\overleftarrow\lambda(t,(y,w)) = \frac{p_{\rmT_f-t}(\scrR_y w\rvert y)}{p_{\rmT_f-t}(w\rvert y)} \lambda(\Tf-t,(y,\scrR_y w))\eqsp, \, \overleftarrow Q((y,w),(\dd x,\dd v)) = \updelta_y(\dd x) \updelta_{\scrR_y w}(\dd v)\eqsp.\]
        \item \emph{(Refreshments).} Suppose $Q((y,w),(\dd x,\dd v)) = \updelta_y(\dd x) \nu(\dd v\rvert y)$, where $\nu$ is a transition kernel on $\rset^d \times \mathcal{B}(\rset^d)$,  and $\lambda(t,(y,w)) = \lambda(t,y)$. Suppose also for any $y \in\rset^d$, $\nu(\cdot \rvert y)$ is absolutely continuous with respect to $\muref^V$. Then for almost all $t \in\ccint{0,\Tf}$ and any $(y,w)\in\rset^{2d}$ such that $p_{\rmT_f-t}(w\rvert y)>0$ it holds that
        \[\overleftarrow\lambda(t,(y,w)) = \frac{(\nicefrac{\rmd \nu}{\rmd \muref^V})(w\rvert y)}{p_{\rmT_f-t}(w\rvert y)} \lambda(\rmT_f-t,y), \, \overleftarrow Q(t,(y,w),(\dd x,\dd v)) = \updelta_y(\dd x) p_{\rmT_f-t}(v \rvert x) \muref^{V}(\dd v).\]
    \end{enumerate}
\end{proposition}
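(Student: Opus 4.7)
By Proposition \ref{propo:time_reversal_formula}, the time reversal exists with reversed vector field $\overleftarrow\vf(t,z) = -\vf(\Tf-t,z)$, and the pair $(\overleftarrow\lambda, \overleftarrow Q)$ is uniquely determined by the balance equation \eqref{eq:flux_equation}. The task thus reduces to exhibiting explicit candidates in each structural case and verifying \eqref{eq:flux_equation}. I would begin by disintegrating the forward law at time $\Tf-t$ as
\[
\mu_0 P_{\Tf-t}(\dd x, \dd v) = \mu^X_{\Tf-t}(\dd x) \, p_{\Tf-t}(v|x) \, \muref^V(\dd v),
\]
where $\mu^X_{\Tf-t}$ denotes the marginal of $X_{\Tf-t}$; this uses only the assumed conditional density and requires no regularity of the $X$-marginal.

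For case (1), substitute $Q((x,v),(\dd y,\dd w)) = \updelta_x(\dd y)\updelta_{\scrR_x v}(\dd w)$ on the RHS of \eqref{eq:flux_equation} and postulate the natural ansatz $\overleftarrow Q((y,w),(\dd x,\dd v)) = \updelta_y(\dd x)\updelta_{\scrR_y w}(\dd v)$, so that the reverse also reflects the velocity. Integrating both sides against a bounded measurable test function $f(x,v,y,w)$, the Dirac masses collapse the LHS to an integral over $(y,w)$ of $f(y,\scrR_y w, y, w)\, p_{\Tf-t}(w|y)\, \overleftarrow\lambda(t,(y,w))$ against $\mu^X_{\Tf-t}(\dd y)\muref^V(\dd w)$, and the RHS to an analogous integral over $(x,v)$. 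Performing the change of variables $v \mapsto \scrR_x v$ on the RHS—valid since $\scrR_x$ is an involution preserving $\muref^V$—and renaming $(x, \scrR_x v)$ as $(y,w)$, both sides become integrals of the same form, forcing
\[
p_{\Tf-t}(w|y)\, \overleftarrow\lambda(t,(y,w)) = p_{\Tf-t}(\scrR_y w|y)\, \lambda(\Tf-t,(y,\scrR_y w))
\]
for $\mu^X_{\Tf-t}\otimes \muref^V$-almost every $(y,w)$. Dividing by $p_{\Tf-t}(w|y)$ where positive yields the claimed formula for $\overleftarrow\lambda$.

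For case (2), write $\nu(\dd w|y) = (\nicefrac{\rmd\nu(\cdot|y)}{\rmd\muref^V})(w)\muref^V(\dd w)$ using the absolute continuity hypothesis, and postulate $\overleftarrow Q(t,(y,w),(\dd x,\dd v)) = \updelta_y(\dd x)\, p_{\Tf-t}(v|y)\, \muref^V(\dd v)$—the backward refreshment resamples the velocity from its conditional law at the forward time. The position Diracs on each side enforce $x=y$; testing against an arbitrary $f(y,v,w)$ reduces \eqref{eq:flux_equation} to the scalar identity
\[
p_{\Tf-t}(w|y)\, p_{\Tf-t}(v|y)\, \overleftarrow\lambda(t,(y,w)) = p_{\Tf-t}(v|y)\, \lambda(\Tf-t,y)\, (\nicefrac{\rmd\nu(\cdot|y)}{\rmd\muref^V})(w)
\]
for $\mu^X_{\Tf-t}\otimes\muref^V\otimes\muref^V$-almost every $(y,v,w)$. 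Cancelling $p_{\Tf-t}(v|y)$ where positive produces the stated expression for $\overleftarrow\lambda$; uniqueness from Proposition \ref{propo:time_reversal_formula} closes the argument.

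The main technical delicacy lies in the measure-theoretic bookkeeping: carrying out the disintegration and handling the null set $\{p_{\Tf-t}(w|y)=0\}$ without assuming a density for the $X$-marginal. The change of variables in case (1) relies crucially on both the involution $\scrR_y^{-1} = \scrR_y$ and the $\muref^V$-invariance $\muref^V(\dd \scrR_y w) = \muref^V(\dd w)$; without either, the ansatz would not symmetrise the two sides. The underlying structural insight—reversing a reflection is another reflection, while reversing a refreshment redraws the velocity from its conditional law at the forward time—is essentially forced once the flux equation is set up.
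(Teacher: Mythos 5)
Your proposal is correct and follows essentially the same route as the paper's proof: both substitute the structural form of the forward kernel into the balance equation \eqref{eq:flux_equation}, use the conditional density of $V$ given $X$ together with the involution/$\muref^V$-invariance (case 1) or the absolute continuity of $\nu$ (case 2) to rewrite both sides over the same reference measure, and then read off $\overleftarrow\lambda$ and $\overleftarrow Q$, with uniqueness supplied by \Cref{propo:time_reversal_formula}. The only cosmetic difference is that the paper first conditions the flux equation on the position (exploiting that the kernel fixes $x$) and manipulates measures directly via $\updelta_{\scrR_y w}(\dd v)=\updelta_{\scrR_y v}(\dd w)$, whereas you keep the joint disintegration and perform the equivalent change of variables under a test function.
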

The proof is postponed to \Cref{sec:proof_reversal_characteristics}. We remark that we consider that $\mu_0^V$ is a distribution on $\R^d$ also when $\mu_0^V(\msv) = 1$ for $\msv\subset \rset^d$, in which case the reference measure can simply be chosen such that $\muref^V(\msv) = 1$. 
Applying Proposition \ref{prop:reversal_characteristics} we are able to derive explicit expressions for the characteristics of the time reversals of ZZP, RHMC, and BPS. The rigorous statements and their proofs can be found in \Cref{sec:proof_characteristics_zzp_bps_hmc}. For ZZP and BPS we assume the following condition on $\pi$, the limiting distribution for the position vector of the forward process. 
\begin{assumption}\label{ass:dominated_Hessian}
    Recall $\pi(x) \propto e^{-\pot(x)}$. It holds that
    $\pot\in\mathcal{C}^2(\rset^d)$ and $ \sup_{x \in \rset^d} \normLigne{\nabla ^2 \psi(x)}< \plusinfty$.
\end{assumption}
This assumption is satisfied e.g. by any multivariate normal distribution.
For BPS and RHMC we suppose that for any $t \in \ocint{0,\horizon}$, the conditional distribution of $V_t$ given $X_t$ has a transition density $(x,v) \mapsto p_t(v|x)$ with respect to the Lebesgue measure. Moreover, for all samplers we assume \Cref{ass:generator_domain}.

\paragraph{Time reversal of ZZP}
In order to apply \Cref{prop:reversal_characteristics} we additionally assume that $\int  \lvert \partial_i\pot(x)\rvert \rmd \mustar(x) <\infty$ for all $i=1,\dots,d$.
% , as well as a simple assumption on the Hessian of $\pot$ stated in \Cref{ass:dominated_Hessian} in the supplement and satisfied for a Gaussian $\pi$. 
We find that the deterministic motion is defined by $\overleftarrow\vf^{\rmZ}(y,w) = (-w,0)^{\transpose}$ for any $(y,w)\in\rset^{2d}$, while the backward rates and kernels are for $i=1,\dots,d$ and for all $(y,w)\in\rset^{2d}$ such that $p_{\Tf-t}(w\rvert y)>0$,
\begin{align}\label{eq:characteristics_timerev_ZZP}
    \overleftarrow\lambda_{i}^{\rmZ}(t,(y,w)) = \frac{p_{\Tf-t}(\scrR_i^{\rmZ}w\rvert y)}{p_{\Tf-t}(w\rvert y)} \lambda^{\rmZ}_i(y,\scrR_i^{\rmZ} w)\eqsp,  \quad \overleftarrow Q_{i}^{\rmZ}((y,w),(\dd x,v)) = \updelta_{(y,\scrR^{\rmZ}_iw)}(\dd x,v) \eqsp.
    % = Q_{i}((y,w),(\dd x,v))
\end{align}

\paragraph{Time reversal of BPS}
Whereas in \Cref{sec:proof_characteristics_zzp_bps_hmc} we consider the case where the velocity of BPS is initialised on $\sphere^{d-1},$ we can formally apply \Cref{prop:reversal_characteristics} to the case of $\nu$ is the standard $d$-dimensional Gaussian distribution assuming that $\int \vert \nabla \pot(x)\rvert \rmd \mustar(x)<\infty$.
The drift of the backward BPS is clearly the same as for the backward ZZP, while jump rates and kernels are for all $t\in[0,\Tf]$ and $(y,w)\in\rset^{2d}$ such that $p_{\Tf-t}( w\rvert y)>0$ 
\allowdisplaybreaks
\begin{align}
    & \overleftarrow\lambda_{1}^{\rmB}(t,(y,w)) = \frac{p_{\Tf -t}(\scrR^{\rmB}_yw\rvert y)}{p_{\Tf -t}(w\rvert y) } \lambda_1^{\rmB}(y,\scrR^{\rmB}_yw),  \quad \overleftarrow Q_{1}^{\rmB}((y,w),(\dd x,\dd v)) = \updelta_{(y,\scrR^{\rmB}_yw)}(\dd x,\dd v)\eqsp,\notag\\
    & \overleftarrow\lambda_{2}^{\rmB}(t,(y,w)) = \lambda_r\frac{\nu(w)}{p_{\Tf-t}(w | y)}\eqsp, \qquad\quad\,\,  \overleftarrow Q_{2}^{\rmB}(t,(y,w),(\dd x,\dd v)) = p_{\Tf-t}( v\rvert y)  \updelta_y(\dd x) \dd v \eqsp. \label{eq:back_refresh_BPS}
\end{align}

\textbf{Time reversal of RHMC.}     
The deterministic motion of the backward RHMC follows the system of ODEs $\overleftarrow\vf^{\rmH}(x,v)=(-v,\nabla \psi (x))^{\transpose}$, which, when the limiting distribution $\pi$ is Gaussian, has solution $ x_t = x_{0}\cos(t) - v_{0}\sin(t)$ and $ v_t = x_{0}\sin(t) + v_{0}\cos(t)$. The backward refreshment rate and kernel coincide with those of BPS as given in \eqref{eq:back_refresh_BPS}.

\begin{remark}[Variance exploding PDMPs]
    Similarly to the case of diffusion models \citep{song2021scorebased}, we can define \emph{variance exploding PDMPs} choosing $\pot(x) = 0$ for all $x\in\rset^d$, that is when $\pi(\dd x)$ is the Lebesgue measure. In this case, the deterministic motion of RHMC coincides with ZZP and BPS, and all three processes have only velocity refreshment events.
    % while the reflection rates of BPS and ZZP, $\lambda_1^\rmB$ and $(\lambda_i^\rmZ)_{i\in\iint{1}{\ell}}$, are null. 
    % Hence all three PDMPs represent a particle which moves with constant velocity and at random times with constant rate $\lambda_r$ the velocity is refreshed.
\end{remark}

\subsection{Approximating the characteristics of time reversals of PDMPs}\label{sec:approx_characteristics}
In \Cref{sec:time_reversals_pdmps} we showed that the backward jump rates and kernels of ZZP, BPS, and RHMC, involve the conditional densities of the velocity vector of the forward process given its position vector at all times $t \in \ccint{0,\Tf}$.
% For our three main examples, ZZP, BPS, and RHMC, the results of \Cref{sec:time_reversals_pdmps} show that the jump rates and the jump kernels of the
% corresponding backward PDMPs involve the conditional densities of the velocity of the forward process given its position at times $t \in \ccint{0,\Tf}$. 
These conditional densities are unavailable in analytic form, hence in this section we develop methods to learn the jump rates and kernels of our time reversed PDMPs. In \Cref{sec:training} we give the pseudo codes and more detailed descriptions of the training procedure for our  models, together with a comparison with diffusion models.

\paragraph{Approximating the jump rates of the backward ZZP via ratio matching}
In the case of ZZP, we need to approximate for any $i\in\iint{1}{d}$, the rates in \eqref{eq:characteristics_timerev_ZZP}. Since the terms $\lambda_i^{\Zrm}(x,\scrR^{\Zrm}_iv)$ are known, it is sufficient to estimate the density ratios $r_i^\rmZ(x,v,t) := \nicefrac{p_t(\scrR^{\Zrm}_i v\rvert x)}{p_t(v\rvert x)}$ for all states $(x,v)$ such that $p_t(v\rvert x)>0$.  To this end, we introduce a class of functions $\{s^\theta: \rset^d \times \{-1,1\}^d \times [0,\Tf] \to \mathbb{R}_+^d \, :\, \theta \in \Theta\}$ for some parameter set $\Theta \subset \rset^{d_{\theta}}$ and aim to find a parameter  $\theta_{\star} \in \Theta$ such that for any $i \in\iint{1}{d}$, the $i$-th component of $s^{\theta_{\star}}$, denoted by $s_i^{\theta_{\star}}(\cdot)$, is an approximation of $r_i^{\rmZ}$.
We then approximate the backward ZZP by using the rates $\bar\lambda_{i}^{\rmZ}(t,(x,v)) = s^{\theta_{\star}}_i(x,v,\Tf-t) \, \lambda^{\rmZ}_i(x,\scrR^{\Zrm}_iv)$.
To address the problem of fitting $\theta$, we consider different loss functions inspired by the ratio matching (RM) problem considered in \citet{Hyvrinen2007SomeEO}.

From a discrete probability density $p_{\pm}$ on $\{-1, 1\}^d$, RM consists in learning the $d$ ratios  $v\mapsto \nicefrac{p_{\pm}(\scrR_i v)}{p_{\pm}(v)}$ for $i \in\iint{1}{d}$. This problem was motivated in \citet{Hyvrinen2007SomeEO} as a means to estimate  $p_{\pm}$ without requiring its normalising constant, similarly to score matching applied to estimate continuous probability densities \citep{hyvarinen05a}. In our context we are interested only in the ratios, hence as opposed to \citet{Hyvrinen2007SomeEO} we do not model the conditional distributions $(x,v) \mapsto p_t(v\rvert x)$, but directly the ratios $r_i^{\rmZ}$. Adapting the ideas of \citet{Hyvrinen2007SomeEO} to our context, we introduce the function $\gfun : r \mapsto (1+r)^{-1}$ and define the \emph{Explicit Ratio Matching} objective function
\begin{equation}\label{eq:ERM_zigzag}
\begin{aligned}
	\JERM(\theta) & = \int_0^{\Tf} \dd t \, \omega(t) \sum_{i=1}^d \mathbb{E}\Big[ \{\gfun(s_i^\theta(X_t,V_t,t)) - \gfun(r_i(X_t,V_t,t)) \}^2  \\
	& \qquad \qquad  + \{\gfun(s_i^\theta(X_t,\scrR_i^{\rmZ}V_t,t)) - \gfun(r_i(X_t,\scrR_i^{\rmZ}V_t,t)) \}^2 \Big]  \eqsp.
\end{aligned}
\end{equation}
where $\omega : [0,\Tf] \to \rset_+^*$ is a probability density, and $(X_t,V_t)_{t\geq 0}$ is a ZZP initialised from $\mustar\otimes \unif(\{-1,1\}^d)$.
This objective function considers simultaneously the square error in the estimation of both $(x,v,t)\mapsto r_i(x,v,t)$ and $(x,v,t)\mapsto  r_i(x,\scrR_i^{\rmZ}v,t)$, where the function $\gfun$ improves numerical stability, particularly when one of the two ratios is very small. Clearly $\JERM(\theta) = 0$ if and only if $s_i^\theta (x,v,t) = r_i(x,v,t)$ for almost all $x,v,t$ and all $i$. 
Moreover, the choice of $\gfun$ allows us to optimise without knowledge of the true ratios, as shown in the following result.
\begin{proposition}\label{prop:ERM=IRM_hyvarinen}
	It holds that $\argmin_\theta \JERM(\theta) = \argmin_\theta \JIRM(\theta)$ for
     \begin{equation}\label{eq:J_IRM_Hyvrinen}
\JIRM(\theta) =\!\!\int_0^{\Tf} \!\!\!\!  \dd t \, \omega(t) \sum_{i=1}^d \mathbb{E}\Big[ \gfun^2(s_i^\theta(X_t,V_t,t)) + \gfun^2(s_i^\theta(X_t,\scrR_i^{\rmZ}V_t,t) ) - 2\gfun(s_i^\theta(X_t,V_t,t))\Big] \eqsp,
\end{equation}
where $(X_t,V_t)_{t\in\rset_+}$ is a ZZP  starting from $\mustar\otimes \unif(\{-1,1\}^d)$.
\end{proposition}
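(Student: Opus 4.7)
The plan is to show $\JERM(\theta) = \JIRM(\theta) + C(\theta\text{-indep})$, from which the equality of argmins follows. I would start by expanding the squares in \eqref{eq:ERM_zigzag}. Each square has three contributions: the $\gfun^2(s_i^\theta)$ term, the cross term $-2\gfun(s_i^\theta)\gfun(r_i)$, and the $\gfun^2(r_i)$ term. The $\gfun^2(s_i^\theta(X_t,V_t,t))$ and $\gfun^2(s_i^\theta(X_t,\scrR_i^{\rmZ}V_t,t))$ terms appear directly in $\JIRM(\theta)$, while the $\gfun^2(r_i)$ terms are $\theta$-independent and can be ignored. So the entire content of the claim reduces to showing that the cross terms collapse to $-2\,\mathbb{E}[\gfun(s_i^\theta(X_t,V_t,t))]$.

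The key algebraic identity I would use is that $\gfun(r) = 1/(1+r)$ combined with $r_i(x,v,t) = p_t(\scrR_i^{\rmZ}v\mid x)/p_t(v\mid x)$ gives
\begin{equation*}
    \gfun(r_i(x,v,t)) = \frac{p_t(v\mid x)}{p_t(v\mid x) + p_t(\scrR_i^{\rmZ}v\mid x)}\,.
\end{equation*}
Using the fact that, for each $t$, $(X_t,V_t)$ has joint law with density $p_t(x)\,p_t(v\mid x)$ with respect to $\dd x\otimes \muref^V(\dd v)$, I would rewrite the first cross term as
\begin{equation*}
    \int \gfun(s_i^\theta(x,v,t))\,\frac{p_t(v\mid x)\,p_t(v\mid x)}{p_t(v\mid x) + p_t(\scrR_i^{\rmZ}v\mid x)}\,p_t(x)\,\dd x\,\muref^V(\dd v).
\end{equation*}

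For the second cross term (involving $\scrR_i^{\rmZ}V_t$), I would perform the change of variable $w = \scrR_i^{\rmZ} v$. Since $\scrR_i^{\rmZ}$ is an involution preserving $\muref^V$ (here the counting measure on $\{-1,1\}^d$), this sends $v\mapsto \scrR_i^{\rmZ}w$ and leaves the base measure invariant. The second cross term becomes
\begin{equation*}
    \int \gfun(s_i^\theta(x,w,t))\,\frac{p_t(w\mid x)\,p_t(\scrR_i^{\rmZ}w\mid x)}{p_t(w\mid x) + p_t(\scrR_i^{\rmZ}w\mid x)}\,p_t(x)\,\dd x\,\muref^V(\dd w).
\end{equation*}
Summing the two cross terms, the bracket in the integrand factorizes as $p_t(v\mid x)\bigl[p_t(v\mid x) + p_t(\scrR_i^{\rmZ}v\mid x)\bigr]/\bigl[p_t(v\mid x) + p_t(\scrR_i^{\rmZ}v\mid x)\bigr] = p_t(v\mid x)$, which collapses the sum to $\int \gfun(s_i^\theta(x,v,t))\,p_t(x)\,p_t(v\mid x)\,\dd x\,\muref^V(\dd v) = \mathbb{E}[\gfun(s_i^\theta(X_t,V_t,t))]$.

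Multiplying by $-2$, summing over $i$, and integrating against $\omega(t)\,\dd t$ yields exactly the cross-term contribution in $\JIRM(\theta)$. Combined with the $\gfun^2(s_i^\theta)$ terms that already match, this shows $\JERM(\theta) - \JIRM(\theta)$ is $\theta$-independent, proving the claim. The only subtle step is the change-of-variable argument, which requires the involution property of $\scrR_i^{\rmZ}$ and invariance of the reference measure; since the states live in the finite set $\{-1,1\}^d$, this reduces to a trivial relabeling in the discrete sum, so no technical obstacle is expected. I would also note integrability is automatic because $0\leq \gfun \leq 1$, making all expectations finite.
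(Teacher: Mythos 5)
Your proposal is correct and follows essentially the same route as the paper's proof: expand the squares, discard the $\theta$-independent $\gfun^2(r_i)$ terms, use $\gfun(r_i(x,v,t)) = p_t(v\mid x)/(p_t(v\mid x)+p_t(\scrR_i^{\rmZ}v\mid x))$, and collapse the two cross terms via the relabeling $v\mapsto\scrR_i^{\rmZ}v$ in the discrete sum. The only difference is cosmetic (you work with $p_t(x)p_t(v\mid x)$ where the paper writes the joint density $p_t(x,v)$), so there is nothing to add.
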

Therefore we aim to solve the minimisation problem associated with $\JIRM$, which has for empirical counterpart 
\[\theta \mapsto \frac{1}{N} \sum_{n=1}^N \sum_{i=1}^d \gfun^2(s_i^\theta(X^n_{\tau^n}, V^n_{\tau^n}, \tau^n)) + \gfun^2(s_i^\theta(X^n_{\tau^n}, \scrR_i^{\rmZ}V^n_{\tau^n}, \tau^n))  -  2 \gfun(s_i^\theta(X^n_{\tau^n}, V^n_{\tau^n}, \tau^n))\]
where $\{\tau^n\}_{n=1}^N$ are \iid~samples from $\omega$, independent of $\{(X^n_t,V^n_t)_{t \geq 0}\}_{n=1}^N$, which are $N$ \iid~realisations of the ZZP respectively starting at the $n$-th training data point with velocity $V^n_0$, where $\{V^n_0\}_{n=1}^N$  are \iid~observations of $\unif(\{-1,1\}^{d})$. 
% \dario{redirect to algorithms}We provide the training procedure in Algorithm~\ref{alg:training_pdmp_zzp}. We display the training process for the popular diffusion models to outline the main differences, a more detailed discussion can be found in \Cref{app:comparison_diffusion}.

Notice that the loss above has computational cost increasing linearly in $d$ because $d+1$ evaluations of the model are needed for each datum. This can be improved considering an estimate for the ratio which does not take as input the whole velocity vector (see \Cref{app:training_zzp} for the details). This variation has computational cost that is constant in the dimension, but might have lower accuracy.
% We display the associated training process in Algorithm~\ref{alg:training_pdmp_zzp_simplified}.
% \dario{redirected to algorithm}

% When $d$ is large the computational cost of the empirical loss can be decreased by subsampling over the dimensions. This gives an estimator of the loss \eqref{eq:J_IRM_Hyvrinen} that is unbiased, although at the price of a larger variance. 

\paragraph{Approximating the characteristics of BPS and RHMC}
For BPS and RHMC, \Cref{prop:reversal_characteristics}  shows that if we aim to sample from the backward process, we have to estimate both ratios of the conditional density of the velocity of the forward PDMP given its position at any time $t\in[0,\Tf]$, and also to be able to sample from such densities as prescribed by the backward jump kernel \eqref{eq:back_refresh_BPS}. In order to address both requirements, we introduce a parametric family of conditional probability distributions $\{p_\theta:\theta\in\Theta\}$ of the form $(x,v,t)\mapsto p_\theta(v\rvert x,t)$, where $\Theta\subset \rset^{d_\theta}$, which we model with the framework of normalising flows (NFs) \citep{Papamakarios_normaflows}. The advantage of NFs lays in their feature that, once the network is learned, it is possible both to obtain an estimate of the density at a given state and time, and also to generate samples which are approximately from $(x,v,t)\mapsto p_{t}(v|x)$. However, training conditional NFs can be challenging in high dimensions.

Focusing on BPS, we now illustrate how we can use NFs to learn the backward jump rates and kernels. 
We aim to find a parameter $\theta_\star^\rmB$ such that $p_{\theta_\star^\rmB}(v\rvert x,t)$ is a good approximation of $p_t(v\rvert x)$, the conditional density of the forward BPS with respect to the Lebesgue measure.
We choose to optimise $\theta$ following the maximum likelihood approach, which gives the theoretical loss
\begin{equation} \label{eq:ml_loss}
    \ell_{\mathrm{ML}}(\theta) = - \int_0^{\Tf} \dd t \eqsp \omega(t) \mathbb{E}\left[\log p_\theta(V_t\rvert X_t,t) \right],
\end{equation}
where $\omega : [0,\Tf] \to \rset_+^*$ is a probability density, and $(X_t,V_t)_{t\geq 0}$ is a a BPS initialised from $\mustar\otimes \nu$, with $\nu$ denoting the density of the $d$-dimensional standard normal distribution. The optimal parameter $\theta_\star^\rmB$ can then be found minimising the empirical counterpart of $\ell_{\mathrm{ML}}(\theta)$: 
\begin{equation} \label{eq:ml_loss_empirical}
    \theta_\star^\rmB=\argmin_\theta \frac{1}{N} \sum_{n=1}^N \log p_\theta(V^n_{\tau^n} \rvert X^n_{\tau^n},\tau^n) ,
\end{equation}
where $\{\tau^n\}_{n=1}^N$ are \iid~samples from $\omega$, independent of $\{(X^n_t,V^n_t)_{t \geq 0}\}_{n=1}^N$, which are $N$ \iid~realisations of the ZZP respectively starting at the $n$-th training data point with velocity $V^n_0$, where $\{V^n_0\}_{n=1}^N$  are \iid~observations from the multivariate standard normal distribution.
Once we have obtained the optimal parameter $\theta_\star^\rmB$, we can define our approximation of the backward refreshment mechanism of BPS taking the rate $\bar\lambda_2^{\rmB}(t,(x,v)) = \lambda_r \times \nicefrac{\nu(v)}{p_{\theta_\star^\rmB}(v\rvert x,\Tf-t)}$ 
and the kernel $\bar Q_{2}^{\rmB}(t,(y,w),(\dd x,\dd v)) = p_{\theta_\star^\rmB}( v\rvert y,\Tf-t)  \updelta_y(\dd x) \dd v$. 
% and sampling from $p_{\theta_\star^\rmB}$ in case of refreshment. 
Similarly, we estimate the backward reflection ratio of BPS as $\bar\lambda^{\rmB}_1(t,(x,v)) = \lambda^{\rmB}_1(x,\scrR^{\rmB}_xv) \times  \nicefrac{p_{\theta_\star^\rmB}(\scrR^{\rmB}_xv\rvert x,\Tf-t)}{p_{\theta_\star^\rmB}(v\rvert x,\Tf-t)}$.

\subsection{Simulating the backward process}\label{sec:discretisations}
We now discuss how we can simulate the backward PDMP with exact backward flow map $(t,x,v) \mapsto \varphi_{-t}(x,v)$ and jump characteristics $\overline \lambda$ and $\overline Q$ that are approximations of the jump rates and kernels of the time reversed PDMPs obtained as discussed in \Cref{sec:approx_characteristics}. We recall that the backward rates have the general form $\overline\lambda(t,(x,v)) = s_\theta(x,v,\Tf-t) \lambda(x,\scrR v)$, where $s_\theta$ is an estimate of a density ratio and $\scrR$ is a suitable involution.
In principle such a PDMP can be simulated following the procedure described in \Cref{sec:pdmp_definitions}, but the generation of the random jump times via \eqref{eq:event_time} requires the integration of $\overline\lambda(t,\varphi_{-t}(x,v))$ with respect to $t$. This cannot be achieved since $\overline \lambda$ is defined through a neural network.
A standard approach in the literature (see e.g. \citet{bertazzi2021approximations,bertazzi_splitting}) is to discretise time and (informally) approximate the integral in \eqref{eq:event_time} with a finite sum. Here we focus on approximations based on splitting schemes discussed in \cite{bertazzi_splitting}, adapting their ideas to the non-homogeneous case. Such splitting schemes approximate a PDMP with a Markov chain defined on the time grid $\{t_n\}_{n\in\iint{0}{N}}$, with $t_0=0$ and $t_N = \Tf$. The key idea is that the deterministic motion and the jump part of the PDMP are simulated separately in a suitable order, obtaining second order accuracy under suitable conditions (see Theorem 2.6 in \citet{bertazzi_splitting}). 
Now, we give an informal description of the splitting scheme that we use for RHMC, that is based on splitting DJD in \citet{bertazzi_splitting}, where D stands for deterministic motion and J for jumps.
We define our Markov chain based on the step sizes $\{\delta_j\}_{j \in\iint{1}{N}}$, where $\delta_j = t_{j}-t_{j-1}$.
Suppose we have defined the Markov chain on $\{t_k\}_{k\in\iint{0}{n}}$ for $n<N$ and that the state at time $t_n$ is $(x_{t_n},v_{t_n})$. The next state is obtained following three steps. \emph{First}, the particle moves according to its deterministic motion for a half-step, that is we define an intermediate state $(\tilde x_{t_n},\tilde v_{t_n})=\varphi_{-\nicefrac{\delta_{n+1}}{2}}(x_{t_n},v_{t_n})$. \emph{Second}, we turn our attention to the jump part of the process. In this phase, the particle is only allowed to move through jumps and there is no deterministic motion. This means that the rate is frozen to the value $\overline{\lambda}(t_n+\nicefrac{\delta_{n+1}}{2},(\tilde x_{t_n},\tilde v_{t_n}))$ and thus the integral in \eqref{eq:event_time} can be computed trivially. The proposal for the next event time is then given by $\tau_{n+1}\sim \Exp(\overline{\lambda}(t_n+\nicefrac{\delta_{n+1}}{2},(\tilde x_{t_n},\tilde v_{t_n})))$. 
If $\tau_{n+1}\leq \delta_{n+1}$, we draw $w\sim \overline{Q}(t_n+\nicefrac{\delta_{n+1}}{2},(\tilde x_{t_n},\tilde v_{t_n}),\cdot)$ and update $\tilde v_{t_n}=w$, else we leave $\tilde v_{t_n}$ unchanged.
\emph{Finally} we conclude with an additional half-step of deterministic motion, letting $(x_{t_{n+1}},v_{t_{n+1}})=\varphi_{-\nicefrac{\delta_{n+1}}{2}}(\tilde x_{t_n},\tilde v_{t_n}).$
 We refer to \Cref{sec:splitting_schemes} for a detailed description of the schemes used for each process together with the pseudo-codes.

% Then, assuming the process is at $(x,v)$ at time $t_n$, we estimate the integral in \eqref{eq:event_time} for $t\in[0,\delta_{n+1}]$ as 
% \begin{equation}\label{eq:approxim_integral_jumps}
%     \int_0^t \overline\lambda(t_n + u,\varphi_{-u}(x,v)) \dd u \approx t  s_\theta(x,v,t_n) \lambda_{\scrR} (x,v)\eqsp,
% \end{equation}
% which essentially means using the rate $(t,x,v)\mapsto\overline{\lambda}(t_n,(x,v))$ when $t_n \leq t\leq t_{n+1}.$ 
% Taking advantage of this estimate, a proposal for the next event time can be simply obtained by $\tau_{n+1} = \nicefrac{E_{n+1}}{s_\theta(x,v,t_n) \lambda_{\scrR} (x,v)}$ for $E_{n+1}\sim \Exp(1)$ and is accepted if $\tau_{n+1} \leq \delta_{n+1}$. With this approach, \cite{bertazzi2021approximations,bertazzi_splitting} define Markov chains on the time grid $\{t_j\}_{j \in\iint{0}{N}}$ which approximate the continuous PDMP with theoretical error guarantees depending on the step sizes. 

\section{Error bound in total variation distance}
In this section, we give a bound on the total variation distance between the data distribution $\mustar$ and the law of the synthetic data generated by a PDMP with initial distribution $\pi\otimes\nu$ and approximate characteristics obtained, e.g., with the methods described in \Cref{sec:approx_characteristics}. 
We obtain our result comparing the law of such PDMP to the law of the exact time reversal obtained in \Cref{sec:time_reversals_pdmps}, that is the PDMP with the analytic characteristics of \Cref{prop:reversal_characteristics} and with initial distribution $\cL(X_{\Tf},V_{\Tf})$, i.e. the law of the forward PDMP at time $\Tf$ when initialised from $\mustar\otimes\nu$. 
% In our theorem, we then take into account two of the three sources of error of our models, neglecting the discretisation error of the methods discussed in \Cref{sec:discretisations}.
In \Cref{thm:errorbounds_ctstime} below we then take into account two of the three sources of error in our models: (i) the error introduced initialising the backward PDMP from the limiting distribution of the forward, (ii) the error due to the approximation of the backward rates and kernels. For simplicity we neglect the discretisation error caused by the methods discussed in \Cref{sec:discretisations}.
% An approximation of the backward PDMP can then be obtained using the limiting distribution of the forward process $\pi\otimes \nu$ as initial distribution, and approximating the backward characteristics, in particular the jump rates and kernels, for instance with the approaches discussed in \Cref{sec:approx_characteristics}.
% To overcome these points, we define approximate backward PDMPs $(\overline X_t, \overline V_t )_{t\in [0,\Tf]}$ with characteristics $(\overleftarrow \vf, \overline \lambda , \overline Q)$ and initial distribution $\pi\otimes \nu$. Here the backward vector field is as given by \Cref{propo:time_reversal_formula}, while the jump rate and kernel are approximations of the theoretical ones, whereas $\pi\otimes \nu$ is the invariant distribution for the position and velocity vectors respectively. 
% In the following theorem we study the bias of the law of generated data due to these two approximations. 
% For simplicity we state the result focusing on the ZZP, BPS, and RHMC where $\pi$ is the $d$-dimensional Gaussian distribution, but the result holds for a much larger class of distributions on which the processes are geometrically ergodic.

We shall assume the following condition, which deals with the error introduced by initialising the backward PDMP from $ \pi\otimes\nu.$
\begin{assumption}\label{ass:geom_convergence}
    The forward PDMP with semigroup $(P_t)_{t\geq 0}$ is such that there exist $\gamma,C>0$ for which
    \begin{equation}
        \lVert \pi\otimes\nu -\mustar\otimes\nu P_t \rVert_{\TV} \leq Ce^{-\gamma t}.
    \end{equation}
\end{assumption}
Informally, \Cref{ass:geom_convergence} is verified for some $C<\infty$ when $\pi$ is a multivariate standard Gaussian distribution for ZZP and BPS if the tails of $\mustar$ are at least as light as those of $\pi$, while for RHMC it is enough if $\mustar$ has finite second moments. We refer to \Cref{sec:discussion_geom_erg} for a more detailed discussion on this aspect. We are now ready to state our result.
\begin{theorem}\label{thm:errorbounds_ctstime}
    Consider a non-explosive PDMP $(X_t, V_t )_{t\geq 0}$ with initial distribution $\mustar\otimes \nu$, stationary distribution $\pi \otimes \nu$, and characteristics $(\vf, \lambda , Q)$. Let $\Tf$ be a time horizon. Suppose the assumptions of \Cref{propo:time_reversal_formula} as well as \Cref{ass:geom_convergence} hold. Let $(\overline X_t, \overline V_t )_{t\in [0,\Tf]}$ be a non-explosive PDMP initial distribution $\pi\otimes \nu$ and characteristics $(\overline \vf, \overline \lambda , \overline Q)$, where $\overline \vf(t,(x,v))=\vf(\Tf-t,(x,v))$ for all $t\in[0,\Tf]$ and $(x,v)\in \rset^{2d}$.
    Then it holds that
    \begin{equation}\label{eq:bound_error_cts}
        \lVert \mustar  - \cL(\overline X_{\Tf})\rVert_{\TV} \leq Ce^{-\gamma \Tf} + 2 \E\left[ 1-\exp\left(-\int_0^{\Tf} g_{\Tf-t}(X_t,  V_t )\dd t \right)\right],
    \end{equation}
    where 
    \begin{equation}\label{eq:function_g}
    \begin{aligned}
    % g_t(x,v) &= \frac12\left(\overleftarrow\lambda(t,(x, v )) \wedge \overline \lambda(t,(x, v ))\right)  \left\lVert \overleftarrow Q(t,(x, v),\cdot) - \overline Q(t,(x,v ),\cdot)\right\rVert_{\TV}  \\
    % & \quad + \left\rvert \overleftarrow\lambda(t,(x, v)) - \overline \lambda(t,(x,v)) \right\rvert
    g_t(x,v) = \frac{(\overleftarrow\lambda\wedge \overline \lambda\eqsp)(t,(x, v))}{2}\lVert \overleftarrow Q(t,(x, v),\cdot) - \overline Q(t,(x,v ),\cdot)\rVert_{\TV}+ \big\rvert \overleftarrow\lambda(t,(x, v)) - \overline \lambda(t,(x,v)) \big\rvert
\end{aligned}
\end{equation}
and $\overleftarrow \lambda, \overleftarrow Q$ are as given by \Cref{propo:time_reversal_formula}.
\end{theorem}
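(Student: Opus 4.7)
The plan is to control $\|\mustar - \cL(\overline X_{\Tf})\|_{\TV}$ by constructing a single coupling $(\hat Z^1, \hat Z^2)$ on a common probability space such that $\hat Z^1$ has the law of the exact time reversal $\overleftarrow Z = (\overleftarrow X, \overleftarrow V)$ of \Cref{propo:time_reversal_formula} initialised from $\mustar \otimes \nu P_{\Tf}$, and $\hat Z^2$ has the law of $\overline Z = (\overline X, \overline V)$ initialised from $\pi \otimes \nu$. Since $\cL(\overleftarrow Z_{\Tf}) = \mustar \otimes \nu$ by \Cref{propo:time_reversal_formula}, and total variation contracts under marginalisation,
\[
\|\mustar - \cL(\overline X_{\Tf})\|_{\TV} \leq \|\cL(\overleftarrow Z_{\Tf}) - \cL(\overline Z_{\Tf})\|_{\TV} \leq \mathbb{P}(\hat Z^1_{\Tf} \neq \hat Z^2_{\Tf}),
\]
so it suffices to bound the decoupling probability by the right-hand side of \eqref{eq:bound_error_cts}.

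I would construct the coupling in two stages. At $t = 0$, sample $(\hat Z^1_0, \hat Z^2_0)$ from a maximal coupling of $\mustar \otimes \nu P_{\Tf}$ and $\pi \otimes \nu$, so that $\hat Z^1_0 \neq \hat Z^2_0$ with probability $\|\mustar \otimes \nu P_{\Tf} - \pi \otimes \nu\|_{\TV} \leq C e^{-\gamma \Tf}$ by \Cref{ass:geom_convergence}. On the event $\{\hat Z^1_0 = \hat Z^2_0\}$ the two processes evolve under the common vector field $\overleftarrow \vf$, and their jumps are driven by thinning a single dominating inhomogeneous Poisson process of intensity $(\overleftarrow \lambda \vee \overline \lambda)(t, \hat Z^1_t)$: at each accepted instant, with conditional probability $(\overleftarrow \lambda \wedge \overline \lambda)/(\overleftarrow \lambda \vee \overline \lambda)$ both processes jump using a measurable maximal coupling of $\overleftarrow Q(t, z, \cdot)$ and $\overline Q(t, z, \cdot)$, and otherwise only the process with the larger rate jumps, from its own kernel. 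Once the two processes disagree they are evolved independently. A direct calculation shows that, as long as they remain equal, the first decoupling time $\sigma$ has hazard
\[
h_t(z) = (\overleftarrow \lambda \wedge \overline \lambda)(t, z) \, \|\overleftarrow Q(t, z, \cdot) - \overline Q(t, z, \cdot)\|_{\TV} + |\overleftarrow \lambda - \overline \lambda|(t, z) \leq 2\, g_t(z).
\]

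The conclusion then follows from the Poisson representation of $\sigma$: conditionally on initial coupling,
\[
\mathbb{P}(\sigma \leq \Tf) = \mathbb{E}\Bigl[1 - \exp\Bigl(-\int_0^{\Tf} h_t(\hat Z^1_t)\, \dd t\Bigr)\Bigr] \leq \mathbb{E}\Bigl[1 - \exp\Bigl(-2\int_0^{\Tf} g_t(\overleftarrow Z_t)\, \dd t\Bigr)\Bigr].
\]
To turn this into an expectation along the forward PDMP, I would use the pathwise identification $(\overleftarrow Z_t)_{t \in [0, \Tf]} \overset{d}{=} (X_{\Tf - t}, V_{\Tf - t})_{t \in [0, \Tf]}$ (which is a direct consequence of \Cref{propo:time_reversal_formula}) together with the change of variable $s = \Tf - t$. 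Finally, the elementary inequality $1 - e^{-2u} \leq 2(1 - e^{-u})$ for $u \geq 0$ absorbs the factor $2$ in the exponent and yields \eqref{eq:bound_error_cts}.

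The main obstacle I expect is the rigorous construction of the coupling as a well-defined non-explosive PDMP on the product space $\rset^{2d} \times \rset^{2d}$ whose marginals are $\overleftarrow Z$ and $\overline Z$. This requires combining the superposition/thinning representation of jump processes with a jointly measurable selection of the maximal coupling of $\overleftarrow Q(t, z, \cdot)$ and $\overline Q(t, z, \cdot)$, and verifying non-explosion of the coupled process under \Cref{ass:standard_conditions_davis}. Once this is in place, the hazard-rate bound for $\sigma$ and the comparisons with $g_t$ are routine.
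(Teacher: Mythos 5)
Your argument is correct and delivers exactly the stated bound, but it is organised differently from the paper's proof. The paper first reduces to $\lVert \cL(\overleftarrow X_{\Tf},\overleftarrow V_{\Tf})-\cL(\overline X_{\Tf},\overline V_{\Tf})\rVert_{\TV}$ as you do, but then inserts an intermediate PDMP $(\widetilde X_t,\widetilde V_t)$ with the \emph{approximate} characteristics $(\overleftarrow\vf,\overline\lambda,\overline Q)$ but the \emph{exact} initial law $\cL(X_{\Tf},V_{\Tf})$, and splits by the triangle inequality: the term comparing $\widetilde Z$ and $\overline Z$ (same dynamics, different initialisation) is handled by the data processing inequality plus \Cref{ass:geom_convergence}, while the term comparing $\overleftarrow Z$ and $\widetilde Z$ (same initialisation, different jump characteristics) is handled by disintegrating over the common starting point and invoking the synchronous-coupling bound of \citet[Theorem 11]{pdmp_inv_meas}, which yields $g_t$ directly. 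You instead build a single two-stage coupling: a maximal coupling at time zero absorbs the initialisation error, and a thinning coupling of the jumps gives a decoupling hazard that you bound by $2g_t$ and then repair with $1-e^{-2u}\leq 2(1-e^{-u})$ (note that with the paper's convention for $\lVert\cdot\rVert_{\TV}$ the maximal-coupling failure probability at a joint jump is $\tfrac12\lVert\overleftarrow Q-\overline Q\rVert_{\TV}$, so the hazard is in fact exactly $g_t$ and your detour through the factor $2$ is unnecessary, though harmless). The trade-off is real: the paper's modular decomposition lets it use an off-the-shelf coupling lemma for processes with identical initial laws and never has to construct a coupling between processes differing in both initialisation and dynamics, whereas your one-shot coupling must be built from scratch on the product space — the measurable selection of maximal couplings of the kernels and the verification that the coupled process is a well-defined non-explosive PDMP, which you correctly flag as the main remaining work and which the triangle-inequality route avoids entirely. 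One small point worth making explicit in your version: on the event that the initial maximal coupling succeeds, the common starting point is distributed according to the overlap measure, which is dominated by $\mustar\otimes\nu P_{\Tf}$; since the integrand $1-\exp(-\int h)$ is nonnegative, the conditional expectation is still bounded by the unconditional one under the exact backward law, which is what lets you pass to the forward process via $(\overleftarrow Z_t)\overset{d}{=}(Z_{\Tf-t})$.
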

The proof is postponed to \Cref{sec:proof_errorbound}. The first term in \eqref{eq:bound_error_cts} is caused by initialising the process from $\pi \otimes \nu$, while the second term represents the error introduced by the approximate jump rate $\overline \lambda$ and kernel $\overline Q$.
For the sake of illustration we obtain a simple upper bound to \eqref{eq:bound_error_cts} in the case of ZZP (for the details see \Cref{sec:bound_theorem_zzp}). We assume the conditions of \Cref{thm:errorbounds_ctstime} are satisfied and also that the expected error of the learned rates $\bar\lambda_{i}^{\rmZ}$ is bounded by a constant, in the sense that $\mathbb E[ \lvert r_i^\rmZ(X_t, V_t,\Tf -t)-s^\theta_i(X_t, V_t,\Tf- t)\rvert \lambda_i^\rmZ(X_t,\scrR_i^\rmZ V_t) ] \leq M$ for all $t\in[0,\Tf]$ and $i\in\iint{1}{d}$. The latter condition is similar to the standard assumption asked on the approximation of the score in diffusion models \citep{chen2023sampling}.
Under these conditions we obtain the bound 
\begin{equation}\label{eq:errorbound_zzp}
    \lVert \mustar  - \cL(\overline X_{\Tf})\rVert_{\TV} \leq Ce^{-\gamma \Tf} + 4 M\Tf d \eqsp.
\end{equation}
% We find 
% \[g_t(x,v) \leq 2 \sum_{i=1}^d \lvert r_i^\rmZ(x,v,t)-s^\theta_i(x,v,t)\rvert \eqsp \lambda_i^\rmZ((x,\scrR^{\rmZ}_i v)\]
% \[\lVert \mustar  - \cL(\overline X_{\Tf})\rVert_{\TV} \leq C\rme^{-\gamma \Tf} + 2 \mathbb E \left[ \int_0^{\Tf} \left\rvert \overleftarrow\lambda(\Tf-t,(X_t, V_t)) - \overline \lambda(\Tf-t,(X_t, V_t)) \right\rvert \dd t \right].\]
% that the TV distance to the data distribution depends on
% Compared to the bounds obtained for diffusion based generative models (see e.g. \citet{chen2023sampling}), we observe that the expected error in the estimation of the score is substituted by the expected error for the estimation of backward rates.

\section{Numerical simulations}\label{sec:numerical_simulations}

In this section, we test our piecewise deterministic generative models on simple synthetic datasets.
% After the groundwork of the previous sections, we design experiments to demonstrate the strengths of our generative PDMP approach. 
% The objective is multifold: determine the best parameters necessary to efficiently run the PDMPs (refresh rates, time horizon etc.), characterise areas where they show promising result (reduce the number of sampling steps), and compare their performance with existing successful generative methods. 

\paragraph{Design}
We compare the generative models based on ZZP, BPS, and RHMC with the improved denoising diffusion probabilistic model (i-DDPM) given in \citet{nichol2021improved}.
For all of our models, we choose the standard normal distribution as target distribution for the position vector, as well as for the velocity vector in the cases of BPS and RHMC.
The accuracy of trained generative models is evaluated by the kernel maximum mean discrepancy (MMD).
We refer to \Cref{app:experimental_setup} for a detailed description of the parameters and networks choices.

\paragraph{Sample quality} 
\begin{table}[t]
\centering
\begin{tabular}{lcccc}
Dataset  &  i-DDPM &    BPS &    RHMC &    ZZP \\
\midrule
Checkerboard   &   2.49 ± 0.98 &  1.96 ± 1.51 &  4.27 ± 3.36 &  \textbf{0.81} ± 0.19 \\
Fractal tree   &   8.04 ± 5.58 &  2.25 ± 1.70 &  4.41 ± 4.35 &  \textbf{1.12} ± 0.58 \\
Gaussian grid  &  23.19 ± 9.72 &  4.59 ± 4.03 &  \textbf{4.01} ± 3.32 &  4.43 ± 4.05 \\
Olympic rings  &   2.03 ± 1.60 &  2.07 ± 1.19 &  2.41 ± 2.24 &  \textbf{1.43} ± 0.86 \\
Rose           &   6.77 ± 5.81 &  1.92 ± 1.57 &  2.16 ± 1.59 &  \textbf{0.90} ± 0.35 \\
\bottomrule
\end{tabular}
\caption{MMD $\downarrow$, in units of $1e-3$, averaged over 6 runs, with the corresponding standard deviations.}
\label{tab:pdmp_vs_diffusion_2d}
\end{table}

\begin{figure}[]
\centering
% First row of images for the 'rose' dataset
%\begin{minipage}[b]{0.16\linewidth}
%    \includegraphics[width=\linewidth]{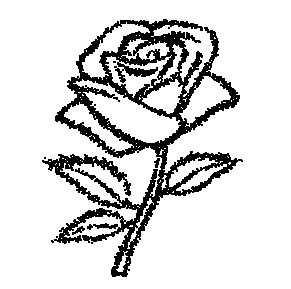}
%    \caption*{Rose dataset}
%    \label{fig:true_rose}
%\end{minipage}
%\hfill
%\begin{minipage}[b]{0.16\linewidth}
%    \includegraphics[width=\linewidth]{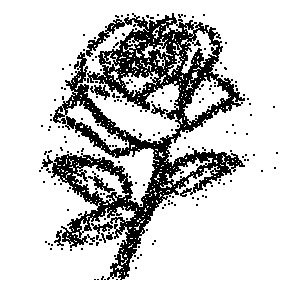}
%    \caption*{ZigZag}
%    \label{fig:zigzag_rose}
%\end{minipage}
%\hfill
%\begin{minipage}[b]{0.16\linewidth}
%    \includegraphics[width=\linewidth]{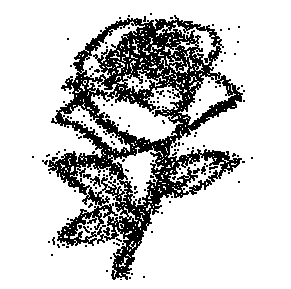}
%    \caption*{HMC}
%    \label{fig:hmc_rose}
%\end{minipage}
%\hfill
%\begin{minipage}[b]{0.16\linewidth}
%    \includegraphics[width=\linewidth]{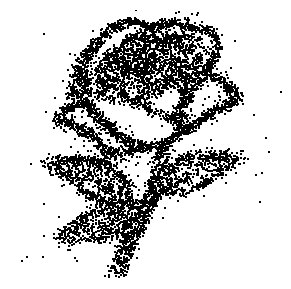}
%    \caption*{BPS}
%    \label{fig:bps_rose}
%\end{minipage}
%\hfill
%\begin{minipage}[b]{0.16\linewidth}
%    \includegraphics[width=\linewidth]{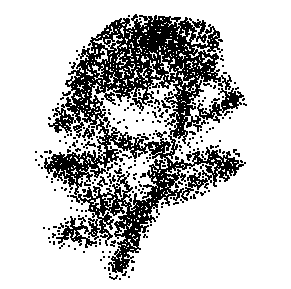}
%    \caption*{DDPM}
%    \label{fig:diffusion_rose}
%\end{minipage}
%
%\vspace{10pt} % Adds vertical space between the rows of images

% Second row of images for 'fractal_tree' and 'olympic_rings' datasets
\begin{minipage}[b]{0.16\linewidth}
    \includegraphics[width=\linewidth]{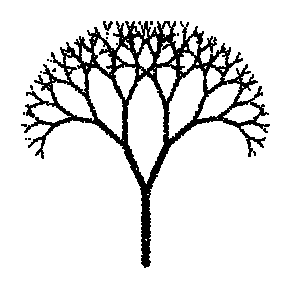}
    \caption*{Fractal tree}
\end{minipage}
\hfill
\begin{minipage}[b]{0.16\linewidth}
    \includegraphics[width=\linewidth]{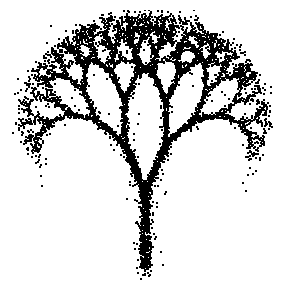}
    \caption*{ZZP}
\end{minipage}
\hfill
\begin{minipage}[b]{0.16\linewidth}
    \includegraphics[width=\linewidth]{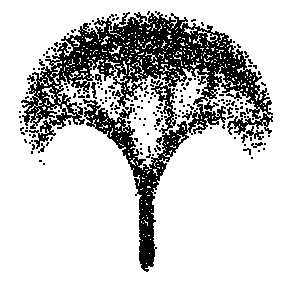}
    \caption*{i-DDPM}
\end{minipage}
\hfill
\begin{minipage}[b]{0.16\linewidth}
    \includegraphics[width=\linewidth]{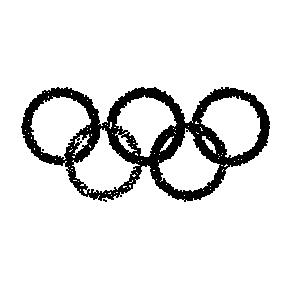}
    \caption*{Olympic rings}
\end{minipage}
\hfill
\begin{minipage}[b]{0.16\linewidth}
    \includegraphics[width=\linewidth]{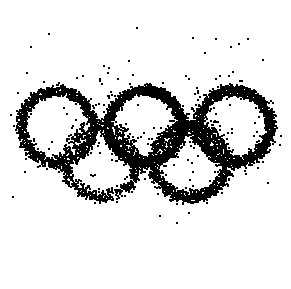}
    \caption*{BPS}
\end{minipage}
\hfill
\begin{minipage}[b]{0.16\linewidth}
    \includegraphics[width=\linewidth]{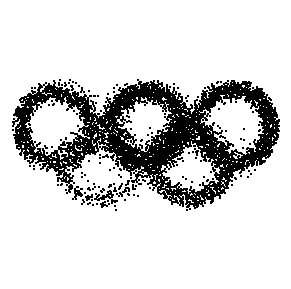}
    \caption*{i-DDPM}
\end{minipage}

\caption{Comparative results on two-dimensional generation of synthetic datasets.}
\label{fig:2d_generation}
\end{figure}
In \Cref{tab:pdmp_vs_diffusion_2d} we report the MMD score for five, $2$-dimensional toy distributions. We observe that the PDMP based generative models perform well compared to i-DDPM in all of these five datasets. 
% DDPM manages a better mean MMD score than RHMC on the Checkerboard and Olympic datasets, and only beeats BPS in the single instance of the olympic rings dataset. 
In particular, ZZP and i-DDPM are implemented with the same neural network architecture, hence ZZP appears to compare favourably to i-DDPM with the same model expressivity.
% Among all the generative models, ZZP shows the best results across almost all datasets, achieving as low as a mean MMD of $0.90$ on the Rose dataset, as compared to $6.77$ for DDPM. Since ZZP and DDPM use the exact same architecture, this is a further positive signal in favor of PDMPs as they seem to better encode information in models of equal expressivity. 
The results of \Cref{tab:pdmp_vs_diffusion_2d} are supported by the plots of generated data shown in \Cref{fig:2d_generation}, illustrating how ZZP and BPS are able to generate more detailed edges compared to i-DDPM.
% , and BPS has better learned the superposition of smooth submanifolds formed by the rings, eventhough a few samples get scattered further away from the support of the data distribution.

In \Cref{fig:rhmc_vs_diffusion_gaussian_grid}, we compare the output of RHMC and i-DDPM for a very small number of reverse steps. We observe how in this setting the data generated by RHMC are noticeably closer to the true data distribution compared to i-DDPM. This phenomenon is observed also for BPS as shown in \Cref{tab:reverse_steps}, and is intuitively caused by the refreshment kernel, which is able to generate velocities that correct wrong positions. Respecting this intuition, ZZP does not perform as well as BPS and RHMC for a small number of reverse steps since its velocities are constrained to $\{-1,1\}$. Nonetheless, ZZP generates the most accurate results in our experiments given a large enough number of reverse steps. 

\Cref{tab:reverse_steps} and \Cref{fig:performance_comparison} show that PDMP-based models require a smaller computational time to generate samples of a given quality compared to i-DDPM. This is the case because PDMP models require considerably less backward steps than i-DDPM, although each step is more expensive (see \Cref{tab:reverse_steps}).
Additional results can be found in \Cref{app:experimental_setup}, including promising results applying ZZP to the MNIST dataset.
\begin{figure}[h!]
    \centering
\begin{minipage}[t]{0.6\textwidth}
    \centering
    % based on one run
    \captionof{table}{MMD $\downarrow$ for various number of backward steps, Rose dataset.}
    \begin{tabular}{lcccc}
    \toprule
    steps & i-DDPM & BPS & RHMC & ZZP \\
    \midrule
    2 & 696.28 & 165.09 & \textbf{26.48} & 358.25 \\
    5 & 192.17 & 22.18 & \textbf{3.00} & 89.49 \\
    10 & 45.08 & 5.48 & \textbf{1.75} & 11.31 \\
    25 & 12.34 & 1.58 & \textbf{0.60} & 1.20 \\
    100 & 8.72 & 3.66 & 1.72 & \textbf{1.04} \\
    \midrule
    time/step(ms) & 3.94 & 45.8 & 15.1 & 11.2 \\
    \bottomrule
    \end{tabular}
    \label{tab:reverse_steps}
\end{minipage}
\hfill
\begin{minipage}[t]{0.36\textwidth}
    \centering
    \caption{MMD $\downarrow$, runtime (ms) per method, Rose dataset.}
    \vspace{-5pt}
    \includegraphics[width=0.9\linewidth]{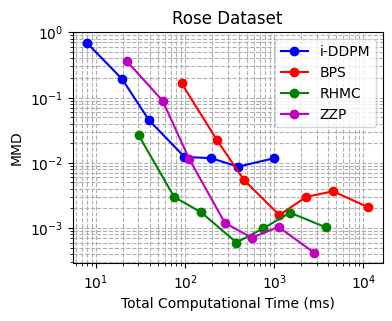}
    \label{fig:performance_comparison}
\end{minipage}
\end{figure}

\begin{figure}[h]
\centering
% First row of images for the 'rose' dataset
\begin{minipage}[b]{0.18\linewidth}
    \includegraphics[width=\linewidth]{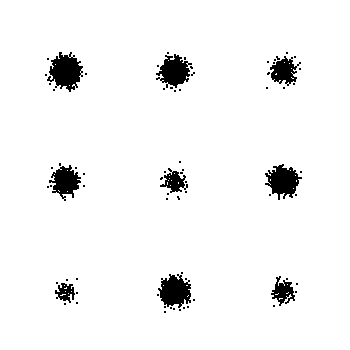}
    \caption*{Gaussian grid}
\end{minipage}
\hfill
\begin{minipage}[b]{0.18\linewidth}
    \includegraphics[width=\linewidth]{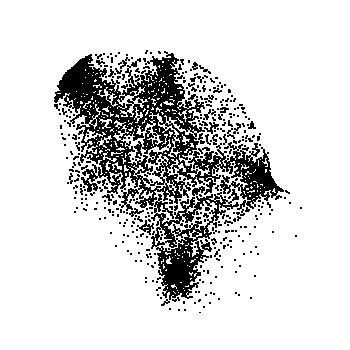}
    \caption*{i-DDPM, 2 steps}
\end{minipage}
\hfill
\begin{minipage}[b]{0.18\linewidth}
    \includegraphics[width=\linewidth]{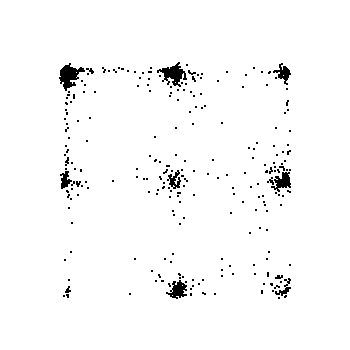}
    \caption*{i-DDPM, 10 steps}
\end{minipage}
\hfill
% \begin{minipage}[b]{0.16\linewidth}
%     \includegraphics[width=\linewidth]{}
%     \caption*{DDPM, 100 steps}
% \end{minipage}
% \hfill
\begin{minipage}[b]{0.18\linewidth}
    \includegraphics[width=\linewidth]{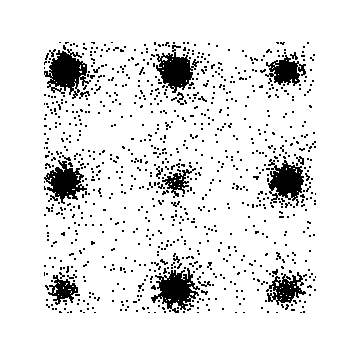}
    \caption*{RHMC, 2 steps}
\end{minipage}
\hfill
\begin{minipage}[b]{0.18\linewidth}
    \includegraphics[width=\linewidth]{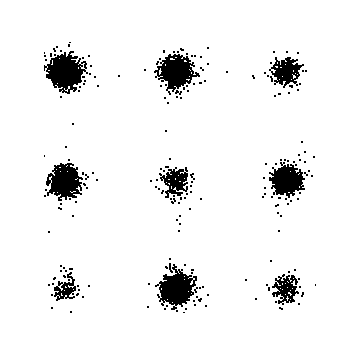}
    \caption*{RHMC, 10 steps}
\end{minipage}

\centering
% First row of images for the 'rose' dataset
\begin{minipage}[b]{0.18\linewidth}
    \includegraphics[width=\linewidth]{img/true_rose.png}
    \caption*{Rose dataset}
\end{minipage}
\hfill
\begin{minipage}[b]{0.18\linewidth}
    \includegraphics[width=\linewidth]{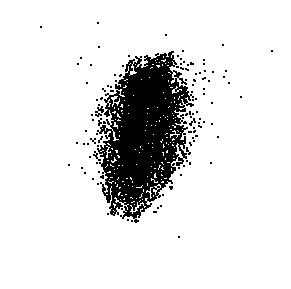}
    \caption*{i-DDPM, 2 steps}
\end{minipage}
\hfill
\begin{minipage}[b]{0.18\linewidth}
    \includegraphics[width=\linewidth]{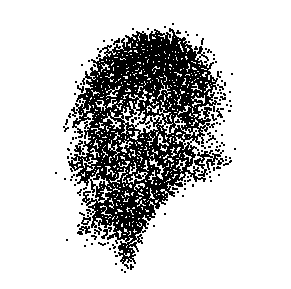}
    \caption*{i-DDPM, 10 steps}
\end{minipage}
\hfill
\begin{minipage}[b]{0.18\linewidth}
    \includegraphics[width=\linewidth]{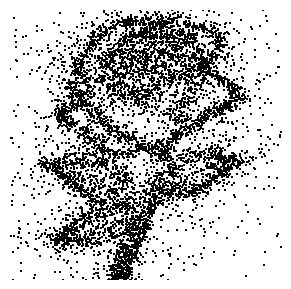}
    \caption*{RHMC, 2 steps}
\end{minipage}
\hfill
\begin{minipage}[b]{0.18\linewidth}
    \includegraphics[width=\linewidth]{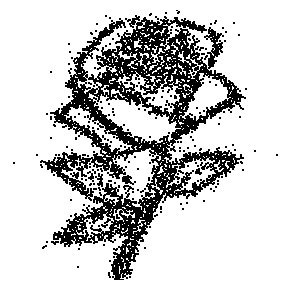}
    \caption*{RHMC, 10 steps}
\end{minipage}
% \hfill
% \begin{minipage}[b]{0.16\linewidth}
%     \includegraphics[width=\linewidth]{}
%     \caption*{50 steps}
% \end{minipage}
% \hfill
% \begin{minipage}[b]{0.16\linewidth}
%     \includegraphics[width=\linewidth]{}
%     \caption*{50 steps}
% \end{minipage}

\caption{Comparing RHMC and i-DDPM for small number of reverse steps.}
\label{fig:rhmc_vs_diffusion_gaussian_grid}
\end{figure}

\section{Discussion and conclusions}
We have introduced new generative models based on piecewise deterministic Markov processes, developing a theoretically sound framework with specific focus on three PDMPs from the sampling literature. While this work lays the foundations of this class of methods, it also opens several directions worth investigating in the future. 

Similarly to other generative models, our PDMP based algorithms are sensitive to the choice of the network architecture that is used to approximate the backward characteristics. Therefore, it is crucial to investigate which architectures are most suited for our algorithms in order to achieve state of the art performance in real world scenarios. For instance, in the case of BPS and RHMC it could be beneficial to separate the estimation of the density ratios and the generation of draws of the velocity conditioned on the position and time. 
For the case of ZZP, efficient techniques to learn the network in a high dimensional setting need to be investigated, while network architectures that resemble those used to approximate the score function appear to adapt well to the case of density ratios. Moreover, there are several alternative PDMPs that could be used as generative models and that we did not consider in detail in this paper, as for instance variance exploding alternatives.

\newpage
\begin{ack}

AB, EM, AD are funded by the European Union (ERC-2022-SyG, 101071601). US and DS are funded by the European Union (ERC, Dynasty, 101039676). Views and opinions expressed are however those of the authors only and do not necessarily reflect those of the European Union or the European Research Council Executive Agency. Neither the European Union nor the granting authority can be held responsible for them.
US is additionally funded by the French government under management of Agence Nationale de la Recherche as part of the "Investissements d’avenir" program, reference ANR-19-P3IA-0001 (PRAIRIE 3IA Institute).
The authors are grateful to the CLEPS infrastructure from the Inria of Paris for providing resources and support.
\end{ack}

% \newpage
% \section*{References}

% References follow the acknowledgments in the camera-ready paper. Use unnumbered first-level heading for
% the references. Any choice of citation style is acceptable as long as you are
% consistent. It is permissible to reduce the font size to \verb+small+ (9 point)
% when listing the references.
% Note that the Reference section does not count towards the page limit.

\bibliographystyle{plainnat}
\bibliography{arxival_final.bib}

%%%%%%%%%%%%%%%%%%%%%%%%%%%%%%%%%%%%%%%%%%%%%%%%%%%%%%%%%%%%

\newpage

\appendix
\section*{Appendix}

The Appendix is organised as follows. \Cref{appendix:timereversals} includes the details and the proofs regarding \Cref{sec:pdmp_definitions} and \Cref{sec:time_reversals_pdmps}. \Cref{sec:density_ratio_matching_app} contains details and proofs regarding the framework of density ratio matching. 
\Cref{app:simulation} gives details and pseudo-codes for the exact simulation of our forward processes (see \Cref{app:simulation_forward}), and also for the splitting schemes that are used to approximate the backward processes (see \Cref{sec:splitting_schemes}). \Cref{sec:training} details the algorithms used to train our generative models and outlines the computational differences with the popular framework of denoising diffusion models. 
\Cref{sec:about_theorem_errorbound} contains the proof for \Cref{thm:errorbounds_ctstime} and some related details Finally, \Cref{app:experimental_setup} contains the details on the numerical simulations, as well as additional results.

\section{PDMPs and their time reversals}\label{appendix:timereversals}

\subsection{Construction of a PDMP}\label{sec:pdmp_construction}
Here we describe the formal construction of a PDMP with the characteristics $(\vf,\lambda,Q)$.
To this end, consider the differential flow $\varphi:(t,s,z)\mapsto \varphi_{t,t+s}(z)$, which solves the ODE, $\dd z_{t+s} = \vf(t+s,z_{t+s}) \dd s$ for $s\geq 0$, i.e. $z_{t+s}=\varphi_{t,t+s}(z_t)$. We define by recursion on $n\in\nset$ the process on $(Z_{t})_{t\in\ccint{0,\rmT_n}}$ on $\ccint{0,\rmT_n}$ and the increasing sequence of jump times $(T_n)_{n\in\nset}$ starting from an initial state $Z_0$ and setting $\rmT_0 = 0$. Assume that $(\rmT_i)_{i\in\iint{0}{n}}$ and $(Z_t)_{t \in \ccint{0,\rmT_n}}$ are defined for some $n \in\nset$. We now define $(Z_t)_{t \in \ccint{\rmT_n,\rmT_{n+1}}}$. First, we define
\begin{equation}\label{eq:event_time_formal}
    \tau_{n+1}=\inf\left\{t>0: \int_0^t \lambda(T_n + u,\varphi_{\rmT_n,\rmT_n+u}(Z_{\rmT_n})) \dd u \geq E_{n+1}\right\}
\end{equation}
where $E_{n+1}\sim \text{Exp}(1)$, and set the $n+1$-th jump time $\rmT_{n+1}= \rmT_n+\tau_{n+1}$. The process is then defined on $\coint{\rmT_{n},\rmT_{n+1}}$ by $Z_{\rmT_n+t} = \varphi_{\rmT_n,\rmT_n+t}(Z_{\rmT_n})$ for $t\in[0,\tau_{n+1})$. Finally, we set $Z_{\rmT_{n+1}} \sim Q(T_{n+1}, \varphi_{\rmT_n,\rmT_n+\tau_{n+1}}(Z_{\rmT_n}),\cdot)$. 
The process $(Z_t)_{t\geq 0}$ is a Markov process by \citep[Theorem 7.3.1]{jacobsen2005point}.

\subsection{Construction of a PDMP with multiple jump types}\label{sec:pdmp_many jumps_construction}
In this section we describe the formal construction of a non-homogeneous PDMP with the characteristics $(\vf,\lambda,Q)$ where $\lambda,Q$ are of the form
\begin{equation}\label{eq:manyjumps}
	\lambda(t,z) = \sum_{i=1}^{\ell} \lambda_i(t,z)\eqsp , \quad Q(t,z,\dd z') = \sum_{i=1}^{\ell} \frac{\lambda_i(t,z)}{\lambda(t,z)} Q_i(t,z,\dd z') \eqsp .
\end{equation}
Recall the differential flow $\varphi:(t,s,z)\mapsto \varphi_{t,t+s}(z)$, which solves the ODE, $\dd z_{t+s} = \vf(t+s,z_{t+s}) \dd s$ for $s\geq 0$, i.e. $z_{t+s}=\varphi_{t,t+s}(z_t)$. Similarly to the case of one type of jump only, we start the PDMP from an initial state $Z_0$, assume it is defined as $(Z_{t})_{t\in\ccint{0,\rmT_n}}$ on $\ccint{0,\rmT_n}$ for some $n \in\nset$, and we now define define $(Z_t)_{t \in \ccint{\rmT_n,\rmT_{n+1}}}$.
First, we define the proposals $(\tau^i_{n+1})_{i\in\iint{1}{\ell}}$ for next event time as
\begin{equation}\label{eq:proposals_event_time}
    \tau^i_{n+1}=\inf\left\{t>0: \int_0^t \lambda_i(T_n + u,\varphi_{\rmT_n,\rmT_n+u}(Z_{\rmT_n})) \dd u \geq E_{n+1}^i\right\}
\end{equation}
where $E_{n+1}^i \sim \text{Exp}(1)$ for $i\in\iint{1}{\ell}.$ Then define $i^*=\argmin_{i\in\iint{1}{\ell}}\tau^i_{n+1}$ and set the next jump time to $$T_{n+1} = T_n + \tau^{i^*}_{n+1}.$$ The process is then defined on $\coint{\rmT_{n},\rmT_{n+1}}$ by $Z_{\rmT_n+t} = \varphi_{\rmT_n,\rmT_n+t}(Z_{\rmT_n})$ for $t\in[0,\tau_{n+1})$. Finally, we set $Z_{\rmT_{n+1}} \sim Q_{i^*}(T_{n+1}, \varphi_{\rmT_n,\rmT_n+\tau_{n+1}}(Z_{\rmT_n}),\cdot)$.

\subsection{Extended generator}

In order to obtain the generator of a PDMP, Theorem 26.14 of \citet{Davis1993} requires "standard conditions" on the characteristics (see conditions (24.8) in \citet{Davis1993}). We state these conditions for a non-homogeneous PDMP in the next assumption.  
\begin{assumption}\label{ass:standard_conditions_davis}
    The non-homogeneous characteristics $(\vf,\lambda,Q)$ satisfy the following conditions:
    \begin{enumerate}
        \item $\vf$ is locally Lipschitz and the associated flow map $\varphi$ has infinite explosion time;
        \item $\lambda$ is such that $u\mapsto \lambda(\varphi_{t,t+u}(x))$ is integrable on $[0,\varepsilon(x,t))$ for some $\varepsilon(x,t)>0$ and all $(t,x) \in\R_+\times\rmE$.
        \item $Q$ is measurable and such that $Q(t,x,\{x\}) = 0$ for all $(t,x) \in\R_+\times\rmE$.
        \item Let $(\rmT_n)_{n\in\{0,1,\dots\}}$ be the random sequence of event times of the PDMP and define $N_t = \sum_{k=0}^\infty \1_{t\geq \rmT_k}$. It holds that $\mathbb{E}_x[N_t]<\infty$ for all $(t,x) \in\R_+\times\rmE$.
    \end{enumerate}
    % The characteristics $(\vf,\lambda,Q)$ define a non-explosive PDMP and for any $t \in\rset_+$, $\PE[Z_t]<\plusinfty$, 
\end{assumption}
Notably, the PDMP is required to be non-explosive in the sense that the expected number of random events after any time $t$ starting the PDMP from any state should be finite. These conditions are verified for all the three PDMPs we consider as forward processes.
Assuming \Cref{ass:standard_conditions_davis} we can apply Theorem 26.14 in \citet{Davis1993} to the homogeneous PDMP obtained including the time variable, which gives that the extended generator of the non-homogeneous PDMP with characteristics $(\vf,\lambda, Q)$ is given by
\begin{equation}\label{eq:genPDMP}
\scrL_t f(z) = \langle \vf(t,z), \nabla_zf(z)\rangle + \lambda(t,z) \int_{\rset^d} (f(y)-f(z)) Q(t,z,\dd y) \eqsp,
\end{equation}
for all functions $f\in \dom(\scrL_t)$, that is the space of measurable functions such that
\[M_t^f = f(Z_t) - f(Z_0) - \int_0^t \scrL_s f(Z_s)\dd s  \]
is a local martingale. 
We also introduce the Carr\'e du champ $\Gamma_t (f,g) := \scrL_t(fg) - f\scrL_t g - g \scrL_t f,$ with domain $\dom(\Gamma_t) := \{f,g: f,g,fg\in \dom(\scrL_t)\}$  which in the case of a PDMP with generator \eqref{eq:genPDMP} takes the form
\begin{equation}\label{eq:cdc_pdmp}
    \Gamma_t (f,g)(z) = \lambda(t,z) \int_{\rset^d} (f(y)-f(z))(g(y)-g(z)) Q(t,z,\dd y) \eqsp.
\end{equation}

\subsection{Proof of  \Cref{propo:time_reversal_formula}}\label{sec:proof_prop_timereversal}
In order to prove \Cref{propo:time_reversal_formula} we apply \citet[Theorem 5.7]{Conforti_reversal_jumps} and hence in this section we verify the required assumptions. Before starting, we state the following technical condition which we omitted in \Cref{propo:time_reversal_formula} and is assumed in \citet[Theorem 5.7]{Conforti_reversal_jumps}.
\begin{assumption}
  \label{ass:generator_domain}
  It holds $\rmC_c^2(\rset^{d}) \subset \dom(\scrL_t)$ for any $t \in\rset_+$.
\end{assumption}

We now turn to verifying the remaining assumptions in \citet[Theorem 5.7]{Conforti_reversal_jumps}.
The ``General Hypotheses'' of \citet{Conforti_reversal_jumps} are satisfied since we assume the vector field $\vf$ is locally bounded, the switching rate $(t,z) \mapsto \lambda(t,z)$ is a continuous function, and the jump kernel $Q$ is such that $Q(t,x,\cdot)$ is a probability distribution. In particular these assumptions imply that
\begin{equation}\label{eq:bound_genhypotheses}
    \sup_{t\in [0,T],\lvert z\rvert \leq \rho} \int_{\rset^d} (1\wedge \lvert z-y\rvert^2) \lambda(t,z) Q(t,z,\dd y) \leq \sup_{t\in [0,T],\lvert z\rvert \leq \rho} \lambda(t,z) <\infty \quad \text{for all } \rho\geq 0.
\end{equation}
Then, \citet[Theorem 5.7]{Conforti_reversal_jumps} requires a further integrability condition, which is satisfied when 
\begin{equation}\label{eq:}
    \int_{[0,T]\times \rset^d \times \rset^d} (1\wedge \lvert z-y\rvert^2) \mu_0P_t(\dd z) \lambda(t,z) Q(t,z,\dd y) <\infty .
\end{equation}
It is then sufficient to have that 
\begin{equation}\label{eq:rate_bound}
    \int_0^{\Tf} \PE[\lambda(t,Z_t)] \dd t<\infty
\end{equation}

Finally, \citet[Theorem 5.7]{Conforti_reversal_jumps} requires some technical assumptions which we now discuss. Introduce the class of functions that are twice continuously differentiable and compactly supported, denoted by $\mathcal{C}^2_c(\rset^d)$,  and for $f\in\mathcal{C}^2_c(\rset^d)$ consider the two following conditions:
\begin{align}
    &\int_0^{T} \int_{\rset^d} \mu_0 P_t(\dd z) \lvert \scrL_t f(z) \rvert  \dd t <\infty, \label{eq:integrab_generator} \\
    & \int_0^{T} \int_{\rset^d} \mu_0 P_t(\dd z) \lvert \Gamma_t (f,g)(z) \rvert  \dd t <\infty \qquad \text{ for all } g\in \mathcal{C}^2_c(\mathsf \rset^d).\label{eq:integrab_carreduchamp}
\end{align}
We define $\mathcal F := \{f\in \mathcal{C}^2_c(\mathsf E): \eqref{eq:integrab_generator}, \eqref{eq:integrab_carreduchamp} \text{ hold }\}$. 
We need to verify that $\mathcal F \equiv \mathcal{C}^2_c(\mathsf E).$  Let us start by considering \eqref{eq:integrab_generator}: we find
\begin{align}
    & \int_0^{T} \mu_0 P_t(\dd z) \lvert \scrL_t  f(z) \rvert  \dd t \\
    & \leq \int_0^{T} \mu_0 P_t(\dd z) \left( \lvert \langle \vf(t,z), \nabla f(z)\rangle\rvert  + \lambda(t,z)  \int \lvert u(y)-u(z)\rvert Q(t,z,\dd y) \right) \dd t.
\end{align}
Since $f\in \mathcal{C}^2_c(\mathsf E)$ we have that $\lvert \langle \vf(t,z), \nabla f(z)\rangle\rvert$ is compactly supported and hence integrable, while the second term is finite assuming $\int_0^T \mathbb{E}[\lambda(t,Z_t)]\dd t <\infty.$ Under the latter assumption, \eqref{eq:integrab_carreduchamp} can be easily verified.

\subsection{Proof of \Cref{prop:reversal_characteristics}}\label{sec:proof_reversal_characteristics}
Let us denote the initial condition of the forward PDMP by $\mu_0=\mu_0^X \otimes \mu_0^V$.
First of all, notice that, for a PDMP with position-velocity decomposition and homogeneous jump kernel, the flux equation \eqref{eq:flux_equation} becomes
\begin{equation}\label{eq:flux_equation_pos_vel}
    \mu_0P_{\tilde t}(\dd y,\dd w) \overleftarrow\lambda(t,(y,w))\overleftarrow Q(t,(y,w),(\dd x,\dd v)) = \mu_0P_{\tilde t}(\dd x,\dd v) \lambda(\tilde t,(x,v)) Q((x,v),(\dd y,\dd w))
\end{equation}
where $\tilde t=\Tf-t.$
Moreover, since the jump kernel leaves the position vector unchanged we obtain that this is equivalent to
\[\mu_0P_{\tilde t}(\dd w\rvert y) \overleftarrow\lambda(t,(y,w))\overleftarrow Q(t,(y,w),(\dd x,\dd v)) = \mu_0P_{\tilde t}(\dd v\rvert y) \lambda(\tilde t,(y,v)) Q((x,v),(\dd y,\dd w)), \]
where $\mu_0P_{t}(\dd w\rvert y)$ is the conditional law of the velocity vector given the position vector at time $t$ with initial distribution $\mu_0.$

Suppose first that  $Q((y,w),(\dd x,\dd v)) = \updelta_y(\dd x) \updelta_{\scrR_y w}(\dd v)$ for an involution $\scrR_y$. Then we find 
\[\mu_0P_{\tilde t}(\dd w\rvert y)\overleftarrow\lambda(t,(y,w))\overleftarrow Q(t,(y,w),(\dd x,\dd v)) =  \mu_0P_{\tilde t}(\dd \scrR_y w\rvert y)\lambda(\tilde t,(y,\scrR_y w))\updelta_y(\dd x) \updelta_{\scrR_y w}(\dd v) \]
where we used that $\updelta_{\scrR_y w}(\dd v) = \updelta_{\scrR_y v}(\dd w)$ since $\scrR_y$ is an involution. Under our assumptions we have 
\[\mu_0 P_{ \tilde t}(\dd \scrR_y w\rvert y) = p_{\tilde t}( \scrR_y w\rvert y)\muref^V(\dd w),\quad \mu_0 P_{\tilde t}(\dd w\rvert y) = p_{\tilde t}(w\rvert y) \muref^V(\dd w),\eqsp \]
since we assumed $\muref^V(\dd w) = \muref^V(\dd \scrR_y w).$ Hence we find for any $(y,w)\in\rset^{2d}$ such that $p_{\tilde t}(w\rvert y)>0$
\[\overleftarrow\lambda(t,(y,w))\overleftarrow Q(t,(y,w),(\dd x,\dd v)) =  \frac{p_{\tilde t}(\scrR_y w\rvert y)}{p_{\tilde t}(w\rvert y)}\lambda(\tilde t,(y,\scrR_y w))\updelta_y(\dd x) \updelta_{\scrR_y w}(\dd v). \]
This can only be satisfied if 
\[\overleftarrow\lambda(t,(y,w))=  \frac{p_{\tilde t}(\scrR_y w\rvert y)}{p_{\tilde t}(w\rvert y)}\lambda(\tilde t,(y,\scrR_y w)), \quad Q(t,(y,w),(\dd x,\dd v)) =  \updelta_y(\dd x) \updelta_{\scrR_y w}(\dd v). \]

Consider now the second case, that is  $Q((y,w),(\dd x,\dd v)) = \updelta_y(\dd x) \nu(\dd v\rvert y)$ and $\lambda(t,(y,w)) = \lambda(t,y)$. 
The flux equation \eqref{eq:flux_equation} can be rewritten as
\[ \mu_0 P_{\tilde t}(\dd w\rvert y) \overleftarrow\lambda(t,(y,w))\overleftarrow Q(t,(y,w),(\dd x,\dd v)) = \mu_0 P_{\tilde t}(\dd v\rvert y) \lambda(\tilde t,y) \updelta_y(\dd x) \nu(\dd w\rvert y) \]
Under our assumptions we have $\nu(\dd w\rvert y) = (\dd \nu/\dd \muref^V)(w\rvert y) \muref^V(\dd w)$ and $ \mu_0 P_{\tilde t}(\dd w\rvert y)= p_{\tilde t}(w\rvert y) \muref^V(\dd w)$ for some measure $\muref^V$.
Hence for any $(y,w)\in\rset^{2d}$ such that $p_{\tilde t}(w\rvert y)>0$ we obtain
\begin{equation}
	\overleftarrow\lambda(t,(y,w))\overleftarrow Q(t,(y,w),(\dd x,\dd v)) = \frac{(\dd \nu/\dd \muref^V)(w\rvert y)}{p_{\tilde t}(w\rvert y)}\lambda(\tilde t,y)  p_{\tilde t}(\dd v\rvert y) \updelta_y(\dd x).\eqsp
\end{equation}
This is satisfied when 
\[\overleftarrow\lambda(t,(y,w)) = \frac{(\dd \nu/\dd \muref^V)(w\rvert y)}{p_{\tilde t}(w\rvert y)}\lambda(\tilde t,y), \quad \overleftarrow Q(t,(y,w),(\dd x,\dd v)) = \mu_0 P_{\tilde t}(\dd v\rvert y) \updelta_y(\dd x).\]

\subsection{Extension of \Cref{prop:reversal_characteristics} to multiple jump types}\label{sec:extension_reversal_characteristics}
\Cref{prop:reversal_characteristics} considers PDMPs with one type of jump, while here we discuss the case of  characteristics of the form \eqref{eq:manyjumps},
which is e.g. the case of ZZP and BPS. In this setting we can assume the backward jump rate and kernel have a similar structure, that is 
\begin{equation}
	\overleftarrow\lambda(t,z) = \sum_{i=1}^{\ell} \overleftarrow\lambda_i(t,z)\eqsp , \quad \overleftarrow Q(t,z,\dd z') = \sum_{i=1}^{\ell} \frac{\overleftarrow\lambda_i(t,z)}{\overleftarrow\lambda(t,z)} \overleftarrow Q_i(t,z,\dd z') \eqsp,
\end{equation}
in which case the balance condition \eqref{eq:flux_equation} can be rewritten as
\begin{equation}
    \mu_0 P_{\Tf-t}(\dd y) \sum_{i=1}^\ell\overleftarrow\lambda_i(t,y)\overleftarrow Q_i(t,y,\dd z) = \mu_0P_{\Tf-t}(\dd z) \sum_{i=1}^\ell \lambda_i(\Tf-t,z) Q_i(\Tf-t,z,\dd y)\eqsp.
\end{equation}
It is then enough that 
\begin{equation}
        \mu_0 P_{\Tf-t}(\dd y) \overleftarrow\lambda_i(t,y)\overleftarrow Q_i(t,y,\dd z) = \mu_0 P_{\Tf-t}(\dd z) \lambda_i(\Tf-t,z) Q_i(\Tf-t,z,\dd y)\eqsp
\end{equation}
holds for all $i\in\{1,\dots,\ell\}.$ It follows that it is sufficient to apply \Cref{prop:reversal_characteristics} to each pair $(\lambda_i,Q_i)$ to obtain $(\overleftarrow\lambda_i,\overleftarrow Q_i)$ such that \eqref{eq:flux_equation} holds.

\subsection{Time reversals of ZZP, BPS, and RHMC}\label{sec:proof_characteristics_zzp_bps_hmc}

In this section we give rigorous statements regarding time reversals of ZZP, BPS, and RHMC. For all samplers we rely on \Cref{prop:reversal_characteristics} and hence we focus on verifying its assumptions. In the cases of ZZP and RHMC we assume the technical condition \Cref{ass:generator_domain} since proving it rigorously is out of the scope of the present paper. We remark that this can be proved with techniques as in \citet{pdmp_inv_meas}, which show \Cref{ass:generator_domain} in the case of BPS.

% For ZZP and BPS we assume the following condition on the potential of the stationary distribution for the position vector of the forward process. This is satisfied e.g. by any multivariate normal distribution.
% \begin{assumption}\label{ass:dominated_Hessian}
%     $\pot\in\mathcal{C}^2(\rset^d)$ and $\nabla^2\pot$ is dominated by a positive definite matrix $M\in\rset^{d\times d}$, that is it holds that $\langle u,\nabla^2 \pot(x) u\rangle \leq \langle u,M u\rangle$ for all $u\in \rset^d.$
% \end{assumption}
\begin{proposition}[Time reversal of ZZP]\label{propo:timereversal_ZZP}
    Consider a ZZP $(X_t,V_t)_{t\in[0,\Tf]}$ with initial distribution $\mustar\otimes \nu$, where $\nu=\unif(\{\pm 1\}^d)$ and invariant distribution $\pi\otimes \nu$, where $\pi$ has potential $\pot$ satisfying \Cref{ass:dominated_Hessian}.  Assume that \Cref{ass:generator_domain} holds and that $\int \mustar(\dd x) \lvert \partial_i\pot(x)\rvert <\infty$ for all $i=1,\dots,d.$
    % Suppose that for any $(y,w) \in\rset^{d}\times\{\pm 1\}^d$, $p_t(w | y )$ is positive if and only if $p_t( \scrR_i^{\rmZ} w |y)$ is positive. 
    Then the time reversal of the ZZP has vector field \[\overleftarrow \vf^{\rmZ}(x,v)=(-v,0)^T\] and jump rates and kernels are given for all $(y,w)\in\rset^{2d}$ such that $P_{\Tf-t}(w\rvert y)>0$ by
\begin{align}
    \overleftarrow\lambda_{i}^{\rmZ}(t,(y,w)) = \frac{p_{\Tf-t}(\scrR_i^{\rmZ}w\rvert y)}{p_{\Tf-t}(w\rvert y)} \lambda^{\rmZ}_i(y,\scrR_i^{\rmZ} w),  \quad \overleftarrow Q_{i}^{\rmZ}((y,w),(\dd x,v)) = \updelta_{(y,\scrR^{\rmZ}_iw)}(\dd x,v) 
    % = Q_{i}((y,w),(\dd x,v))
\end{align}
for $i=1,\dots,d$.
\end{proposition}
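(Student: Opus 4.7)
The plan is to derive \Cref{propo:timereversal_ZZP} as a direct application of \Cref{prop:reversal_characteristics} part (1) to each of the $d$ jump mechanisms of the ZZP, combined with the extension to multiple jump types recalled in \Cref{sec:extension_reversal_characteristics}. The formula for the reversed vector field $\overleftarrow{\vf}^{\rmZ}(x,v)=(-v,0)^{\transpose}$ is then immediate from $\overleftarrow{\vf} = -\vf$ in \Cref{propo:time_reversal_formula}. The substance of the proof therefore reduces to checking that the hypotheses of \Cref{propo:time_reversal_formula} and \Cref{prop:reversal_characteristics} are satisfied for the ZZP under \Cref{ass:dominated_Hessian} and \Cref{ass:generator_domain}.

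First I would verify the structural assumptions. Non-explosivity of ZZP is standard \citep{ZZ,Bierkensergodicity}, and \Cref{ass:standard_conditions_davis} can be checked exactly as in \citet{Davis1993}. The vector field $\Phi^{\rmZ}(x,v) = (v,0)^{\transpose}$ is locally bounded (in fact bounded, since $v\in\{\pm 1\}^d$), and the rate $(t,(x,v))\mapsto \lambda_i^{\rmZ}(x,v) = (v_i\partial_i\pot(x))_+ + \lambda_r$ is continuous under \Cref{ass:dominated_Hessian} since $\pot \in \mathcal{C}^2(\rset^d)$. The integrability of the total rate along trajectories requires showing that $\int_0^{\Tf} \mathbb{E}[\lambda_i^{\rmZ}(X_t,V_t)]\,\dd t < \infty$ for each $i$; for this I would use that $|V_{i,t}|=1$ and that \Cref{ass:dominated_Hessian} ensures $\nabla\pot$ is globally Lipschitz with some constant $L$, so that
\[
\mathbb{E}\bigl[|\partial_i\pot(X_t)|\bigr] \leq \mathbb{E}\bigl[|\partial_i\pot(X_0)|\bigr] + L\sqrt{d}\,t,
\]
where $|X_t - X_0|\leq \sqrt{d}\,t$ comes from $\|V_s\|\leq \sqrt{d}$. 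Combined with the assumption $\int |\partial_i\pot|\,\dd\mustar < \infty$, this yields \eqref{eq:rate_bound}. \Cref{ass:generator_domain} is assumed directly.

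Next I would identify the correct reference measure. Taking $\muref^V$ to be the counting measure on $\{\pm 1\}^d$ (or equivalently $2^d \nu$), the conditional law of $V_t$ given $X_t$ admits a density $(x,v)\mapsto p_t(v\mid x) = \mathbb{P}(V_t = v \mid X_t = x)$ with respect to $\muref^V$. The coordinate-flip operator $\scrR_i^{\rmZ}$ is an involution on $\{\pm 1\}^d$ and trivially preserves $\muref^V$, since $\muref^V(\dd \scrR_i^{\rmZ} w) = \muref^V(\dd w)$ by symmetry. The jump kernels $Q_i^{\rmZ}((x,v),(\dd y,\dd w)) = \updelta_{(x,\scrR_i^{\rmZ}v)}(\dd y,\dd w)$ thus fit exactly the template of case (1) in \Cref{prop:reversal_characteristics}.

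With these verifications in place, applying \Cref{prop:reversal_characteristics}(1) to each pair $(\lambda_i^{\rmZ}, Q_i^{\rmZ})$ and combining through \Cref{sec:extension_reversal_characteristics} yields the announced expressions for $\overleftarrow{\lambda}_i^{\rmZ}$ and $\overleftarrow{Q}_i^{\rmZ}$. The main obstacle in this proof is the integrability condition \eqref{eq:rate_bound}: this is where \Cref{ass:dominated_Hessian} plays its key role, since without a Lipschitz control on $\nabla \pot$ one cannot readily propagate the moment bound on $\partial_i \pot(X_0)$ to times $t>0$. All other verifications are structural and follow immediately from the definition of ZZP and the discreteness of its velocity space.
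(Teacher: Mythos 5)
Your proposal is correct and follows essentially the same route as the paper's proof: apply \Cref{prop:reversal_characteristics}(1) to each pair $(\lambda_i^{\rmZ},Q_i^{\rmZ})$ via the multiple-jump extension, verify the rate integrability \eqref{eq:rate_bound} through the Taylor/Lipschitz bound $\PE[|\partial_i\pot(X_t)|]\leq \PE[|\partial_i\pot(X_0)|]+t\sqrt{d}\sup_x\|\nabla^2\pot(x)\|$ using $|X_t-X_0|\leq t\sqrt{d}$, and take the counting measure on $\{\pm 1\}^d$ as the $\scrR_i^{\rmZ}$-invariant reference measure. The only cosmetic difference is the citation used for non-explosivity.
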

\begin{proof}
We verify the conditions of \Cref{prop:reversal_characteristics} corresponding to deterministic transitions and rely on \Cref{sec:extension_reversal_characteristics} to apply the proposition to each pair $ (\lambda^{\rmZ}_i, Q^{\rmZ}_i)$.
First notice the vector field $\vf(x,v) = (v,0)^T$ is clearly locally bounded and $(t,x)\mapsto \lambda(x,v)$ is continuous since $\pot$ is continuously differentiable. Moreover, the ZZP can be shown to be non-explosive applying \citet[Proposition 9]{pdmp_inv_meas}.
Then, we need to verify \eqref{eq:rate_bound}.
First, observe that $\PE[\lambda_i(X_t,V_t)] \leq \PE[\lvert \partial_i\pot(X_t)\rvert].$ Then 
\begin{align}
    \PE[\lvert \partial_i\pot(X_t)\rvert] &= \PE\left[\left\lvert \partial_i\pot(X_0) + \int_0^1 \langle X_t-X_0, \nabla \partial_i \pot(X_0 + s (X_t-X_0))\rangle \dd s \right\rvert \right] \\
    &\leq \PE [\lvert \partial_i\pot(X_0)\rvert] + \PE\left[\int_0^1 \lvert \langle X_t-X_0, \nabla^2 \pot(X_0 + s (X_t-X_0)) \mathsf e_i\rangle \rvert\dd s \right] 
\end{align}
where $\mathsf e_i$ is the $i$-th vector of the canonical basis. Notice that $\lvert X_t-X_0\rvert \leq t \sqrt{d}$. Thus we find
\begin{align}
    \PE[\lvert \partial_i\pot(X_t)\rvert]
    &\leq \PE[\lvert \partial_i\pot(X_0)\rvert] + t \sqrt{d} \sup_{x\in\rset^d}\lVert \nabla^2 \pot(x)\rVert
\end{align}
and therefore 
\begin{align}
    \int_0^{\Tf} \PE[\lambda(X_t,V_t)] \dd t \leq  \Tf \sum_{i=1}^d \left(\PE\lvert \partial_i\pot(X_0)\rvert + \frac{\Tf} 2 \sqrt{d} \sup_{x\in\rset^d}\lVert \nabla^2 \pot(x)\rVert\right).
\end{align}
Since $\PE_{\mustar} \lvert \partial_i\pot(X)\rvert <\infty$ and because we are assuming \Cref{ass:dominated_Hessian}, we obtain \eqref{eq:rate_bound}.
Finally, notice that $P_t(\dd v\rvert x)$ is absolutely continuous with respect to the counting measure on $\{1,-1\}^d$, which is clearly invariant with respect to $\scrR^{\rmZ}_i$.
\end{proof}

\begin{proposition}[Time reversal of BPS]\label{propo:timereversal_BPS}
    Consider a BPS $(X_t,V_t)_{t\in[0,\Tf]}$ with initial distribution $\mustar\otimes \nu$, where $\nu=\unif(\sphere^{d-1})$, and invariant distribution $\pi\otimes \nu$, where $\pi$ has potential $\pot$ satisfying \Cref{ass:dominated_Hessian}.  
    Assume that $\PE_{\mustar}[\vert \nabla \pot(X)\rvert]<\infty.$
    Then there exists a density $p_t(w \rvert y):=\nicefrac{\dd (\mu_0 P_{t})(\dd w\rvert y)}{\nu(\dd w)}.$
    Moreover, the time reversal of the BPS has vector field \[\overleftarrow \vf^{\rmB}(x,v)=(-v,0)^T,\]
    while the jump rates and kernels are given for all $t,y,w\in[0,\Tf]\times \rset^{2d}$ such that $p_{\Tf-t}(w\rvert y)>0$ by 
\begin{align}
    &\overleftarrow\lambda_{1}^{\rmB}(t,(y,w)) = \frac{p_{\tilde t}(\scrR^{\rmB}_yw\rvert y)}{p_{\tilde t}(w\rvert y) } \lambda_1^{\rmB}(y,\scrR^{\rmB}_yw),  \,\, \overleftarrow Q_{1}^{\rmB}((y,w),(\dd x,\dd v)) = \updelta_{(y,\scrR^{\rmB}_yw)}(\dd x,\dd v),\\
    & \overleftarrow\lambda_{2}^{\rmB}(t,(y,w)) = \lambda_r\frac{1}{p_{\Tf-t}(w | y)}, \quad \overleftarrow Q_{2}^{\rmB}(t,(y,w),(\dd x,\dd v)) = \mu_0 P_{\Tf-t}(\dd v\rvert y)  \updelta_y(\dd x) \dd v, \label{eq:refresh_rate_unifsphere}
\end{align}
where $\tilde t = \Tf-t$.
\end{proposition}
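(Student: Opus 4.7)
The plan is to reduce the statement to \Cref{prop:reversal_characteristics}, applying it separately to the reflection and refreshment mechanisms along the lines of \Cref{sec:extension_reversal_characteristics}. Throughout, I take the reference measure for the velocity to be $\muref^V = \nu = \unif(\sphere^{d-1})$, so that the conditional density $p_t(w\rvert y)$ is with respect to $\nu$.

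First, I would verify the hypotheses of \Cref{propo:time_reversal_formula}. The vector field $\vf^{\rmB}(x,v)=(v,0)^{\transpose}$ is locally (in fact globally) bounded on $\rset^d \times \sphere^{d-1}$. Continuity of $\lambda_1^{\rmB}+\lambda_2^{\rmB}=\langle v,\nabla\pot(x)\rangle_+ +\lambda_r$ follows from $\pot\in\mathcal{C}^2$ under \Cref{ass:dominated_Hessian}. The integrability condition \eqref{eq:rate_bound} is obtained by mimicking the ZZP argument in \Cref{propo:timereversal_ZZP}: since $\lvert V_t\rvert = 1$, one has $\lambda_1^{\rmB}(X_t,V_t)\leq \lvert \nabla\pot(X_t)\rvert$, and a Taylor expansion together with $\lvert X_t-X_0\rvert \leq t$ and $\sup_x \normLigne{\nabla^2\pot(x)}<\infty$ reduces the bound to $\E_{\mustar}[\lvert\nabla\pot(X)\rvert]<\infty$, which is assumed. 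Non-explosion of BPS is known from the literature (\eg, \citet{pdmp_inv_meas}), and \Cref{ass:generator_domain} is assumed or established there.

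Second, I would address the existence of the conditional density $p_t(w\rvert y)$ with respect to $\nu$, which is the main technical point. The key idea is that the refreshment mechanism fires with strictly positive rate $\lambda_r>0$: on the event that at least one refreshment has occurred in $(0,t)$ (whose probability is $1-\rme^{-\lambda_r t}>0$), the velocity is re-sampled from $\nu$ and then transformed by the deterministic flow plus finitely many reflections, all of which are orthogonal (Householder) maps on $\sphere^{d-1}$ that preserve $\nu$. Conditioning on the last refreshment time and trajectory before $t$, the resulting conditional law of $V_t$ given $X_t=y$ is absolutely continuous with respect to $\nu$. I would invoke or sketch a regularity argument of this flavor to produce the density $p_t(w\rvert y)$.

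Third, I apply \Cref{prop:reversal_characteristics} to each jump type. For the reflection mechanism, $\scrR^{\rmB}_y$ is a Householder reflection across the hyperplane orthogonal to $\nabla\pot(y)$: it is an involution ($(\scrR^{\rmB}_y)^2=\mathrm{Id}$) and an orthogonal transformation, hence preserves the uniform measure on $\sphere^{d-1}$, so $\nu(\dd \scrR^{\rmB}_y w)=\nu(\dd w)$. Part (1) of \Cref{prop:reversal_characteristics} then yields
\begin{equation*}
\overleftarrow\lambda_{1}^{\rmB}(t,(y,w)) = \frac{p_{\Tf-t}(\scrR^{\rmB}_y w\rvert y)}{p_{\Tf-t}(w\rvert y)}\,\lambda_1^{\rmB}(y,\scrR^{\rmB}_y w), \quad \overleftarrow Q_{1}^{\rmB}((y,w),(\dd x,\dd v))=\updelta_{(y,\scrR^{\rmB}_y w)}(\dd x,\dd v).
\end{equation*}
For the refreshment mechanism, $\lambda_2^{\rmB}(t,(y,w))=\lambda_r$ depends only on $y$, and $Q_2^{\rmB}((y,w),(\dd x,\dd v))=\updelta_y(\dd x)\nu(\dd v)$ has density $\dd\nu/\dd\muref^V\equiv 1$ with respect to $\muref^V=\nu$. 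Part (2) of \Cref{prop:reversal_characteristics} then gives the stated refreshment rate $\lambda_r/p_{\Tf-t}(w\rvert y)$ and kernel $\mu_0 P_{\Tf-t}(\dd v\rvert y)\updelta_y(\dd x)$. Finally, the reversed vector field is $\overleftarrow\vf^{\rmB}(x,v)=-(v,0)^{\transpose}=(-v,0)^{\transpose}$, as dictated by \Cref{propo:time_reversal_formula}. The hardest step is the density existence argument on $\sphere^{d-1}$; the other verifications are essentially routine given the structure of BPS.
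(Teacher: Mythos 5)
Your proposal follows essentially the same route as the paper's proof: verify non-explosion and \Cref{ass:generator_domain} via \citet{pdmp_inv_meas}, establish \eqref{eq:rate_bound} by the same Taylor-expansion argument using $\lvert V_t\rvert=1$ and \Cref{ass:dominated_Hessian}, note that $\scrR^{\rmB}_y$ is a $\nu$-preserving involution, and then apply parts (1) and (2) of \Cref{prop:reversal_characteristics} to the reflection and refreshment mechanisms separately. The only difference is cosmetic: the paper simply asserts the absolute continuity of $P_t(\dd v\rvert x)$ with respect to $\unif(\sphere^{d-1})$ without proof, whereas you sketch a (still incomplete, since conditioning on $X_t=y$ is the delicate part) heuristic for it; neither treatment is fully rigorous on this point.
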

\begin{remark}
    Under the assumption that $\nu$ is the uniform distribution on the sphere, it is natural to take $\muref^\rmV = \nu$, which gives that $\nicefrac{\rmd \nu}{\rmd \muref^V} = 1$ and hence the backward refreshment rate is as in \eqref{eq:refresh_rate_unifsphere}. When $\nu$ is the $d$-dimensional Gaussian distribution, the natural choice is to let $\muref^\rmV$ be the Lebesgue measure and hence we obtain a rate as given in \eqref{eq:back_refresh_BPS}.
\end{remark}

\begin{proof}
We verify the general conditions of \Cref{prop:reversal_characteristics}, then focusing on the deterministic jumps and the refreshments relying on \Cref{sec:extension_reversal_characteristics}.
The BPS was shown to be non-explosive for any initial distribution in \citet[Proposition 10]{pdmp_inv_meas}.
Since $\lambda(t,(x,v)) = \lambda(x,v)=\langle v, \nabla \pot(x)\rangle_+$, with a similar reasoning of the proof of \Cref{propo:timereversal_ZZP} we have
\begin{align}
    \PE[\lambda(X_t,V_t)] = \PE\left[\left(\langle V_t, \nabla \pot(X_0)\rangle + \int_0^1 \langle V_t, \nabla^2 \pot(X_0 + s(X_t-X_0)) (X_t-X_0)\rangle \dd s \right)_+ \right].
\end{align}
Taking advantage of $\lvert V_t\rvert=1$ we have $\lvert X_t-X_0\rvert \leq t$ and thus we find
\begin{align}
    \PE[\lambda(X_t,V_t)] & \leq  \PE\left[\vert \nabla \pot(X_0)\rvert + \int_0^1 \lvert \nabla^2 \pot (X_0 + s(X_t-X_0)) (X_t-X_0)\rvert \dd s   \right]\\
    & \leq \PE\left[\vert \nabla \pot(X_0)\rvert\right] + t \sup_{x\in\rset^d}\lVert \nabla^2 \pot(x)\rVert.
\end{align}
This is sufficient to obtain \eqref{eq:rate_bound} since $\PE[\vert \nabla \pot(X_0)\rvert]<\infty$ and we assume \Cref{ass:dominated_Hessian}. Moreover, \Cref{ass:generator_domain} holds by \citet[Proposition 23]{pdmp_inv_meas}.
Finally notice that $P_t(\dd v\rvert x)$ is absolutely continuous with respect to $\muref^{\rmV}=\unif(\sphere^{d-1})$, which satisfies $\muref^{\rmB}(\scrR^{\rmB}(x) v)=\muref^{\rmB}(v)$ for all $x,v\in\rset^d\times\sphere^{d-1}$.
All the required assumptions in \Cref{prop:reversal_characteristics} are thus satisfied.
\end{proof}
\begin{proposition}[Time reversal of RHMC]\label{propo:timereversal_HMC}
    Consider a RHMC $(X_t,V_t)_{t\in[0,\Tf]}$ with initial distribution $\mustar\otimes \nu$, where $\nu$ is the $d$-dimensional standard normal distribution, and invariant distribution $\pi\otimes \nu$, where $\pi$ has potential $\pot\in\mathcal{C}^1(\rset^d)$.
    Suppose that \Cref{ass:generator_domain} holds and that for any $y \in\rset^{d}$, $P_t(\dd w | y )$ is absolutely continuous with respect to Lebesgue measure, with density $p_t(w\rvert y)$. Then the time reversal of the RHMC has vector field 
    \[\overleftarrow \vf^{\rmH}(x,v)=(-v,\nabla \pot(x))^T,\] while the jump rates and kernels are given for all $(y,w)\in\rset^{2d}$ such that $p_{\Tf-t}(w\rvert y)>0$ by 
\begin{align}
    & \overleftarrow\lambda_{2}^{\rmH}(t,(y,w)) = \lambda_r\frac{\nu(w)}{p_{\Tf-t}(w | y)}, \quad \overleftarrow Q_{2}^{\rmH}(t,(y,w),(\dd x,\dd v)) = p_{\Tf-t}(v\rvert y)  \updelta_y(\dd x) \dd v.
\end{align}
\end{proposition}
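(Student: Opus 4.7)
The plan is to apply \Cref{prop:reversal_characteristics}, case (2) (Refreshments), to RHMC. Since RHMC has a single jump type—velocity refreshment with constant rate $\lambda_r$ and kernel $Q^{\rmH}((y,w),(\dd x,\dd v)) = \updelta_y(\dd x)\nu(\dd v)$—the application is direct, with the standard normal density $\nu$ playing the role of the refreshment distribution, and Lebesgue measure as the reference measure $\muref^V$ on $\rset^d$.

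First I would verify the assumptions of \Cref{propo:time_reversal_formula}. The vector field $\vf^{\rmH}(x,v)=(v,-\nabla\pot(x))^\transpose$ is locally bounded since $\pot\in\mathcal{C}^1(\rset^d)$. The rate $\lambda^{\rmH}\equiv \lambda_r$ is trivially continuous in $(t,z)$. The integrability condition on the rate is immediate:
\begin{equation*}
\int_0^{\Tf} \E[\lambda^{\rmH}(t,Z_t)]\dd t = \lambda_r \Tf < \infty\eqsp.
\end{equation*}
Non-explosion of RHMC is also straightforward: because the rate is a constant $\lambda_r$, the sequence of event times is a homogeneous Poisson process of rate $\lambda_r$, which almost surely diverges to $+\infty$. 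This also yields the standard conditions \Cref{ass:standard_conditions_davis} (the flow associated with Hamiltonian dynamics has infinite explosion time under standard regularity, and $\E_x[N_t]=\lambda_r t<\infty$). \Cref{ass:generator_domain} is assumed.

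The backward vector field is then given directly by \Cref{propo:time_reversal_formula}: $\overleftarrow\vf^{\rmH}(x,v) = -\vf^{\rmH}(x,v) = (-v,\nabla\pot(x))^\transpose$, which matches the claim. For the refreshment jump rate and kernel, I would apply \Cref{prop:reversal_characteristics}, case (2): we have $\lambda^{\rmH}(t,(y,w))=\lambda_r$ depending neither on $t$ nor on $w$, and the kernel $Q^{\rmH}((y,w),(\dd x,\dd v)) = \updelta_y(\dd x)\nu(\dd v)$, where $\nu$ does not depend on $y$. Choosing $\muref^V$ to be Lebesgue measure on $\rset^d$, the Radon--Nikodym derivative $(\rmd\nu/\rmd\muref^V)(w\rvert y) = \nu(w)$ is exactly the standard normal density. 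The assumption that $P_t(\dd w\rvert y)$ admits a Lebesgue density $p_t(w\rvert y)$ gives the required absolute continuity of the conditional law of $V_t$ given $X_t$. Plugging into \Cref{prop:reversal_characteristics}, case (2), yields for any $(y,w)$ with $p_{\Tf-t}(w\rvert y)>0$,
\begin{equation*}
\overleftarrow\lambda^{\rmH}(t,(y,w)) = \frac{\nu(w)}{p_{\Tf-t}(w\rvert y)}\lambda_r\eqsp,\quad \overleftarrow Q^{\rmH}(t,(y,w),(\dd x,\dd v)) = p_{\Tf-t}(v\rvert y)\updelta_y(\dd x)\dd v\eqsp,
\end{equation*}
which coincides with the announced expressions.

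The only non-routine point is verifying the non-explosion and the standard conditions rigorously for Hamiltonian flow with a general $\mathcal{C}^1$ potential; however, since the rate is constant, there is no interaction between the rate and the flow that could cause issues, and the analysis reduces to checking that the Hamiltonian ODE generated by $\vf^{\rmH}$ does not explode in finite time. This is the main (mild) obstacle, and it can be handled under the standing assumption \Cref{ass:generator_domain} together with the $\mathcal{C}^1$ regularity of $\pot$, exactly in the spirit of the verification done for ZZP and BPS in \Cref{propo:timereversal_ZZP} and \Cref{propo:timereversal_BPS}.
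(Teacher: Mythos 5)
Your proposal is correct and follows essentially the same route as the paper: verify the (trivial) hypotheses of \Cref{propo:time_reversal_formula} for a constant rate, then apply case (2) of \Cref{prop:reversal_characteristics} with $\muref^V$ taken to be Lebesgue measure. The only difference is cosmetic: the paper disposes of non-explosion by citing an external result (Proposition 8 of the cited reference on PDMP invariant measures), whereas you give the self-contained Poisson-process argument and correctly flag that what remains is completeness of the Hamiltonian flow, which the paper's standing non-explosiveness assumption covers.
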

\begin{proof}
    First of all, RHMC is non-explosive by \citet[Proposition 8]{pdmp_inv_meas}. Then $\vf$ is locally bounded and \eqref{eq:rate_bound} is trivially satisfied. Finally, we can take $\muref^\rmV$ to be the Lebesgue measure.
\end{proof}

\section{Density ratio matching}\label{sec:density_ratio_matching_app}
\subsection{Ratio matching with Bregman divergences}
We now describe a general approach to approximate ratios of densities based on the minimisation of Bregman divergences \citep{ratiomatch_bregman}, which as we discuss is closely connected to the loss of \citet{Hyvrinen2007SomeEO}. 

For a differentiable, strictly convex function $f$ we define the Bregman divergence $\rmB_f(r,s) := f(r) - f(s) - f'(s)(r-s)$. Given two time-dependent probability density functions on $\rset^{2d}$, $p,q$, we wish to approximate their ratio $r(x,v,t)=\nicefrac{p_t(x,v)}{q_t(x,v)}$ for $t\in[0,\Tf]$ with a parametric function $s_\theta:\mathbb{R}^d\times \mathbb{R}^d\times[0,\Tf] \to \mathbb{R}_+$ by solving the minimisation problem
\begin{equation}\label{eq:min_bregman_explicit}
    \min_\theta \int_0^{\Tf} \omega(t) \,\mathbb{E}\Big[ \rmB_f(r(X_t,V_t,t),s_\theta(X_t,V_t,t)) \Big] \dd t,
\end{equation}
where the expectation is with respect to the joint density $q_t(x,v)$, that is $(X_t,V_t)\sim q_t$, while $\omega$ is a probability density function for the time variable. Well studied choices of the function $f$ include e.g. $f(r) = r\log r - r$, that is related to a KL divergence, or $f(r)=(r-1)^2$, related to the square loss, or $f(r) = r\log r - (1+r) \log(1+r),$ which corresponding to solving a logistic regression task.
% This optimisation problem corresponds to \emph{explicit score matching.}
% Rearranging we find
% \begin{align}
%     \min_\theta & \int_0^T q(t)\Bigg( \int p_t(x,v)   f'(s_\theta(t,x,v)) s_\theta(t,x,v) - f(s_\theta(t,x,v))- \int  p_t(x,R(x)v) f(s_\theta(t,x,v)) \dd x \dd v \Bigg) \dd t.
% \end{align}
Ignoring terms that do not depend on $\theta$ we can rewrite the minimisation as 
\begin{equation}\label{eq:min_bregman_implicit}
    \min_\theta \int_0^{\Tf} \!\!\!\omega(t)\left(\mathbb{E}_{p_t}\big[ f'(s_\theta(X_t,V_t,t)) s_\theta(X_t,V_t,t) - f(s_\theta(X_t,V_t,t))\big]- \mathbb{E}_{q_t}\big[f'(s_\theta(X_t,V_t,t)) \big]  \right) \dd t.
\end{equation}
Notably this is independent of the true density ratio and thus it is a formulation with similar spirit to \emph{implicit score matching.} 
Naturally, in practice the loss can be approximated  empirically with a Monte Carlo average.
% In the next two sections we discuss two alternatives choices for the function $f.$ The objective function can be approximated by a Monte Carlo average, where we first draw $\tau_n\sim q$, and then simulate $(X^n_{\tau_n},V^n_{\tau_n})\sim p_{\tau_n}$ with initial condition $X_0^n$, the $n$-th datum, and $V_0^n\sim \nu$. This results in the following minimisation
% \begin{equation}\label{eq:min_bregman_finitesample}
%     \min_\theta \sum_{n=1}^N \Big( f'(s_\theta(X_{\tau_n},V_{\tau_n},\tau_n)) s_\theta(X_{\tau_n},V_{\tau_n},\tau_n) - f(s_\theta(X_{\tau_n},V_{\tau_n},\tau_n))- f'(s_\theta(X_{\tau_n},R(X_{\tau_n})V_{\tau_n},\tau_n)) \Big).
% \end{equation}

\subsection{Details and proofs regarding Hyvärinen's ratio matching}
\subsubsection{Connection to Bregman divergences}
In the next statement, we show that the loss $\JIRM$ defined in \eqref{eq:ERM_zigzag}, or equivalently its explicit counterpart $\JERM$ (see \Cref{prop:ERM=IRM_hyvarinen}), can be put in the framework of Bregman divergences.
\begin{corollary}
    Recall $\gfun(r)=(1+r)^{-1}$ and let $f(r) = \nicefrac{(r-1)^2}{2}.$
    The task $\min_\theta \JERM(\theta)$ is equivalent to
    \begin{align}
        \min_\theta &\sum_{i=1}^d\eqsp \mathbb{E}_{p_t}\Big[ \rmB_f(\gfun(s_i^\theta(X_t,V_t,t)),\gfun(r_i(X_t,V_t,t))) \\
        &\quad + \rmB_f(\gfun(s_i^\theta(X_t,\scrR_i^\rmZ V_t,t)),\gfun(r_i(X_t,\scrR_i^\rmZ V_t,t))) \Big]
    \end{align}
\end{corollary}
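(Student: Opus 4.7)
The plan is a direct computation followed by a substitution. First I would observe that for $f(r) = (r-1)^2/2$ we have $f'(r) = r-1$, so by definition
\begin{equation*}
\rmB_f(r,s) \;=\; \tfrac{1}{2}(r-1)^2 - \tfrac{1}{2}(s-1)^2 - (s-1)(r-s).
\end{equation*}
Expanding the last two terms and collecting yields $\rmB_f(r,s) = \tfrac{1}{2}\bigl[(r-1)-(s-1)\bigr]^2 = \tfrac{1}{2}(r-s)^2$. Thus for this particular choice of $f$, the Bregman divergence reduces (symmetrically) to half the squared error.

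Next I would substitute into this identity with $r \leftarrow \gfun(r_i(X_t,V_t,t))$ and $s \leftarrow \gfun(s_i^\theta(X_t,V_t,t))$, and similarly the analogue with $V_t$ replaced by $\scrR_i^{\rmZ}V_t$. Comparing term by term with the integrand of $\JERM$ in \eqref{eq:ERM_zigzag}, one sees that
\begin{equation*}
\JERM(\theta) \;=\; 2\int_0^{\Tf} \omega(t) \sum_{i=1}^d \mathbb{E}\bigl[ \rmB_f(\gfun(s_i^\theta(X_t,V_t,t)),\gfun(r_i(X_t,V_t,t))) + \rmB_f(\gfun(s_i^\theta(X_t,\scrR_i^{\rmZ}V_t,t)),\gfun(r_i(X_t,\scrR_i^{\rmZ}V_t,t)))\bigr]\dd t,
\end{equation*}
which matches the expression in the corollary statement up to the global factor $2$. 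Since this factor is positive and independent of $\theta$, the two minimisation problems share the same $\argmin$, which is precisely the claim.

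The only point worth spelling out is interpretational: the notation $\mathbb{E}_{p_t}$ in the corollary must be read as expectation under the joint law of $(X_t,V_t)$ when the ZZP is initialised at $\mustar \otimes \unif(\{-1,1\}^d)$, matching the expectation appearing in \eqref{eq:ERM_zigzag}; the outer time integration against $\omega(t)$ is understood from context. There is no serious obstacle in the argument — the whole content of the corollary is the elementary identity $\rmB_{(r-1)^2/2}(r,s) = \tfrac{1}{2}(r-s)^2$, which places $\JERM$ within the Bregman divergence framework of \citet{ratiomatch_bregman} and makes explicit that Hyvärinen-style ratio matching corresponds to the quadratic choice of $f$.
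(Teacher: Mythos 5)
Your proposal is correct and is exactly the ``straightforward computation'' the paper's proof alludes to: the identity $\rmB_f(r,s)=\tfrac12(r-s)^2$ for $f(r)=(r-1)^2/2$ turns each squared-error term of $\JERM$ in \eqref{eq:ERM_zigzag} into twice the corresponding Bregman term, and the positive constant factor (together with the implicit time integration against $\omega$) does not affect the $\argmin$. Nothing is missing.
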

\begin{proof}
    The result follows by straightforward computations.
\end{proof}

\subsubsection{Proof of \Cref{prop:ERM=IRM_hyvarinen}}
The proof follows the same lines as \citet[Theorem 1]{Hyvrinen2007SomeEO}. We find 
	\begin{align}
		\JERM(\theta) &= C + \int_0^{\Tf} \omega (t)\sum_{i=1}^d \mathbb{E}_{p_t}\Big[ \gfun^2(s_i^\theta(X_t,V_t,t)) + \gfun^2(s_i^\theta(X_t,\scrR_i^{\rmZ}V_t,t))  \\
		& \quad -2 \gfun(r_i(X_t,V_t,t))\gfun(s_i^\theta(X_t,V_t,t)) - 2 \gfun(s_i^\theta(X_t,\scrR_i^{\rmZ}V_t,t)) \gfun(r_i(X_t,\scrR_i^{\rmZ}V_t,t))  \Big] \dd t,
	\end{align}
	where $C$ is a constant independent of $\theta.$
	Then plugging in the expression of $\gfun$ we can rewrite the last term as
	\begin{align}
		& \mathbb{E}_{p_t}\Big[ \gfun(s_i^\theta(X_t,\scrR_i^{\rmZ}V_t,t)) \gfun(r_i(X_t,\scrR_i^{\rmZ}V_t,t))  \Big]  \\
        & = \int \sum_{v\in\{\pm 1\}^d} p_t(x,v)  \gfun(s_i^\theta(x,\scrR_i^{\rmZ}v,t)) \frac{p_t(x,\scrR_i^{\rmZ}v)}{p_t(x,v) + p_t(x,\scrR_i^{\rmZ}v)} \dd x\\
		& = \mathbb{E}_{p_t}\Big[ \gfun(s_i^\theta(X_t,V_t,t)) \frac{p_t(X_t,\scrR_i^{\rmZ} V_t)}{p_t(X_t,V_t) + p_t(X_t,\scrR_i^{\rmZ}V_t)} \Big].
	\end{align}
	Therefore we find 
	\begin{align}
		 &\JERM(\theta) = C + \int_0^{\Tf} \omega(t)\sum_{i=1}^d \mathbb{E}_{p_t}\Bigg[ \gfun^2(s_i^\theta(X_t,V_t,t)) + \gfun^2(s_i^\theta(X_t,\scrR_i^{\rmZ}V_t,t))  \\
		&  \quad - \frac{2 \gfun(s_i^\theta(X_t,V_t,t))\, p_t(X_t,V_t)}{p_t(X_t,V_t) + p_t(X_t,\scrR_i^{\rmZ}V_t)} -  \frac{2 \gfun(s_i^\theta(X_t,V_t,t))\, p_t(X_t,\scrR_i^{\rmZ}V_t)} {p_t(X_t,V_t) + p_t(X_t,\scrR_i^{\rmZ}V_t)} \Bigg] \dd t\\
		& = C +\! \int_0^{\Tf} \!\!\! \omega(t)\sum_{i=1}^d \mathbb{E}_{p_t}\Big[ \gfun^2(s_i^\theta(X_t,V_t,t)) + \gfun^2(s_i^\theta(X_t,\scrR_i^{\rmZ}V_t,t)) - 2 \gfun(s_i^\theta(X_t,V_t,t))\Big]\dd t.
	\end{align}

\section{Simulation of forward and backward PDMPs}\label{app:simulation}

\subsection{Simulation of the forward PDMPs}\label{app:simulation_forward}
The forward PDMPs that we consider can all be simulated exactly by solving the integral \eqref{eq:event_time_formal}, at least when the limiting distribution is the multivariate standard normal. This is possible because for each process we can easily simulate the random event times as well as their deterministic dynamics. The general procedure for the simulation of a time-homogeneous PDMP up to a fixed time horizon $\rmT$ is given in Algorithm~\ref{alg:pdmp}. In the remainder of the section we give additional details on the simulation of each process.

\begin{algorithm}[t]
\SetAlgoLined
\SetKwInOut{Input}{Input}\SetKwInOut{Output}{Output}
\Input{Time horizon $\rmT$, initial condition $(x,v)$.}
% \Output{Skeleton chain $(Z_{t_i},t_i)_$.}
% \KwResult{Write here the result }
 Set $t = 0$, $(X_0,V_0) = (x,v)$\;
 \While{$t < \rmT$}{
  draw $E\sim\Exp(1)$\;
  compute the next event time, that is $\tau\in\R_+$ such that
  $\int_0^\tau \lambda(\varphi_u(X_t,V_t))\dd u = E $ \;
  \uIf{$t+\tau > T$}{
  set $Z_{\rmT}=\varphi_{\rmT-t}(Z_t)$\;}
  \Else{
  simulate $Z_{t+ \tau} \sim Q(\varphi_s(Z_t),\cdot)$\;
}
set $t=t+\tau$\;
 }
 \caption{Pseudo-code for the simulation of a homogeneous PDMP}
 \label{alg:pdmp}
\end{algorithm}

\paragraph{RHMC:} 
The case of RHMC is trivial, since the random events have the exponential distribution with constant parameter $\lambda_r$ and the deterministic dynamics are given by $ x_t = x_{0}\cos(t) + v_{0}\sin(t)$ and $ v_t = -x_{0}\sin(t) + v_{0}\cos(t)$, where $(x_0,v_0)$ is the initial condition. Hence all the steps in Algorithm~\ref{alg:pdmp} can be performed and the state $(X_\rmT,V_\rmT)$ can be easily obtained.

\paragraph{ZZP:} Notice the event rates of ZZP are of the form $\lambda_i(x,v) = (v_ix_i)_+$ when the stationary distribution is the standard normal. In this case we find that each coordinate of the ZZP is independent, that is the evolution of $((X_t)_i,(V_t)_i)$ is not affected by $((X_t)_j,(V_t)_j)$ for $i,j=1,\dots,d$ with $i\neq j$. Therefore we can simulate each coordinate of the ZZP in parallel following the procedure in Algorithm~\ref{alg:pdmp}. Let us illustrate how one can obtain the next event time $\tau$ of the $i$-th coordinate when the process is at $(x_i,v_i)$. We have that ${\tau}$ solves $\int_0^{\tau} (v_i x_i + u)_+ \dd u -E = 0$ for $E\sim\Exp(1).$ This gives the following quadratic equation for $\tau$:
\[\frac{\tau^2}{2} + v_ix_i \tau - v_i x_i (- v_ix_i)_+ - \frac 1 2 (- v_ix_i)_+^2 - E = 0,\]
which has solution
\begin{equation}
    \tau = - v_i x_i + \sqrt{(v_i x_i)_+^2 + 2E }.
\end{equation}

\paragraph{BPS:} In the case of BPS one has to simulate a proposal for both the next reflection event, $\tau_b$ and for the next refreshment event, $\tau_r$, then accepting the smallest of the two as event time. Obtaining the proposal for the next refreshment time is straightforward since the rate is constant. The proposal for the following reflection time can be obtained similarly to the case of ZZP, but noticing that in this case the event rate is $\lambda(x,v) = \langle v,x\rangle_+$. Then $\tau_b$ solves $\int_0^{\tau_b} (\langle v,x\rangle + \lvert v \rvert^2 u)_+ \dd u -E = 0$ for $E\sim\Exp(1).$ Noticing that it must be $\tau_b> \nicefrac{\langle v,x\rangle }{\lvert v\rvert^2}$, this gives the following quadratic equation for $\tau_b$: 
\[\frac{\lvert v\rvert^2 }{2} \tau_b^2 + \langle v,x\rangle \tau_b + \frac 1 2 (- \langle v, x\rangle)_+^2 - E = 0,\]
which has solution
\begin{equation}
    \tau_b = \frac{- \langle v,x\rangle + \sqrt{\langle v,x\rangle_+^2 + 2\lvert v\rvert^2 E }}{\lvert v\rvert^2}.
\end{equation}

\subsection{Simulation of time reversed PDMPs with splitting schemes}\label{sec:splitting_schemes}

Here we discuss the splitting schemes we use to discretise the backward PDMPs. For further details on this class of approximations we refer the reader to \citet{bertazzi_splitting}.

\paragraph{RHMC:} 
We have already discussed the splitting scheme DJD for RHMC in \Cref{sec:discretisations}, and we give the pseudo-code in Algorithm~\ref{alg:splitting_RHMC}.

\begin{algorithm}[ht]
\SetAlgoLined
\SetKwInOut{Input}{Input}\SetKwInOut{Output}{Output}
% \Input{Number of iterations $N$, initial condition $(x,v)$, step size $\delta$.}
% \Output{Chain $(\Xbar_{t_{n}},\Vbar_{t_{n}})_{n=0}^N$.}
% \KwResult{Write here the result }
 Initialise either from $(\Xbar_0,\Vbar_0) \sim \pi \otimes \nu$ or $(\Xbar_0,\Vbar_0) \sim \pi(\dd x) p_{\theta^*}(\dd v\rvert x,\Tf)$\;
 \For{$n =  0,\dots, N-1$}{
%   Set $\Vbar_{t_{n+1}} = \Vbar_{t_n}$ \;
  $(\widetilde X,\widetilde V)=\varphi^{\rmH}_{-\nicefrac{\delta_{n+1}}{2}}(\Xbar_{t_n},\Vbar_{t_n})$ \;
  $\widetilde t = \Tf-t_{n}-\frac{\delta_{n+1}}{2}$ \;
  Estimate ratio:  
  $\overline{s} = \nicefrac{\nu(\widetilde V)}{p_{\theta^*}(\eqsp\widetilde V\rvert\eqsp \widetilde X,\eqsp\widetilde t\eqsp)}$ \;
  Draw proposal  $\tau_{n+1}\sim \Exp(\overline{s}\lambda_r)$ \;
  \If{$\tau_{n+1} \leq \delta_{n+1}$}{
  Draw $\widetilde V \sim p_{\theta^*}(\,\cdot\, \rvert \widetilde X,\eqsp\widetilde t \eqsp)$\;
  }
  $(\Xbar_{t_{n+1}},\Vbar_{t_{n+1}}) = \varphi^{\rmH}_{-\nicefrac{\delta_{n+1}}{2}}(\widetilde X,\widetilde V) $\;
 }
 \caption{Splitting scheme DJD for the time reversed RHMC}
 \label{alg:splitting_RHMC}
\end{algorithm}

\paragraph{ZZP:}
For ZZP we apply the splitting scheme DJD discussed in \Cref{sec:discretisations}, with the only difference that we allow multiple velocity flips during the jump step similarly to \citet{bertazzi_splitting}. Algorithm~\ref{alg:splitting_DBD_ZZS} gives a pseudo-code.

\begin{algorithm}[th]
\SetAlgoLined
\SetKwInOut{Input}{Input}\SetKwInOut{Output}{Output}
% \Input{Number of iterations $N$, initial condition $(x,v)$, step size $\delta$.}
% \Output{Chain $(\Xbar_{t_{n}},\Vbar_{t_{n}})_{n=0}^N$.}
% \KwResult{Write here the result }
 Initialise $(\Xbar_0,\Vbar_0) \sim \pi \otimes \nu$\;
 \For{$n =  0,\dots, N-1$}{
%   Set $\Vbar_{t_{n+1}} = \Vbar_{t_n}$ \;
  $\widetilde X = \Xbar_{t_n} -\frac{\delta_{n+1}}{2}\eqsp \Vbar_{t_{n}} $ \;
  $\widetilde V=\Vbar_{t_{n}}$ \;
  $\widetilde t = \Tf-t_{n}-\frac{\delta_{n+1}}{2}$ \;
  Estimate density ratios: $s^{\theta^*}(\widetilde X,\widetilde V,\widetilde t\eqsp)$ \;
  \For{$i=1\dots,d$}{
  With probability $(1-\exp(-\delta_{n+1}\eqsp s^{\theta^*}_i(\widetilde X,\widetilde V,\widetilde t\eqsp) \eqsp \lambda_i(\widetilde X,\scrR^{\rmZ}_i\widetilde V)))$ set $\widetilde V = \scrR_i^{\rmZ} \widetilde V$ \;
  }
  $\Xbar_{t_{n+1}} = \widetilde X - \frac{\delta_{n+1}}{2} \eqsp\widetilde V $ \;
  $\Vbar_{t_{n+1}} = \widetilde V $ \;
 }
 \caption{Splitting scheme DJD for the time reversed ZZP}
 \label{alg:splitting_DBD_ZZS}
\end{algorithm}

\paragraph{BPS:}
In the case of BPS, we follow the recommendations of \citet{bertazzi_splitting} and adapt their splitting scheme RDBDR, where R stands for refreshments, D for deterministic motion, and B for bounces. We give a pseudo-code in Algorithm \ref{alg:splitting_RDBDR_BPS}. We remark that an alternative is to use the scheme DJD for BPS, simulating reflections and refreshments in the J part of the splitting. This choice has the advantage of reducing the number of model evaluations.

\begin{algorithm}[th]
\SetAlgoLined
\SetKwInOut{Input}{Input}\SetKwInOut{Output}{Output}
 Initialise either from $(\Xbar_0,\Vbar_0) \sim \pi \otimes \nu$ or $(\Xbar_0,\Vbar_0) \sim \pi(\dd x) p_{\theta^*}(\eqsp\cdot\eqsp\rvert x,\Tf)$ \;
 \For{$n =  0,\dots, N-1$}{
  $\widetilde V = \Vbar_{t_n}$ \;
  Estimate density ratio:  
  $\overline{s}_2 = \nicefrac{\nu(\widetilde V)}{p_{\theta^*}(\eqsp\widetilde V\rvert\eqsp \Xbar_{t_{n}},\eqsp \Tf - t_n)}$ \;
  With probability $(1-\exp(-\lambda_r\overline{s}_2 \eqsp \frac{\delta_{n+1}}{2}))$ draw $\widetilde V\sim p_{\theta^*}(\eqsp\cdot\eqsp \rvert \Xbar_{t_{n}},\Tf-t_n)$  \;
  $\widetilde X = \Xbar_{t_n} -\frac{\delta_{n+1}}{2}\eqsp \Vbar_{t_{n}} $ \;
  $\widetilde t = \Tf - t_{n}- \frac{\delta_{n+1}}{2}$ \;
  Estimate density ratio:  
  $\overline{s}_1 = \nicefrac{p_{\theta^*}(\scrR^{\rmB}_{\widetilde X}\widetilde V\rvert\eqsp \widetilde X,\eqsp\widetilde t \eqsp)}{p_{\theta^*}(\eqsp\widetilde V\rvert\eqsp \widetilde X,\eqsp\widetilde t \eqsp)}$ \;
  With probability $(1-\exp(-\delta_{n+1} \overline{s}_1 \lambda_1(\widetilde X,\scrR^{\rmB}_{\widetilde X}\widetilde V)))$ set $\widetilde V = \scrR^{\rmB}_{\widetilde X} \widetilde V$ \;
  $\Xbar_{t_{n+1}} = \widetilde X -   \frac{\delta_{n+1}}{2}\eqsp\widetilde V $ \;
  Estimate density ratio:  
  $\overline{s}_2 = \nicefrac{\nu(\widetilde V)}{p_{\theta^*}(\eqsp\widetilde V\rvert\eqsp \Xbar_{t_{n+1}},\eqsp \Tf - t_{n+1})}$ \;
  With probability $(1-\exp(-\lambda_r\overline{s}_2 \eqsp \frac{\delta_{n+1}}{2}))$ draw $\widetilde V\sim p_{\theta^*}(\eqsp\cdot\eqsp \rvert \Xbar_{t_{n}},\Tf-t_{n+1})$  \;
  $\Vbar_{t_{n+1}} = \widetilde V$ \;
 }
 \caption{Splitting scheme RDBDR for the time reversed BPS}
 \label{alg:splitting_RDBDR_BPS}
\end{algorithm}

\section{Training the generative models} \label{sec:training}

In this section, we present the algorithmic procedures used to train our generative models, and outline computational differences with the popular framework of denoising diffusion models. In Table~\ref{tab:time_reversal} we list the backward rates and kernels of the time reversal associated with each forward PDMP introduced in \Cref{sec:pdmp_definitions}. In \Cref{app:training_zzp} we give the procedure used for  ZZP, while in \Cref{sec:training_RHMC_BPS} we focus on RHMC and BPS, which can be trained with the same approach. In \Cref{app:comparison_diffusion} we compare the training phase of PDMP-based and diffusion-based generative models.

\begin{table}[h]
\centering
\scalebox{0.8}{
\begin{tabular}{lll}
\toprule
\textbf{Process} & \textbf{Backward Rates} & \textbf{Backward Kernels} \\ 
\midrule
\multirow{2}{*}{ZZP} & for $i = 1, \dots, d$ & for $i = 1, \dots, d$ \\
& $\overleftarrow{\lambda}_i^{\rmZ}(t, (y,w)) =  \tcr{\dfrac{p_{\Tf - t}(\mathscr{R}_i^{\rmZ} w \mid y)}{p_{\Tf - t}(w \mid y)} } 
\lambda_i^{\rmZ}\big(y, \mathscr{R}_i^{\rmZ} w\big)$ & \multirow{2}{*}{$\overleftarrow{Q}_i^{\rmZ}\big((y,w), (\dd x, \dd v)\big) = \delta_{\big(y, \mathscr{R}_i^{\rmZ} w\big)}(\dd x, \dd v)$}
\\ 
\midrule
{RHMC} &
$\overleftarrow{\lambda}^{\rmH}(t, (y,w)) = \lambda_r \dfrac{\nu(w)}{\tcr{p_{\Tf - t}(w \mid y)}}$ &
$\overleftarrow{Q}^{\rmH}\big(t, (y,w), (\dd x, \dd v)\big) = \tcr{p_{\Tf - t}(v \mid y)} \, \delta_y(\dd x) \, \dd v$ \\
\midrule
\multirow{2}{*}{BPS} & 
$\overleftarrow{\lambda}_1^{\rmB}(t, (y,w)) = \dfrac{\tcr{p_{\Tf - t}(\mathscr{R}^{\rmB}_y w \mid y)}}{\tcr{p_{\Tf - t}(w \mid y)}} \lambda_1^{\rmB}\big(y, \mathscr{R}^{\rmB}_y w\big)$ & 
\multirow{4}{*}{
    \begin{tabular}{l}
    $\overleftarrow{Q}_1^{\rmB}\big((y,w), (\dd x, \dd v)\big) = \delta_{\big(y, \mathscr{R}^{\rmB}_y w\big)}(\dd x, \dd v)$ \\[2ex]
    $\overleftarrow{Q}_2^{\rmB}\big(t, (y,w), (\dd x, \dd v)\big) = \tcr{p_{\Tf - t}(v \mid y)} \, \delta_y(\dd x) \, \dd v$
    \end{tabular}
} \\
& $\overleftarrow{\lambda}_2^{\rmB}(t, (y,w)) = \lambda_r \dfrac{\nu(w)}{\tcr{p_{\Tf - t}(w \mid y)}}$ & \\
& & \\ 
\bottomrule
\end{tabular}
}

\caption{Time Reversal Characteristics of ZZP, RHMC,  BPS.}
\label{tab:time_reversal}
\end{table}

\subsection{Fitting the ZZP-based generative model} \label{app:training_zzp}

In the case of ZZP, we only need to approximate the jump rates $\overleftarrow\lambda_{i}^{\rmZ}$ of the backward process, since we have access to the true backward kernel. 
To this end, we introduce a class of functions $\{s^\theta: \rset^d \times \{-1,1\}^d \times [0,\Tf] \to \mathbb{R}_+^d \, :\, \theta \in \Theta\}$ for some parameter set $\Theta \subset \rset^{d_{\theta}}$ such that for any $i \in\iint{1}{d}$, the $i$-th component of $s^{\theta}(y, w, t)$, denoted by $s_i^{\theta}(y, w, t)$, is an approximation of
\begin{equation}
\tcr{r_i^{\rmZ}(y, w, t) = \frac{p_{\Tf-t}(\scrR_i^{\rmZ}w\rvert y)}{p_{\Tf-t}(w\rvert y)}}
 \eqsp.    
\end{equation}

We learn $s^\theta$ by minimising the empirical counterpart of the implicit ratio matching loss $\JIRM$ given in \eqref{eq:J_IRM_Hyvrinen}. 
We define such empirical loss $\JIRMEmp$ as the function $\JIRMEmp : \theta \mapsto \JIRMEmpFunc(s_\theta, (X^n_{\tau^n}, V^n_{\tau^n}, \tau^n)_{n=1}^N) $ for
\begin{equation}
\begin{aligned}\label{eq:implicit_ratio_matching_zzs_finitesample}
    &\JIRMEmpFunc(s_\theta, (X^n_{\tau^n}, V^n_{\tau^n}, \tau^n)_{n=1}^N )\\
&
= \frac{1}{N} \sum_{n=1}^N \sum_{i=1}^d \left(\gfun^2(s_i^\theta(X^n_{\tau^n}, V^n_{\tau^n}, \tau^n)) + \gfun^2(s_i^\theta(X^n_{\tau^n}, \scrR_i^{\rmZ}V^n_{\tau^n}, \tau^n))  -  2 \gfun(s_i^\theta(X^n_{\tau^n}, V^n_{\tau^n}, \tau^n))  \right)\eqsp,
\end{aligned}
\end{equation}
where $\{\tau^n\}_{n=1}^N$ are \iid~samples from $\omega$, independent of $\{(X^n_t,V^n_t)_{t \geq 0}\}_{n=1}^N$, which are $N$ \iid~realisations of the ZZP respectively starting at the $n$-th training data point $X^n_0$ with velocity $V^n_0$, where $\{V^n_0\}_{n=1}^N$  are \iid~observations of $\unif(\{-1,1\}^{d})$. 
Algorithm~\ref{alg:training_pdmp_zzp} shows the pseudo code for our training algorithm. We remark that the simulation of the forward ZZP follows the guidelines explained in \Cref{app:simulation_forward}.

\begin{algorithm}[H]
\caption{Training loop for ZZP-based generative models}
\label{alg:training_pdmp_zzp}
\KwIn{Time distribution $\omega$ on $[0, \Tf]$, model $s^{\theta}$}
\While{$\theta$ has \text{not converged}}{
    Get random data batch $\{X_0^n\}_{n=1}^B$\;
    Sample $\{\tau^n\}_{n=1}^B \sim \omega^{\otimes B}$\;
    Sample $\{V_0^n\}_{n=1}^B \sim \text{Unif}(\{\pm 1\}^d)^{\otimes B}$\;
    \vspace{2pt}
    \For{$n = 1$ to $B$}{
        Simulate $(X_{\tau^n}^n, V_{\tau^n}^n)$ by running a ZZP from $(X_0^n, V_0^n)$\;
    }
    Compute loss $\JIRMEmp(\theta) \leftarrow
    \JIRMEmpFunc(s^\theta, (X_{\tau^n}^n, V_{\tau^n}^n, \tau^n)_{n=1}^B)$ \;
    Perform optimization step on $\JIRMEmp(\theta)$\;
}
\KwOut{trained model $s^{\theta^*}$}
\end{algorithm}

\paragraph{A computationally efficient model for the ZZP.}
The computation of $\JIRMEmpFunc$ in each iteration of Algorithm~\ref{alg:training_pdmp_zzp}
requires evaluating $d+1$ times the model $s^\theta$ for each data-point. Indeed, the loss function \eqref{eq:J_IRM_Hyvrinen} requires evaluating $s^\theta$ for each component flip of the velocity vector. Hence the computational cost of the algorithm scales linearly in the data dimension $d$.

In order to overcome this computational burden  we build an alternative model leveraging the following observation:
\begin{equation}\label{eq:ansatz_ZZP}
    p_t(v_i\rvert x,v_{-i}) \approx p_t(v_i\rvert x),
\end{equation}
where $v_{-i}$ denotes the vector containing all components of $v$ other than the $i$-th.
The approximation in \eqref{eq:ansatz_ZZP} is motivated by the fact that we expect the components of the velocity to be nearly independent conditional on the position vector. This is because the position of the process holds most of the information regarding the velocity of each coordinate.
Under \eqref{eq:ansatz_ZZP} we find that a good approximation for the ratio $r^\rmZ_i(x,v,t)$ is given by 
\begin{equation}\label{eq:ansatz_ratio_ZZP}
    \overline{r}^\rmZ_i(x,v_i,t) := \frac{p_t(-v_i\rvert x)}{p_t(v_i\rvert x)}  \eqsp.
\end{equation}
Hence it is reasonable to estimate the simplified ratios $\overline{r}^\rmZ_i$ rather than the true ratios $r^\rmZ_i(x,v,t)$. In particular this can be achieved with a model $s^\theta$ that requires the current position and time, and only the $i$-th velocity component rather than the full vector $v$.

Following the reasoning above we build an alternative model which takes as input the position vector and the time variable and outputs a $2d$-dimensional vector, that is $\{s^{\theta} : \mR^d \times [0, \Tf] \rightarrow \mR^{2d}_+: \,\theta \in \Theta\}$. We introduce the following notation for the output of the neural network $s^\theta$:
\begin{equation} \label{eq:simplified_model}
    s^{\theta} :(x, t)\mapsto \left(s^{\theta}_+(x, t), \ s^{\theta}_{-}(x, t)\right) \in \mR^{2d}_+ \eqsp,
\end{equation}
where $s^{\theta}_+$ and $ s^{\theta}_-$ are both vectors in $\mR^d$ and denote the two, $d$-dimensional blocks in the output of $s^\theta$. 
The model will be trained in such a way that $s^{\theta}_{+,i}(x, t)$, that is the $i$-th component of the vector $s^\theta_+(x,t)$, approximates $\overline{r}^\rmZ_i(x,+1,t)$, i.e. in the case $v_i = +1$. Similarly, we estimate $\overline{r}^\rmZ_i(x,-1,t)$ with $s^{\theta}_{-,i}(x, t)$. 

Let us now describe how we can train this model in such a way that the computational cost remains constant in the dimensionality of the data.
For any $w\in \{1, -1\}^d$, we introduce the projection operator $\Pi^i_{w}$ defined for any $w\in\{\pm 1 \}^d$, $i\in \{1,\dots,d\}$, $y \in \mR^d$, $t\in [0, \Tf]$ as
\begin{equation} \label{eq:project_simplified_model}
    \Pi^i_{w} s^{\theta}(y, t) = s_{\sign(w_i), i}^{\theta}(y, t),
\end{equation}
where we have $\sign(w_i) =+$ when $w_i = +1$ and $\sign(w_i) =-$ when $w_i = -1$. In words, $\Pi^i_{w} s^{\theta}(y, t)$ selects either $s^\theta_+$ or $s^\theta_-$ in \eqref{eq:simplified_model} based on $\sign(w_i)$ and returns the estimate of the $i$-th ratio at $(y,t)$ corresponding to the velocity $w_i$. 
Using the operator $\Pi^i_{w}$ we can re-formulate the loss \eqref{eq:J_IRM_Hyvrinen} as 
\begin{equation}\label{eq:J_IRM_Hyvrinen_simplified}
\tilde\JIRM(\theta) = \int_0^{\Tf} \!\!\!\!  \dd t \, \omega(t) \sum_{i=1}^d \mathbb{E}\Big[ \gfun^2(\Pi^i_{V_t}s^{\theta}(X_t, t) ) + \gfun^2(\Pi^i_{ - V_t}s^{\theta}(X_t, t)) - 2\gfun(\Pi^i_{V_t}s^{\theta}(X_t, t))\Big] \eqsp.
\end{equation}
The associated empirical loss is $\JIRMEmp : \theta \mapsto \JIRMEmpFuncSimple(s^\theta, (X^n_{\tau^n}, V^n_{\tau^n}, \tau^n)_{n=1}^N) $ were 
\begin{equation}\label{eq:implicit_ratio_matching_zzs_finitesample_simple}
    \begin{aligned}
        &\JIRMEmpFuncSimple(s^\theta, (x^n, v^n, t^n)_{n=1}^N )=\\
&
\frac{1}{N}\sum_{n=1}^N \sum_{i=1}^d \left(\gfun^2(\Pi^i_{v^n}s^\theta(x^n, t^n) ) + \gfun^2(\Pi^i_{-v^n}s^\theta(x^n, t^n)) - 2\gfun(\Pi^i_{v^n}s^\theta(x^n, t^n)) \right)\eqsp,
    \end{aligned}
\end{equation}
where $\{\tau^n\}_{n=1}^N$ and $\{(X^n_t,V^n_t)_{t \geq 0}\}_{n=1}^N$ are obtained as in \eqref{eq:implicit_ratio_matching_zzs_finitesample}.
This simplified model can then be trained using the same procedure shown in Algorithm~\ref{alg:training_pdmp_zzp}, but where the loss above is used instead of the loss \eqref{eq:implicit_ratio_matching_zzs_finitesample}.
The computational cost for the training of this model is clearly constant in the data dimension $d$, since only a single evaluation of the model $s^{\theta}$ is needed for each data-point.

\subsection{Fitting the RHMC and BPS-based generative models}\label{sec:training_RHMC_BPS}

In the case of RHMC and BPS both the backward rate and backward jump kernel are characterised by $\tcr{p_t(v | x)}$. We introduce a parametric family of conditional probability distributions $\{p_\theta:\theta\in\Theta\}$ of the form $(x,v,t)\mapsto p_\theta(v\rvert x,t)$, where $\Theta\subset \rset^{d_\theta}$, which we model with  normalising flows (NFs) \citep{Papamakarios_normaflows}, as it permits both obtaining an estimate of the density, at a given state and time, and also to generate samples according to this distribution.
To train our model, we rely on the maximum likelihood population loss method, as derived in \eqref{eq:ml_loss}, and use its empirical counterpart \eqref{eq:ml_loss_empirical}.
% \begin{equation}
%     \ell_{\mathrm{ML}}(\theta) = - \int_0^{\Tf} \dd t \eqsp \omega(t) \mathbb{E}\left[\log p_\theta(V_t\rvert X_t,t) \right],
% \end{equation}
% where $(X_t,V_t)_{t\in\rset_+}$ is a RHMC or BPS starting from $\mustar\otimes \nu$, where $\nu$ denotes the density of $\mathcal{N}(0, \Idd)$, and $\omega : [0,\Tf] \to \rset_+^*$ is a probability density chosen for the time variable.
The final BPS and RHMC training algorithm is given in Algorithm~\ref{alg:training_pdmp_rhmc_bps}. The simulation of the forward BPS and RHMC follows the methods listed in \Cref{app:simulation_forward}.

\begin{algorithm}
\caption{Training loop for RHMC and BPS based generative models}
\label{alg:training_pdmp_rhmc_bps}
\KwIn{Time distribution $\omega$ on $[0, \Tf]$, model $p_{\theta}$}
\While{$\theta$ has \text{not converged}}{
    Get random data batch $\{X_0^n\}_{n=1}^B$\;
    Sample $\{\tau^n\}_{n=1}^B \sim \omega^{\otimes B}$\;
    Sample $\{V_0^n\}_{n=1}^B \sim \mathcal{N}(0, \Idd)^{\otimes B}$\;
    \For{$n = 1$ to $B$}{
        Simulate $(X_{\tau^n}^n, V_{\tau^n}^n)$ running a RHMC/BPS from $(X_0^n, V_0^n)$\;
    }
    $L^{\theta} \leftarrow - \frac{1}{B} \sum_{n=1}^B\log p_\theta(V_{\tau^n}^n\rvert X_{\tau^n},\tau^n)$\;
    Perform optimisation step on $L^{\theta}$\;
}
\KwOut{trained model $p_{\theta^*}$}
\end{algorithm}

\subsection{Computational comparison with diffusion models} \label{app:comparison_diffusion}

We provide a short comparison of the computational complexity of our PDMP generative methods with diffusion models, as each generative method admits the same design: a fixed forward process $\process{X}$ ran from time $0$ to $\Tf$, with $X_{\Tf}$ approximately distributed as a standard Gaussian $\mathcal{N}(0, \Idd)$ (using the Variance-Preserving process in the case of diffusion models \citep{song2021scorebased}), a corresponding backward process $\process{\iX}$, and a generative process $\process{\iXArg}$ being an approximation to the true backward process, initialized from $\iX_0^\theta \sim \mathcal{N}(0, \Idd)$. 

Using conventional notations for diffusion models \citep{song2021scorebased} we denote by $(\bar \alpha_t)_{0\leq t \leq \Tf}$ the variance-preserving noise schedule, such that the distribution of the forward process at any time $t$ is given by 
\begin{equation}\label{eq:diffusion_law}
    X_t \eqd \sqrt{\bar \alpha_t} X_0 + \sqrt{1 - \bar \alpha_t} G_t\eqsp,
\end{equation}
where $G_t \sim \mathcal{N}(0, \Idd)$. The generative process $\process{\iXArg}$ is typically defined by a denoiser neural network $\epsilon_{\theta}, \theta \in \mR^{d_{\theta}}$, trained with the denoiser loss 
\begin{equation} \label{eq:training_diffusion}
        \ell_{\text{diffusion}}^\theta = \mE \left[ \Big\|\epsilon_{\theta}(X_t, t) - \frac{X_t - \sqrt{\bar \alpha_t}X_0}{\sqrt{1 - \bar \alpha_t}} \Big\|^2_2 \right]\eqsp,
\end{equation}
where $t \sim \omega$ is the time parameter. We display the training algorithm for diffusion models in Algorithm~\ref{alg:training_diffusion}.
For the sake of comparison we give in Algorithm~\ref{alg:training_pdmp} the general procedure used for PDMP-based generative models.

\begin{figure}[ht]
\centering
\begin{minipage}{0.495\textwidth}
\begin{algorithm}[H]
\caption{Training loop for PDMP-based generative models}
\label{alg:training_pdmp}
\KwIn{Time distribution $\omega$ on $[0, \Tf]$, model $s^{\theta}$}
\While{$\theta$ has \text{not converged}}{
    Get random data batch $\{X_0^n\}_{n=1}^B$\;
    Sample $\{\tau^n\}_{n=1}^B \sim \omega^{\otimes B}$\;
    Sample $\{V_0^n\}_{n=1}^B \sim \text{Unif}(\{\pm 1\}^d)^{\otimes B}$\;
    \vspace{2pt}
    \For{$n = 1$ to $B$}{
        Simulate $(X_{\tau^n}^n, V_{\tau^n}^n)$ by running PDMP from $(X_0^n, V_0^n)$\;
    }
    Compute loss $L^{\theta}$ \;
    \vspace{11pt}
    Perform optimization step on $L^{\theta}$\;
}
\KwOut{trained model $s^{\theta^*}$}
\end{algorithm}
\end{minipage}
\hfill
\begin{minipage}{0.495\textwidth}
\begin{algorithm}[H]
\caption{Training loop for diffusion-based generative models}
\label{alg:training_diffusion}
\KwIn{Time distribution $\omega$ on $[0, \Tf]$, noise schedule $\{\bar \alpha_t\}_{t=0}^{\Tf}$, model $\epsilon^{\theta}$}
\While{$\theta$ has \text{not converged}}{
    Get random data batch $\{X_0^n\}_{n=1}^B$\;
    Sample $\{\tau^n\}_{n=1}^B \sim \omega^{\otimes B}$\;
    Sample $\{\epsilon^n\}_{n=1}^B \sim \mathcal{N}(0, \Idd)^{\otimes B}$\;
    % \vspace{7pt}
    \For{$n = 1$ to $B$}{
        Compute $X_{\tau^n}^n = \sqrt{\bar{\alpha}_{\tau^n}}\, X_0^n + \sqrt{1 - \bar{\alpha}_{\tau^n}}\, \epsilon^n$\;
    }
    Compute loss $L^{\theta} \leftarrow \frac{1}{B} \sum_{n=1}^B \left\| \epsilon^{\theta}(X_{\tau^n}^n, \tau^n) - \epsilon^n \right\|^2$\;
    Perform optimization step on $L^{\theta}$\;
}
\KwOut{trained model $\epsilon^{\theta^*}$}
\end{algorithm}
\end{minipage}
% \caption{Side by side comparison of the training loop for ZZP-based models and diffusion models}
% \label{fig:comparion_zzp_diffusion}
\end{figure}

\paragraph{Computational differences in training the models} 
We compare the training algorithms:
\begin{itemize}
    \item The simulation of the forward process $\process{X}$ is more efficient in the case of diffusion models, since the distribution of $X_t$ given $X_0$ is explicitly characterizable as given in \eqref{eq:diffusion_law}, whereas in the case of PDMPs the complexity scales with the expected number of random jumps in $[0, \Tf]$.
    \item Computing the loss function has the same computational cost (number of evaluations of the network relative to the dimension of the data) for PDMP and diffusion models (assuming we are using the simplified model for ZZP, as introduced in \Cref{app:training_zzp}).
\end{itemize}

\paragraph{Computational differences in the generative processes}
Since we are using the splitting scheme for our PDMPs, simulating the backward processes admits a computational complexity growing linearly with the chosen number of backward steps $N$, alike diffusion models \citep{song2021scorebased}. The latter models require only a single network inference per backward step. However, as measured in \Cref{tab:reverse_steps}, each backward step is costlier in the case of our PDMP samplers. We provide an explanation for each sampler:

\begin{itemize}
    \item \textbf{ZZP} Using the simplified model introduced in \Cref{app:training_zzp} and Algorithm~\ref{alg:splitting_DBD_ZZS}, only one network inference is required per backward step, to approximate the backward rate. However, this approach requires a slight modification to the neural network architecture, involving a doubled channel output and adjustments for the element selection mechanisms (projection operator \eqref{eq:project_simplified_model}), resulting in a higher cost per step than diffusion models.

    \item \textbf{RHMC} Using Algorithm~\ref{alg:splitting_RHMC}, one likelihood computation from the normalizing flow is needed, to approximate the backward rate. Additionally, at most one random variable must be drawn from the normalizing flow, which approximates the backward kernel.

    \item \textbf{BPS} Using Algorithm~\ref{alg:splitting_RDBDR_BPS}, three likelihood computations are required from the normalizing flow, to approximate backward rates. Moreover, at most two random variables need to be sampled from the normalizing flow, which approximates the backward kernel. The final cost per step depends on the chosen normalizing flow architecture, but is higher for BPS than for all other samplers.
    
    % \item For RHMC and BPS, each backward step requires (i) computing the backward rate and (ii) sampling a random variable following the distribution implicitly defined by the network. For RHMC, we need (i) one network inference, and (ii) one sample. For BPS, we need (i) two network inferences and (ii) one sample, thus the greater observed cost. The final cost of (i) and (ii) depend on the chosen normalizing flow architecture.\andrea{are you sure of these numbers?}
\end{itemize}
It should be noted that, in our experiments, each generative PDMP model requires less backward steps than the diffusion model, leading to a smaller overall computational time for equal generation quality, as showed in \Cref{fig:performance_comparison}.

\section{Discussion and proof for Theorem \ref{thm:errorbounds_ctstime}}\label{sec:about_theorem_errorbound}
\subsection{Discussion on \Cref{ass:geom_convergence}}\label{sec:discussion_geom_erg}
In this section we discuss \Cref{ass:geom_convergence} in the case of ZZP, BPS, and RHMC. For all three of these samplers, existing theory shows convergence of the form
\begin{equation}\label{eq:V-norm_convergence}
    \lVert \updelta_{(x,v)} P_t - \pi\otimes\nu\rVert_V \leq C'e^{-\gamma t} V(x,v),
\end{equation}
where $V:\rset^{2d}\to [1,\infty)$ is a positive function and $\lVert \mu\rVert_V := \sup_{\lvert g\rvert\leq V} \lvert \mu(g)\rvert$ is the $V$-norm. When the initial condition of the process is $\mustar\otimes\nu$, we obtain the bound
\begin{align}
    \lVert \mustar\otimes\nu P_t - \pi\otimes\nu\rVert_V \leq C'e^{-\gamma t} \mustar\otimes\nu(V),
\end{align}
which translates to a bound in TV distance, since we assume $V\geq 1$. Conditions on $\pi$ ensuring \eqref{eq:V-norm_convergence} can be found for ZZP in \citet{Bierkensergodicity}, for BPS in \citet{BPSexp_erg,BPS_Durmus}, and for RHMC in \citet{bou2017randomized}. 
Observe that we can set the constant $C$ in \Cref{ass:geom_convergence} to $C=C'\mustar\otimes\nu(V)$. Clearly, $C$ is finite whenever $\mustar\otimes\nu(V)<\infty.$ Since $V$ is such that $\lim_{\lvert z\rvert \to\infty}V(z)=\plusinfty$, showing $C$ is finite requires suitable tail conditions on the initial distribution $\mustar\otimes\nu.$ Inspecting the results of the papers mentioned above, one can verify that $\mustar\otimes\nu(V)<\infty$ when $\pi$ is a multivariate standard Gaussian distribution as long as: (i) the tails of $\mustar$ are at least as light as those of $\pi$ for ZZP and BPS, (ii) $\mustar$ has finite second moments for RHMC.

\subsection{Proof of \Cref{thm:errorbounds_ctstime}}\label{sec:proof_errorbound}
First notice that 
\begin{equation}\label{eq:proof_theorem_1}
    \lVert \mustar - \cL(\overline X_{\Tf})\rVert_{\TV} \leq \lVert \mustar \otimes \nu - \cL(\overline X_{\Tf}, \overline V_{\Tf} )\rVert_{\TV}\eqsp,
\end{equation}
hence we focus on bounding the right hand side.
Under our assumptions, the forward PDMP $(X_t,V_t)_{t\in [0,\Tf]}$ admits a time reversal that is a PDMP $(\overleftarrow X_t,\overleftarrow V_t)_{t\in [0,\Tf]}$ with characteristics $(\overleftarrow \vf,\overleftarrow\lambda,\overleftarrow Q)$ satisfying the conditions in \Cref{propo:time_reversal_formula}. Therefore, it holds $\mustar \otimes \nu = \cL(\overleftarrow X_{\Tf},\overleftarrow V_{\Tf})$ and so \eqref{eq:proof_theorem_1} can be written as
\[\lVert \mustar - \cL(\overline X_{\Tf})\rVert_{\TV} \leq \lVert  \cL(\overleftarrow X_{\Tf},\overleftarrow V_{\Tf}) - \cL(\overline X_{\Tf}, \overline V_{\Tf} )\rVert_{\TV}\eqsp,\]
We introduce the intermediate PDMP $(\widetilde X_t, \widetilde V_t )_{t\in [0,\Tf]}$ with initial distribution $\cL(X_{\Tf},V_{\Tf})$ and characteristics $(\overleftarrow \vf, \overline \lambda, \overline Q)$. In particular, $(\widetilde X_t, \widetilde V_t )_{t\in [0,\Tf]}$ has the same characteristics as $(\overline X_t, \overline V_t )_{t\in [0,\Tf]}$, but different initial condition
By the triangle inequality for the TV distance we find
\begin{align}
    \lVert \mustar - \cL(\overline X_{\Tf})\rVert_{\TV} 
    % & = 
    % \lVert \cL(\overleftarrow X_{\Tf}, \overleftarrow V_{\Tf} ) - \cL(\overline X_{\Tf}, \overline V_{\Tf} )\rVert_{\TV}\\
    & \leq  \lVert \cL(\overleftarrow X_{\Tf}, \overleftarrow V_{\Tf} ) - \cL(\widetilde X_{\Tf}, \widetilde V_{\Tf} )\rVert_{\TV}  +  \lVert \cL(\widetilde X_T, \widetilde V_{\Tf} ) - \cL(\overline X_{\Tf}, \overline V_{\Tf} )\rVert_{\TV}.%\\
    % & \leq  \lVert  \cL(\overleftarrow X_T, \overleftarrow Y_T )- \cL(\widetilde X_T, \widetilde Y_T )\rVert_{\TV} +  \lVert p_T - p_\infty \rVert_{\TV} .
\end{align}
Applying the data processing inequality to the second term, we find the bound
\begin{equation}\label{eq:triangle_ineq_error}
    \lVert \mustar - \cL(\overline X_{\Tf})\rVert_{\TV} \leq  \lVert  \cL(\overleftarrow X_{\Tf}, \overleftarrow V_{\Tf} )- \cL(\widetilde X_{\Tf}, \widetilde V_{\Tf} )\rVert_{\TV} +  \lVert \cL(X_{\Tf},V_{\Tf}) - \pi\otimes\nu \rVert_{\TV} .
\end{equation}
The second term in \eqref{eq:triangle_ineq_error} can be bounded applying \Cref{ass:geom_convergence}, hence it is left to bound the first term. We introduce the Markov semigroups $P_t,\overleftarrow P_t,\overline P_t: \rset_+\times\rset^{2d}\times\mathcal B(\rset^{2d})\to[0,1]$ defined respectively as $P_t((x,v),\cdot) := \PP_{(x,v)}((X_t,V_t)\in \cdot)$, 
$\overleftarrow P_t((x,v),\cdot) := \PP_{(x,v)}((\overleftarrow X_t,\overleftarrow V_t)\in \cdot)$, and $\widetilde P_t((x,v),\cdot) := \PP_{(x,v)}((\widetilde X_t,\widetilde V_t)\in \cdot)$. Recall that for any probability distribution $\eta$ on $(\rset^{2d},\mathcal B(\rset^{2d}))$, $\eta  P_t(\cdot) = \int_{\rset^{2d}} \eta(\dd x,\dd v)  P_t((x,v),\cdot)$, and similarly for $\eta \widetilde P_t(\cdot)$ and $\eta \overleftarrow P_t(\cdot)$. Finally, to ease the notation we denote $Q_{\Tf} := \cL(X_{\Tf},V_{\Tf})= (\mustar\otimes\nu) P_{\Tf}$. Then we can rewrite the first term in \eqref{eq:triangle_ineq_error} as
\begin{align}
        \lVert  \cL(\overleftarrow X_{\Tf}, \overleftarrow V_{\Tf} ) - \cL(\widetilde X_{\Tf}, \widetilde V_{\Tf} )\rVert_{\TV} & =
    \lVert  Q_{\Tf} \overleftarrow P_{\Tf}  - Q_{\Tf} \widetilde P_{\Tf} \rVert_{\TV} \\
    & \leq \int Q_{\Tf}(\dd x,\dd v)  \lVert  \updelta_{(x,v)} \overleftarrow P_{\Tf}  - \updelta_{(x,v)} \widetilde P_{\Tf}  \rVert_{\TV} . \label{eq:proof_thm1_bound}
\end{align}
Therefore we wish to bound $\lVert  \updelta_{(x,v)} \overleftarrow P_{\Tf} - \updelta_{(x,v)} \widetilde P_{\Tf} \rVert_{\TV}.$ A bound for the TV distance between two PDMPs with same initial condition and deterministic motion, but different jump rate and kernel was obtained in \citet[Theorem 11]{pdmp_inv_meas} using the coupling inequality
\begin{align}
    \lVert  \updelta_{(x,v)} \overleftarrow P_{\Tf} - \updelta_{(x,v)} \widetilde P_{\Tf} \rVert_{\TV}\leq 2\PP_{(x,v)}\left((\overleftarrow X_{\Tf}, \overleftarrow V_{\Tf} )\neq (\widetilde{X}_{\Tf}, \widetilde V_{\Tf} )\right), 
\end{align}
and then bounding the right hand side.
Following the proof of \citet[Theorem 11]{pdmp_inv_meas} we have that a synchronous coupling of the two PDMPs satisfies
\begin{equation}\notag
    \PP_{(x,v)}\left((\overleftarrow X_{\Tf}, \overleftarrow V_{\Tf} )\neq (\widetilde{X}_{\Tf}, \widetilde V_{\Tf} )\right) \leq 2\E_{(x,v)}\left[1- \exp\left(-\int_0^{\Tf} g_t(\overleftarrow X_t, \overleftarrow V_t )\dd t \right) \right],
\end{equation}
where 
\begin{equation}\notag
\begin{aligned}
    g_t(x,v) &= \frac12\left(\overleftarrow\lambda(t,(x, v )) \wedge \overline \lambda(t,(x, v ))\right)  \left\lVert \overleftarrow Q(t,(x, v),\cdot) - \overline Q(t,(x,v ),\cdot)\right\rVert_{\TV}  \\
    & \quad + \left\rvert \overleftarrow\lambda(t,(x, v)) - \overline \lambda(t,(x,v)) \right\rvert .
\end{aligned}
\end{equation}
Since $\cL(\overleftarrow X_t, \overleftarrow V_t ) = \cL(X_{\Tf-t},V_{\Tf-t})$ for $t\in[0,\Tf]$, we can rewrite this bound as
\begin{align}
    \PP_{(x,v)}\left((\overleftarrow X_{\Tf}, \overleftarrow V_{\Tf} )\neq (\widetilde{X}_{\Tf}, \widetilde V_{\Tf} )\right)  & \leq 2 \E_{(x,v)} \left[ 1-\exp\left(-\int_0^{\Tf} g_{\Tf-t}(X_t,  V_t )\dd t \right)\right].
\end{align}
% We can rewrite this bound using that $1-e^{-z}\leq z$ for $z\geq 0$, which gives
% \begin{align}
%    \PP_{(x,v)}\left((\overleftarrow X_{\Tf}, \overleftarrow V_{\Tf} )\neq (\widetilde{X}_{\Tf}, \widetilde V_{\Tf} )\right)  & \leq 2 \E_{(x,v)} \left[\int_0^{\Tf} g_t(\overleftarrow X_t, \overleftarrow V_t )\dd t \right] .
% \end{align}
Plugging this bound in \eqref{eq:proof_thm1_bound} we obtain
\begin{align}
    \lVert  \cL(\overleftarrow X_{\Tf}, \overleftarrow V_{\Tf} ) - \cL(\widetilde X_{\Tf}, \widetilde V_{\Tf} )\rVert_{\TV} & \leq 2 \E\left[ 1-\exp\left(-\int_0^{\Tf} g_{\Tf-t}(X_t,  V_t )\dd t \right)\right].
\end{align}
% which is equivalent to 
% \begin{align}
%     \lVert  \cL(\overleftarrow X_{\Tf}, \overleftarrow V_{\Tf} ) - \cL(\widetilde X_{\Tf}, \widetilde V_{\Tf} )\rVert_{\TV} & \leq 2 \E \left[\int_0^{\Tf} g_{\Tf-t}(X_t,  V_t )\dd t \right].
% \end{align}
% since $\cL(\overleftarrow X_t, \overleftarrow V_t ) = \cL(X_{\Tf-t},V_{\Tf-t})$ for $t\in[0,\Tf]$.
This concludes the proof.

\subsection{Application to the ZZP}\label{sec:bound_theorem_zzp}
Here we give the details on the bound \eqref{eq:errorbound_zzp}, which considers the case of ZZP. 
First, we upper bound the function $g_t$ defined in \eqref{eq:function_g}. We focus on the first term in \eqref{eq:function_g}, that is 
\begin{equation}
    g^1_t(x,v) = \frac{(\overleftarrow\lambda^\rmZ\wedge \bar \lambda^\rmZ\eqsp)(t,(x, v))}{2}\lVert \overleftarrow Q^\rmZ(t,(x, v),\cdot) - \bar Q^\rmZ(t,(x,v ),\cdot)\rVert_{\TV}.
\end{equation}
We find
\begin{align}
    & \lVert \overleftarrow Q^\rmZ(t,(x, v),\cdot) - \bar Q^\rmZ(t,(x,v ),\cdot)\rVert_{\TV} = \sup_A \left\lvert \sum_{i=1}^d \1_{(x,\scrR_i^\rmZ v)\in A} \left( \frac{\overleftarrow \lambda_i^\rmZ(t,(x,v))}{\overleftarrow \lambda^\rmZ(t,(x,v))}- \frac{\bar\lambda_i^\rmZ(t,(x,v))}{\bar \lambda^\rmZ(t,(x,v))} \right)\right\rvert \\
    & \leq \sup_A \left\lvert \sum_{i=1}^d \1_{(x,\scrR_i^\rmZ v)\in A} \left( \frac{\overleftarrow \lambda_i^\rmZ(t,(x,v))-\bar \lambda_i^\rmZ(t,(x,v))}{\overleftarrow \lambda^\rmZ(t,(x,v))} +  \frac{\bar\lambda_i^\rmZ(t,(x,v))}{\overleftarrow \lambda^\rmZ(t,(x,v))} -\frac{\bar\lambda_i^\rmZ(t,(x,v))}{\bar \lambda^\rmZ(t,(x,v))}   \right)\right\rvert\\
    & \leq \left(\sum_{i=1}^d \frac{\lvert \overleftarrow \lambda_i^\rmZ(t,(x,v))-\bar \lambda_i^\rmZ(t,(x,v))\rvert}{\overleftarrow \lambda^\rmZ(t,(x,v))}\right) +\frac{\lvert \bar \lambda^\rmZ(t,(x,v)) -\overleftarrow \lambda^\rmZ(t,(x,v))\rvert }{\overleftarrow \lambda^\rmZ(t,(x,v)) }
\end{align}
In the last inequality we used that $\bar \lambda^\rmZ$ is non-negative. Therefore we find 
\begin{align}
    g^1_t(x,v) &\leq \frac12 \left(\sum_{i=1}^d \lvert \overleftarrow \lambda_i^\rmZ(t,(x,v))-\bar \lambda_i^\rmZ(t,(x,v))\rvert \right) + \frac12 \lvert \bar \lambda^\rmZ(t,(x,v)) -\overleftarrow \lambda^\rmZ(t,(x,v))\rvert \\
    & \leq \sum_{i=1}^d \lvert \overleftarrow \lambda_i^\rmZ(t,(x,v))-\bar \lambda_i^\rmZ(t,(x,v))\rvert.
\end{align}
Noticing that 
\[\lvert \overleftarrow \lambda_i^\rmZ(t,(x,v))-\bar \lambda_i^\rmZ(t,(x,v))\rvert = \lvert r_i^\rmZ(x,v,t)-s^\theta_i(x,v,t)\rvert \eqsp \lambda_i^\rmZ((x,\scrR^{\rmZ}_i v).\]
we find 
\[g_t(x,v) \leq 2 \sum_{i=1}^d \lvert r_i^\rmZ(x,v,t)-s^\theta_i(x,v,t)\rvert \eqsp \lambda_i^\rmZ((x,\scrR^{\rmZ}_i v)\]
Finally, we use the inequality $1-e^{-z}\leq z$, which holds for $z\geq 0$, to conclude that
\begin{align}
    & \E\left[ 1-\exp\left(-\int_0^{\Tf} g_{\Tf-t}(X_t,  V_t )\dd t \right)\right] \\
    & \quad \leq 2 \sum_{i=1}^d \E\left[ \int_0^{\Tf}\lvert r_i^\rmZ(X_t,V_t,\Tf-t)-s^\theta_i(X_t,V_t,\Tf-t)\rvert \eqsp \lambda_i^\rmZ(X_t,\scrR^{\rmZ}_i V_t)  \dd t \right].
\end{align}

\section{Experimental details} \label{app:experimental_setup}

We run our experiments on 50 Cascade Lake Intel Xeon 5218 16 cores, 2.4GHz. Each experiment is ran on a single CPU and takes between 1 and 5 hours to complete, depending on the dataset and the sampler at hand. 

For each forward PDMP, we take a time horizon $\Tf$ equal to $5$, and set the refreshment rate $\lambda_r$ to 1. For training, we choose the uniform distribution $\text{Uniform}([0, \Tf])$ as the time distribution $\omega$. For the simulation of backward PDMPs with splitting schemes, we use a quadratic schedule for the time steps, that is $(\delta_n)_{n\in\iint{1}{N}}$ given by $\delta_n= \Tf \times ((\nicefrac{n}{N})^2 - (\nicefrac{n-1}{N})^2)$.

For i-DDPM, we follow the design choices introduced in \cite{nichol2021improved} and in particular we use the variance preserving (VP) process, the cosine noise schedule, and linear time steps.

\subsection{Continuation of \Cref{sec:numerical_simulations}}\label{sec:continua_numerics}
\paragraph{2D datasets}
In our experiments we consider the five datasets displayed in \Cref{fig:generation_all_datasets}. The Gaussian grid consists of a mixture of nine Gaussian distribution with imbalanced mixture weights $\{.01,.02, .02, .05, .05, .1, .1, .15, .2, .3\}$. We load $100000$ training samples for each dataset, and use $10000$ test samples to compute the evaluation metrics. We use a batch size of $4096$ and train our model for $25000$ steps.

% \begin{figure}
% \begin{minipage}[b]{0.16\linewidth}
%     \includegraphics[width=\linewidth]{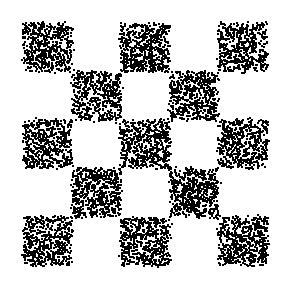}
%     \caption*{Checkerboard}
% \end{minipage}
% \hfill
% \begin{minipage}[b]{0.16\linewidth}
%     \includegraphics[width=\linewidth]{img/true_fractal_tree.png}
%     \caption*{Fractal tree}
% \end{minipage}
% \hfill
% \begin{minipage}[b]{0.16\linewidth}
%     \includegraphics[width=\linewidth]{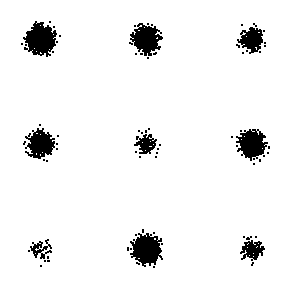}
%     \caption*{Gaussian grid}
% \end{minipage}
% \hfill
% \begin{minipage}[b]{0.16\linewidth}
%     \includegraphics[width=\linewidth]{img/true_olympic_rings.png}
%     \caption*{Olympic rings}
% \end{minipage}
% \hfill
% \begin{minipage}[b]{0.16\linewidth}
%     \includegraphics[width=\linewidth]{img/true_rose.png}
%     \caption*{Rose}
% \end{minipage}

% \caption{Datasets used in the numerical experiments.}
% \label{app:fig:datasets}
% \end{figure}

\paragraph{Detailed setup}
For ZZP and i-DDPM we use a neural network consisting of eight time-conditioned multi-layer perceptron (MLP) blocks with skip connections, each of which consisting of two fully connected layers of width $256$. The time variable $t$ passes through two fully connected layers of size $1\times 32$ and $32\times 32$, and is fed to each time conditioned block, where it passes through an additional $32\times 64$ fully connected layer before being added element-wise to the middle layer. The model size is $6.5$ million parameters. For ZZP, we apply the $\texttt{softplus}$ activation function $x \mapsto \nicefrac{1}{\beta}\log(1 +\exp(\beta x))$ to the output of the network, with $\beta = 1$, to constrain it to be positive and stabilise behaviour for outputs close to $1$.

In the case of RHMC and BPS, we use neural spline flows \citep{durkan2019neural} to model the conditional densities of the forward processes, as it shows good performance among available architectures. We leverage the implementation from the \texttt{zuko} package  \citep{rozet2022zuko}.
We set the number of transforms to $8$, the hidden depth of the network to $8$ and the hidden width to $256$. To condition on $x,t$, we feed them to three fully connected layers of size $d\times 8$, $8\times 8$ and $8\times 8$, where $d$ is either the dimension of $X_t$, or $d=1$ in the case of the time variable. The resulting vectors are then concatenated and fed to the conditioning mechanism of $\texttt{zuko}$. The resulting  model has $3.8$ million parameters.

We take advantage of the approaches described in \Cref{sec:approx_characteristics} to learn the characteristics of the backward processes.

All experiments are conducted using PyTorch \citep{paszke2019pytorch}. The optimiser is Adam \citep{kingma2017adam} with learning rate 5e-4 for all neural networks. 
% The batch size is $10^3$ and the models are trained for $10^4$ steps.

\paragraph{Additional results} 

In \Cref{tab:refresh_rate} we show the accuracy in terms of the refreshment rate, while in \Cref{tab:timehorizon} we show different choices of the time horizon. In both cases, we consider the Gaussian mixture data and we use the 2-Wasserstein metric to characterise the quality of the generated data. \Cref{fig:generation_all_datasets} shows the generated data by the best model for each process.

\begin{table}[]
\centering
\begin{tabular}{lllllllll}
\toprule
refresh rate &   0.0  &   0.1  &   0.5  &   1.0  &   2.0  &   5.0  &   10.0 \\
process   &        &        &        &        &        &        &        \\
\midrule
BPS    &  0.286 &  0.069 &  0.048 &  0.052 &  0.045 &  0.048 &  0.072 \\
RHMC    &  0.324 &  0.040 &  0.041 &  0.033 &  0.040 &  0.047 &  0.072 \\
ZZP &  0.040 &  0.045 &  0.040 &  0.036 &  0.038 &  0.035 &  0.057 \\
\bottomrule
\end{tabular}
\caption{Mean of 2-Wasserstein $W_2 \downarrow$, on Gaussian grid dataset, averaged over $10$ runs.}
\label{tab:refresh_rate}
\end{table}

%\begin{table}[htbp]
%\centering
%\begin{tabular}{lllllllll}
%\toprule
%refresh\_rate &   0.0  &   0.1  &   0.5  &   1.0  &   2.0  &   5.0  &   10.0 \\
%exp\_name    &        &        &                &        &        &        &        \\
%\midrule
%pdmp\_BPS    &  0.487 &  0.812 &  0.867  &  0.875 &  0.897 &  0.898 &  0.873 \\
%pdmp\_HMC    &  0.248 &  0.861 &  0.895  &  0.899 &  0.905 &  0.916 &  0.891 \\
%pdmp\_ZigZag &  0.894 &  0.913 &  0.925  &  0.893 &  0.898 &  0.904 &  0.859 \\
%\bottomrule
%\end{tabular}
%\end{table}

\begin{table}[]
\centering
\begin{tabular}{lllll}
\toprule
time horizon &     2  &     5  &     10 &     15 \\
process   &        &        &        &        \\
\midrule
BPS    &  0.031 &  0.040 &  0.040 &  0.041 \\
HMC    &  0.025 &  0.036 &  0.036 &  0.033 \\
ZZP &  0.031 &  0.033 &  0.030 &  0.046 \\
\bottomrule
\end{tabular}
\caption{Mean 2-Wasserstein $W_2 \downarrow$ for different time horizon, averaged over $10$ runs.}
\label{tab:timehorizon}
\end{table}

% \paragraph{Loss functions}
% \begin{table}[htbp]
% \centering
% \begin{tabular}{lll}
% \toprule
% exp\_name &       pdmp\_BPS &       pdmp\_HMC \\
% pdmp\_add\_losses                 &                &                \\
% \midrule
% ['hyvarinen', 'kl']             &  0.095 ± 0.012 &  0.103 ± 0.027 \\
% ['hyvarinen', 'logistic', 'ml'] &  0.049 ± 0.012 &      nan ± nan \\
% ['hyvarinen', 'logistic']       &  0.064 ± 0.012 &  0.056 ± 0.006 \\
% ['hyvarinen', 'ml']             &  0.045 ± 0.012 &  0.043 ± 0.014 \\
% ['hyvarinen', 'square']         &  0.283 ± 0.027 &  0.299 ± 0.055 \\
% ['hyvarinen']                   &  0.185 ± 0.051 &      nan ± nan \\
% ['kl', 'ml']                    &  0.080 ± 0.032 &  0.064 ± 0.016 \\
% ['logistic', 'ml']              &  0.052 ± 0.003 &  0.040 ± 0.007 \\
% ['logistic']                    &  0.071 ± 0.011 &  0.043 ± 0.002 \\
% ['ml', 'square']                &  0.260 ± 0.036 &  0.266 ± 0.010 \\
% ['ml']                          &  0.056 ± 0.023 &  0.038 ± 0.013 \\
% \bottomrule
% \end{tabular}
% \caption{2-Wasserstein $W_2 \downarrow$ for different losses}
% \end{table}

% \paragraph{Takeaway} For our PDMPs, we thus set the refresh rates to $1$, the time horizon to $5$, \dario{in addition, show graph with the 2-Wasserstein between $X_t$ and $\mathcal(0, 1)$ for all samplers.}

% \paragraph{Additional generation}
% Include additional 2d data generation for every datasets: compare all samplers, and have the images for varying reverse steps for BPS, ZigZag, HMC and DDPM.

\begin{figure}
\begin{minipage}[b]{0.18\linewidth}
   \includegraphics[width=\linewidth]{img/true_checkerboard.png}
   \caption*{Checkerboard}
\end{minipage}
\hfill
\begin{minipage}[b]{0.18\linewidth}
   \includegraphics[width=\linewidth]{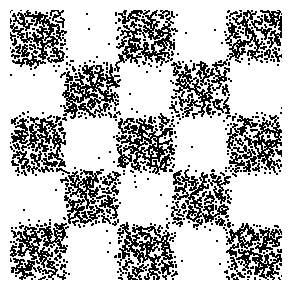}
   \caption*{ZZP}
\end{minipage}
\hfill
\begin{minipage}[b]{0.18\linewidth}
   \includegraphics[width=\linewidth]{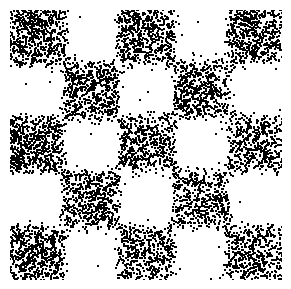}
   \caption*{RHMC}
\end{minipage}
\hfill
\begin{minipage}[b]{0.18\linewidth}
   \includegraphics[width=\linewidth]{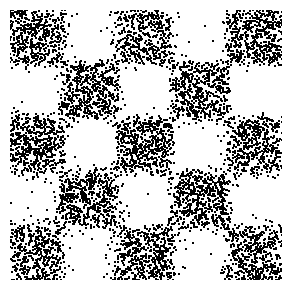}
   \caption*{BPS}
\end{minipage}
\hfill
\begin{minipage}[b]{0.18\linewidth}
   \includegraphics[width=\linewidth]{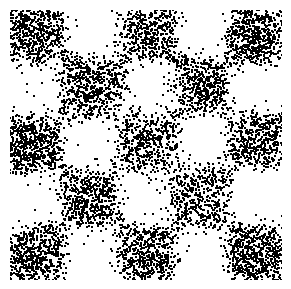}
   \caption*{i-DDPM}
\end{minipage}

\hfill 

\begin{minipage}[b]{0.18\linewidth}
   \includegraphics[width=\linewidth]{img/true_fractal_tree.png}
   \caption*{Fractal tree}
\end{minipage}
\hfill
\begin{minipage}[b]{0.18\linewidth}
   \includegraphics[width=\linewidth]{img/pdmp_ZigZag_fractal_tree.png}
   \caption*{ZZP}
\end{minipage}
\hfill
\begin{minipage}[b]{0.18\linewidth}
   \includegraphics[width=\linewidth]{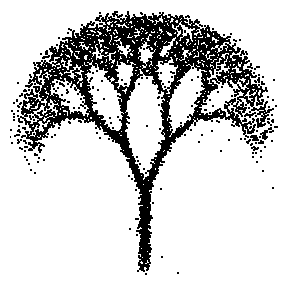}
   \caption*{RHMC}
\end{minipage}
\hfill
\begin{minipage}[b]{0.18\linewidth}
   \includegraphics[width=\linewidth]{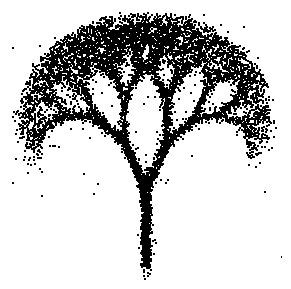}
   \caption*{BPS}
\end{minipage}
\hfill
\begin{minipage}[b]{0.18\linewidth}
   \includegraphics[width=\linewidth]{img/diffusion_fractal_tree.png}
   \caption*{i-DDPM}
\end{minipage}

\hfill 

\begin{minipage}[b]{0.18\linewidth}
   \includegraphics[width=\linewidth]{img/true_olympic_rings.png}
   \caption*{Olympic rings}
\end{minipage}
\hfill
\begin{minipage}[b]{0.18\linewidth}
   \includegraphics[width=\linewidth]{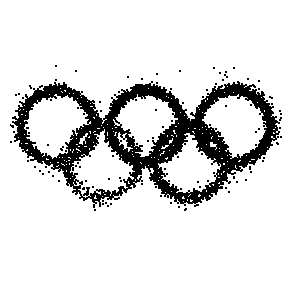}
   \caption*{ZZP}
\end{minipage}
\hfill
\begin{minipage}[b]{0.18\linewidth}
   \includegraphics[width=\linewidth]{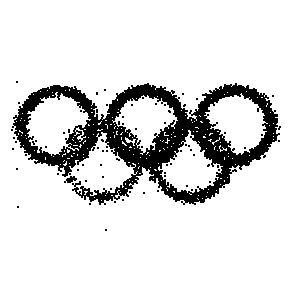}
   \caption*{RHMC}
\end{minipage}
\hfill
\begin{minipage}[b]{0.18\linewidth}
   \includegraphics[width=\linewidth]{img/pdmp_BPS_olympic_rings.png}
   \caption*{BPS}
\end{minipage}
\hfill
\begin{minipage}[b]{0.18\linewidth}
   \includegraphics[width=\linewidth]{img/diffusion_olympic_rings.png}
   \caption*{i-DDPM}
\end{minipage}

\hfill 

\begin{minipage}[b]{0.18\linewidth}
   \includegraphics[width=\linewidth]{img/true_rose.png}
   \caption*{Rose}
\end{minipage}
\hfill
\begin{minipage}[b]{0.18\linewidth}
   \includegraphics[width=\linewidth]{img/pdmp_ZigZag_rose.png}
   \caption*{ZZP}
\end{minipage}
\hfill
\begin{minipage}[b]{0.18\linewidth}
   \includegraphics[width=\linewidth]{img/pdmp_HMC_rose.png}
   \caption*{RHMC}
\end{minipage}
\hfill
\begin{minipage}[b]{0.18\linewidth}
   \includegraphics[width=\linewidth]{img/pdmp_BPS_rose.png}
   \caption*{BPS}
\end{minipage}
\hfill
\begin{minipage}[b]{0.18\linewidth}
   \includegraphics[width=\linewidth]{img/diffusion_rose.png}
   \caption*{i-DDPM}
\end{minipage}

\hfill

\begin{minipage}[b]{0.18\linewidth}
   \includegraphics[width=\linewidth]{img/true_gmm_grid.png}
   \caption*{Gaussian mixture}
\end{minipage}
\hfill
\begin{minipage}[b]{0.18\linewidth}
   \includegraphics[width=\linewidth]{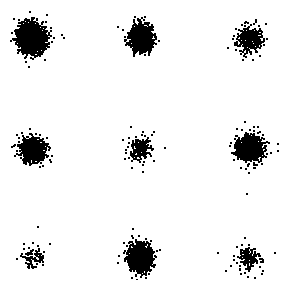}
   \caption*{ZZP}
\end{minipage}
\hfill
\begin{minipage}[b]{0.18\linewidth}
   \includegraphics[width=\linewidth]{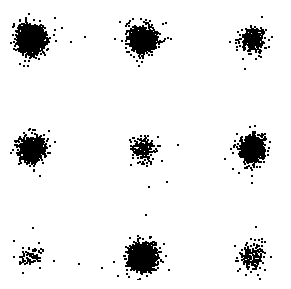}
   \caption*{RHMC}
\end{minipage}
\hfill
\begin{minipage}[b]{0.18\linewidth}
   \includegraphics[width=\linewidth]{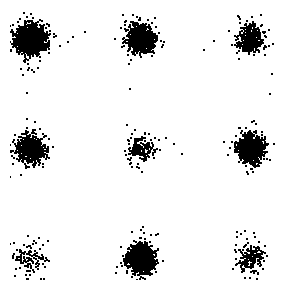}
   \caption*{BPS}
\end{minipage}
\hfill
\begin{minipage}[b]{0.18\linewidth}
   \includegraphics[width=\linewidth]{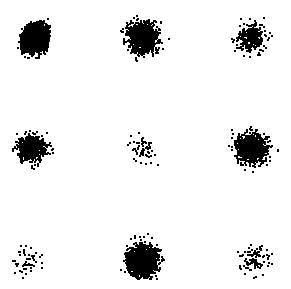}
   \caption*{i-DDPM}
\end{minipage}

\hfill

\caption{Generation for the various datasets.}
\label{fig:generation_all_datasets}
\end{figure}
%
%
%
%
% \begin{figure}
% \begin{minipage}[b]{0.16\linewidth}
%    \includegraphics[width=\linewidth]{img/true_fractal_tree.png}
%    \caption*{Fractal tree}
% \end{minipage}
% \hfill
% \begin{minipage}[b]{0.16\linewidth}
%    \includegraphics[width=\linewidth]{img/pdmp_ZigZag_fractal_tree.png}
%    \caption*{ZigZag}
% \end{minipage}
% \hfill
% \begin{minipage}[b]{0.16\linewidth}
%    \includegraphics[width=\linewidth]{img/pdmp_HMC_fractal_tree.png}
%    \caption*{HMC}
% \end{minipage}
% \hfill
% \begin{minipage}[b]{0.16\linewidth}
%    \includegraphics[width=\linewidth]{img/diffusion_fractal_tree.png}
%    \caption*{DDPM}
% \end{minipage}
% \hfill
% %\begin{minipage}[b]{0.16\linewidth}
% %   \includegraphics[width=\linewidth]{img/frac.png}
% %   \caption*{BPS}
% %\end{minipage}
% \caption{Generation for the fractal tree dataset}
% \end{figure}

\subsection{MNIST digits}

\begin{figure}
\begin{minipage}[b]{0.10\linewidth}
   \includegraphics[width=\linewidth]{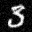}
\end{minipage}
\begin{minipage}[b]{0.10\linewidth}
   \includegraphics[width=\linewidth]{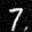}
\end{minipage}
\begin{minipage}[b]{0.10\linewidth}
   \includegraphics[width=\linewidth]{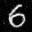}
\end{minipage}
\begin{minipage}[b]{0.10\linewidth}
   \includegraphics[width=\linewidth]{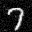}
\end{minipage}
\begin{minipage}[b]{0.10\linewidth}
   \includegraphics[width=\linewidth]{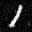}
\end{minipage}
\begin{minipage}[b]{0.10\linewidth}
   \includegraphics[width=\linewidth]{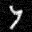}
\end{minipage}
\begin{minipage}[b]{0.10\linewidth}
   \includegraphics[width=\linewidth]{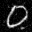}
\end{minipage}
\begin{minipage}[b]{0.10\linewidth}
   \includegraphics[width=\linewidth]{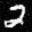}
\end{minipage}
\begin{minipage}[b]{0.10\linewidth}
   \includegraphics[width=\linewidth]{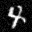}
\end{minipage}

\begin{minipage}[b]{0.10\linewidth}
   \includegraphics[width=\linewidth]{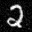}
\end{minipage}
\begin{minipage}[b]{0.10\linewidth}
   \includegraphics[width=\linewidth]{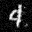}
\end{minipage}
\begin{minipage}[b]{0.10\linewidth}
   \includegraphics[width=\linewidth]{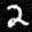}
\end{minipage}
\begin{minipage}[b]{0.10\linewidth}
   \includegraphics[width=\linewidth]{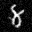}
\end{minipage}
\begin{minipage}[b]{0.10\linewidth}
   \includegraphics[width=\linewidth]{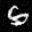}
\end{minipage}
\begin{minipage}[b]{0.10\linewidth}
   \includegraphics[width=\linewidth]{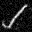}
\end{minipage}
\begin{minipage}[b]{0.10\linewidth}
   \includegraphics[width=\linewidth]{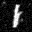}
\end{minipage}
\begin{minipage}[b]{0.10\linewidth}
   \includegraphics[width=\linewidth]{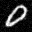}
\end{minipage}
\begin{minipage}[b]{0.10\linewidth}
   \includegraphics[width=\linewidth]{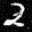}
\end{minipage}

\begin{minipage}[b]{0.10\linewidth}
   \includegraphics[width=\linewidth]{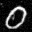}
\end{minipage}
\begin{minipage}[b]{0.10\linewidth}
   \includegraphics[width=\linewidth]{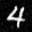}
\end{minipage}
\begin{minipage}[b]{0.10\linewidth}
   \includegraphics[width=\linewidth]{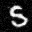}
\end{minipage}
\begin{minipage}[b]{0.10\linewidth}
   \includegraphics[width=\linewidth]{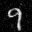}
\end{minipage}
\begin{minipage}[b]{0.10\linewidth}
   \includegraphics[width=\linewidth]{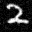}
\end{minipage}
\begin{minipage}[b]{0.10\linewidth}
   \includegraphics[width=\linewidth]{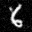}
\end{minipage}
\begin{minipage}[b]{0.10\linewidth}
   \includegraphics[width=\linewidth]{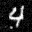}
\end{minipage}
\begin{minipage}[b]{0.10\linewidth}
   \includegraphics[width=\linewidth]{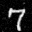}
\end{minipage}
\begin{minipage}[b]{0.10\linewidth}
   \includegraphics[width=\linewidth]{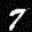}
\end{minipage}

\begin{minipage}[b]{0.10\linewidth}
   \includegraphics[width=\linewidth]{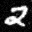}
\end{minipage}
\begin{minipage}[b]{0.10\linewidth}
   \includegraphics[width=\linewidth]{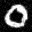}
\end{minipage}
\begin{minipage}[b]{0.10\linewidth}
   \includegraphics[width=\linewidth]{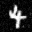}
\end{minipage}
\begin{minipage}[b]{0.10\linewidth}
   \includegraphics[width=\linewidth]{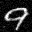}
\end{minipage}
\begin{minipage}[b]{0.10\linewidth}
   \includegraphics[width=\linewidth]{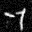}
\end{minipage}
\begin{minipage}[b]{0.10\linewidth}
   \includegraphics[width=\linewidth]{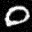}
\end{minipage}
\begin{minipage}[b]{0.10\linewidth}
   \includegraphics[width=\linewidth]{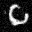}
\end{minipage}
\begin{minipage}[b]{0.10\linewidth}
   \includegraphics[width=\linewidth]{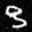}
\end{minipage}
\begin{minipage}[b]{0.10\linewidth}
   \includegraphics[width=\linewidth]{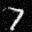}
\end{minipage}

% \hfill

\caption{Generation for the ZZP trained on MNIST.}
\label{fig:generation_MNIST}
\end{figure}

We consider the task of generating handwritten digits training the ZZP on the MNIST dataset. We use the simplified loss function given in Equation \eqref{eq:implicit_ratio_matching_zzs_finitesample_simple} in \Cref{app:training_zzp} and use the simplified model and loss.
In \Cref{fig:generation_MNIST} we show promising results obtained with the design choices described previously in \Cref{sec:continua_numerics}, apart from the differences that follow. 
The optimiser is Adam \citep{kingma2017adam} with learning rate 2e-4. We use a U-Net following the implementation of \citet{nichol2021improved}, where the hidden layers are set to $[128, 256, 256, 256]$, where we fix the number of residual blocks to $2$ at each level, and add self-attention block at resolution $16\times16$, with $4$ heads. We duplicate the channel of the penultimate layer, and make each copy go through separate MLPs to obtain the two vectors $(s^{\theta}_{+}(\cdot, \cdot), s^{\theta}_{-}(\cdot, \cdot)) \in \mR^{2d}_+$ (as in \eqref{eq:simplified_model}).
We use an exponential moving average with a rate of $0.99$. At every layer, we use the $\texttt{silu}$ activation function, while we apply the $\texttt{softplus}$ to the output of the network, with $\beta = 0.2$.
We train the model for 40000 steps with batch size $128$.

\end{document}